\newtheorem{theorem}{Theorem}
\newtheorem{lemma}{Lemma}
\newtheorem{corollary}{Corollary}
\newtheorem{remark}{Remark}
\newtheorem{definition}{Definition}
\newtheorem{proposition}{Proposition}
\newtheorem{assumption}{Assumption}
\def\x{{\mathbf x}}
\def\y{{\mathbf y}}
\def\z{{\mathbf z}}
\def\f{{\mathbf{f}}}
\def\u{{\mathbf u}}
\def\m{{\mathbf m}}
\def\vx{{\mathbf X}}
\def\vy{{\mathbf Y}}
\def\vz{{\mathbf Z}}
\def\vf{{\mathbf F}}
\def\vu{{\mathbf U}}
\def\tu{{\tilde{\mathbf u}}}
\def\tf{{\tilde{\mathbf f}}}
\def\tU{{\tilde{\mathbf U}}}
\def\tF{{\tilde{\mathbf F}}}
\def\J{{\mathbf J}}
\def\G{{\mathbb G}}
\def\E{{\mathbb{E}}}
\def\cX{{\cal X}}
\def\cD{{\cal D}}
\def\cN{{\cal N}}
\def\bK{{\boldsymbol{K}}}
\def\bzeta{{\bm{\zeta}}}
\def\btheta{{{\bm{\theta}}}}
\def\bphi{{\bm{\phi}}}
\definecolor{orange}{RGB}{200,0,100}
\begin{document}
%
\title{Towards Flexibility and Interpretability of Gaussian Process State-Space Model}

\author{
Zhidi Lin,~\IEEEmembership{Student Member,~IEEE}, Feng  Yin,~\IEEEmembership{Senior Member,~IEEE}, and  Juan Maro\~{n}as%
\IEEEcompsocitemizethanks{\IEEEcompsocthanksitem Z. Lin is with the School of Science and Engineering, and the Future Network of Intelligence Institute (FNii), The Chinese University of Hong Kong, Shenzhen 518172, China (E-mail: zhidilin@link.cuhk.edu.cn).
\IEEEcompsocthanksitem F. Yin is with the School of Science and Engineering, The Chinese University of Hong Kong, Shenzhen 518172, China. F. Yin is the corresponding author (E-mail: yinfeng@cuhk.edu.cn).
\IEEEcompsocthanksitem J. Maro\~{n}as is with the Machine Learning Group, Universidad Aut\'{o}noma de Madrid, Madrid 28049, Spain, and also with Cognizant AI (E-mail: juan.maronnas@uam.es).
}
}

\IEEEtitleabstractindextext{%
\begin{abstract}
    The Gaussian process state-space model (GPSSM) has garnered considerable attention over the past decade. However, the standard GP with a preliminary kernel, such as the squared exponential kernel or Mat\'{e}rn kernel, that is commonly used in GPSSM studies, limits the model's representation power and substantially restricts its applicability to complex scenarios. To address this issue, we propose a new class of probabilistic state-space models called TGPSSMs, which leverage a parametric normalizing flow to enrich the GP priors in the standard GPSSM, enabling greater flexibility and expressivity. Additionally, we present a scalable variational inference algorithm that offers a flexible and optimal structure for the variational distribution of latent states. The proposed algorithm is interpretable and computationally efficient due to the sparse GP representation and the bijective nature of normalizing flow. Moreover, we incorporate a constrained optimization framework into the algorithm to enhance the state-space representation capabilities and optimize the hyperparameters, leading to superior learning and inference performance. Experimental results on synthetic and real datasets corroborate that the proposed TGPSSM outperforms several state-of-the-art methods. The accompanying source code is available at \url{https://github.com/zhidilin/TGPSSM}.
\end{abstract}

\begin{IEEEkeywords}
	Gaussian process, state-space model, normalizing flow, variational learning and inference.
\end{IEEEkeywords}}

\maketitle

\IEEEdisplaynontitleabstractindextext

%
\IEEEpeerreviewmaketitle

\IEEEraisesectionheading{ \section{Introduction} \label{sec:intro} }
%
%
\IEEEPARstart{B}{ecause} of the superiority in modeling dynamical systems, state-space models (SSMs) have been successfully applied in various fields of engineering, statistics, computer science, and economics \cite{sarkka2013bayesian}.
A generic SSM describes the underlying system dynamics and the dependence between the latent states $\x_t \in \mathbb{R}^{d_x}$ and the observations $\y_t \in \mathbb{R}^{d_y}$. Mathematically, it can be written as
\begin{subequations}
	\label{eq:SSM}
	\begin{align}
		& \x_{t+1} = h(\x_t) + \mathbf{v}_t, \\
		& \y_{t} = g(\x_t) + \mathbf{e}_t,	
	\end{align}
\end{subequations}
where $h(\cdot): \mathbb{R}^{d_x} \mapsto \mathbb{R}^{d_x}$ and $g(\cdot): \mathbb{R}^{d_x} \mapsto \mathbb{R}^{d_y}$ are the transition function and emission function, respectively, while $\mathbf{v}_t$ and $\mathbf{e}_t$ are additive noise terms.

The two major tasks of SSMs are learning and inference. The SSM learning task, also known as system identification, is about finding optimal model parameters so that the underlying dynamical system can be accurately represented, while the SSM inference task refers to optimally estimating the latent states of interest using the observed sequential data \cite{sarkka2013bayesian}.
A plethora of learning and inference methods for SSMs have been developed in recent decades.
For example, when the system dynamics are exactly known, the Kalman filter (KF), extended Kalman filter (EKF), unscented Kalman filter (UKF), and particle filter (PF) can be used to estimate the latent states \cite{sarkka2013bayesian}. However, for some complex and harsh scenarios, such as model-based reinforcement learning \cite{yan2020gaussian} and disease epidemic propagation \cite{alaa2019attentive}, the underlying system dynamics are difficult to determine \textit{a priori} \cite{revach2022kalmannet}. Thus, the dynamics need to be learned from the observed noisy measurements, leading to the emergence of data-driven state-space models. The representative classes of data-driven state-space models include deep state-space models (DSSMs) \cite{karl2017deep, krishnan2017structured} 
and Gaussian process state-space models (GPSSMs) \cite{frigola2015bayesian},
which employ deep neural networks \cite{theodoridis2020machine} or Gaussian processes (GPs) \cite{williams2006gaussian} as the core data-driven module to represent the underlying complex system dynamics.

Because of the powerful model representation capability of deep neural networks, DSSMs are particularly suitable for modeling complex and high-dimensional dynamical systems and have shown good results in some real-world applications, such as disease progression analysis \cite{alaa2019attentive}, medication effect analysis \cite{krishnan2017structured}, and nonlinear system identification \cite{gedon2020deep}. However, several vital limitations exist in DSSMs:
1) The transition and/or emission functions in DSSMs are modeled by deep neural networks that typically require a substantial amount of data to tune a large number of model parameters. 
2) DSSMs built on deep neural networks are black-box models, which makes them risky to use in safety-critical applications \cite{kullberg2021online,yin2020fedloc}. 3) 
It is difficult, if not impossible, to give an explicit uncertainty quantification for the transition/emission functions in DSSMs and their predictions \cite{doerr2018probabilistic}.

In contrast, the class of GPSSMs is able to naturally address the abovementioned limitations. First, GPSSMs are popular variants of probabilistic SSMs that are well interpretable and can naturally account for model uncertainty in a fully Bayesian way \cite{frigola2015bayesian}. Second, the nonparametric GP model adopted in a GPSSM, known as a data-efficient model, is directly parameterized by the data and can automatically adapt the model complexity to the observations \cite{theodoridis2020machine}; thus, it is suitable for learning with small datasets \cite{williams2006gaussian}.

Due to the appealing properties of GPSSMs, much promising progress has been made over the past decade \cite{zhao2019cramer}. Early studies on GPSSMs proposed various learning and inference methods \cite{ko2011learning,turner2010state} for robotics control \cite{deisenroth2013gaussian,deisenroth2011robust}, human motion modeling \cite{wang2007gaussian}, and so on. However, these methods either estimate the latent states by performing \textit{maximum a posteriori} estimation, or learn the transition function by assuming the observable latent states, resulting in limited use cases of the GPSSM. The first Bayesian treatment of learning and inference in the GPSSM was proposed in \cite{frigola2013bayesian} using Monte-Carlo sampling methods, which entails high computational complexity and is prohibitive to high-dimensional latent states. To this end, attention was later shifted to variational approximation-based methods \cite{frigola2014variational, frigola2015bayesian, mchutchon2015nonlinear,eleftheriadis2017identification, doerr2018probabilistic,ialongo2019overcoming,curi2020structured,lindinger2022laplace,lin2022output}, which can be divided into two classes: the mean-field (MF) class \cite{frigola2014variational, frigola2015bayesian, mchutchon2015nonlinear,eleftheriadis2017identification} and the nonmean-field (NMF) class \cite{doerr2018probabilistic,ialongo2019overcoming,curi2020structured,lindinger2022laplace,lin2022output}, according to whether the independence assumption is made or not between the assumed GP-based transition function and the latent states in designing the variational distribution. More specifically, the first MF variational algorithm that integrated a particle filter was proposed in \cite{frigola2014variational}. The authors in \cite{eleftheriadis2017identification} assumed a linear Markov Gaussian variational distribution for the latent states and additionally employed an inference network to overcome the issue of linear growth of the number of variational parameters over time.
The first NMF variational algorithm was proposed in \cite{doerr2018probabilistic}, where the posterior of latent states is directly approximated using the prior distribution. Later, different nonlinear and parametric Markov Gaussian distributions were employed to approximate the posterior of the latent states, hence leading to different NMF algorithms \cite{ialongo2019overcoming, curi2020structured, lindinger2022laplace,lin2022output}. 
In general, the MF class is simpler and less computationally expensive than the NMF class. Recently, online learning schemes for GPSSMs that utilize sampling methods were proposed \cite{liu2020gpssm,liu2022inference,zhao2022streaming}. However, the computational complexity of these schemes remains as an issue, and they are still unable to effectively handle high-dimensional latent states \cite{dowlingreal}.
While the design of learning and inference algorithms plays a crucial role in GPSSM, the limited modeling capacity of GPSSM has been overlooked for years. Specifically, the current GPSSM employs standard GPs with elementary kernels, such as squared exponential (SE) kernel or Matern kernel  \cite{williams2006gaussian}, which prevents it from modeling complex and harsh nonlinear dynamical systems (e.g., systems with sharp transition dynamics). Consequently, this paper aims to enhance the modeling capacity of GPSSM to enable it to model complex dynamical systems.
 There are two paths to improve the model representation power: (1) transforming the standard GP with elementary kernels into a more flexible stochastic process \cite{damianou2013deep, wilson2010copula, rios2020contributions, maronas2021transforming, maronas2022efficient}, and (2) using universal/optimal kernels in the GP model \cite{wilson2013gaussian, yin2020linear, suwandi2022gaussian, wilson2016deep, dai2020interpretable}. While the second path enhances the flexibility of the GP by adopting an optimal kernel function, the Gaussianity in the GP may still be limited and unsuitable for modeling complex systems \cite{damianou2013deep}. Furthermore, optimal kernels may require a large number of kernel hyperparameters, making the model training/optimization process computationally expensive and sensitive to initialization \cite{wilson2013gaussian,dai2020interpretable}. Thus, this paper focuses on the first path, which leverages normalizing flow techniques \cite{papamakarios2021normalizing} to transform the standard GP prior into a more flexible stochastic process. The newly introduced process maintains the elegance and interpretability of GPs while requiring fewer model hyperparameters to tune compared to the GPs using an optimal kernel. As a result, the model representation power of the GPSSM is improved, thereby enhancing the learning and inference performance in harsh and complex scenarios. 
The main contributions of this paper are detailed as follows:
\begin{itemize}
    \item We propose a flexible and expressive class of probabilistic SSMs, called transformed Gaussian process state-space models (TGPSSMs), by leveraging the normalizing flow technique \cite{kobyzev2020normalizing,papamakarios2021normalizing} to transform the GP priors in GPSSMs. Theoretically, the proposed TGPSSM can be regarded as placing a flexible stochastic process prior over the traditional SSMs, which provides a unified framework for standard probabilistic SSMs, including the GPSSM.
    \item We design a scalable learning and inference algorithm for the TGPSSM based on the variational inference framework. Under theoretical analysis, the variational distribution of latent states in our algorithm is flexible and optimal in structure. Furthermore, due to the sparse representation of GP \cite{titsias2009variational} and the bijective nature of normalizing flow, the new proposed variational algorithm is both interpretable and minimally increases computational complexity.
    \item  To further enhance the learning and inference performance of the proposed algorithm, we then utilize a constrained optimization framework to enable learning an informed latent state space representation.  This framework allows automatic optimization of the hyperparameters in the objective function without requiring manual setting as in $\beta$-VAE \cite{higgins2016beta}.
\item Extensive numerical tests based on various synthesized- and real-world datasets verify that the proposed flexible TGPSSM empowered by the variational learning algorithm provides improved model learning and inference performance compared with several competing methods.
\end{itemize}

The structure of this paper is organized as follows. Section \ref{sec:preliminaries} presents some preliminaries related to GPSSMs. In Section \ref{sec:proposed-model}, we introduce a novel flexible and expressive TGPSSM. The proposed variational learning and inference algorithms are described in Section \ref{sec:vari_infer}. Section \ref{sec:experimental-results} presents the experimental results. We conclude this paper in Section \ref{sec:conclusion}. Technical proofs and derivations are provided in Section \ref{sec:proofs}, while additional experimental details and supportive results are included in the Appendix as supplementary materials.

\section{Preliminaries}
\label{sec:preliminaries}
This section provides some preliminaries for GPSSM. Specifically, Section \ref{subsec:GP} briefly reviews the Gaussian process for machine learning, especially the regression task. Section \ref{subsec:GPSSM} introduces the GPSSM and its fundamental properties. 

\subsection{Gaussian Process (GP)}\label{subsec:GP}
GP defines a collection of random variables indexed by $\bm{x} \in \cX$, such that any finite collection of these variables follows a joint Gaussian distribution \cite{williams2006gaussian}.  Mathematically,  a real scalar-valued GP $f(\bm{x})$ can be written as 
\begin{equation}
	f(\bm{x}) \sim \mathcal{GP}\left(\mu(\bm{x}), \ k(\bm{x}, \bm{x}^\prime);  \ \bm{\theta}_{gp}\right),
\end{equation}
where $\mu({\bm{x}})$ is a mean function typically set to zero in practice, and $k(\bm{x}, \bm{x}^\prime)$ is the covariance function, also known as (a.k.a.) kernel function, which is interpretable and can provide insights about the nature of the underlying function \cite{williams2006gaussian}, and $\bm{\theta}_{gp}$ is a set of hyperparameters that needs to be tuned for model selection. 
Let us consider a general regression model, 
\begin{equation}
	y = f(\bm{x}) + {e},  \quad 	{e} \sim \cN(0, \sigma_{e}^2), \quad y \in \mathbb{R}.
	\label{eq:reg_model}
\end{equation}
By placing a GP prior over the function $f(\cdot): \cX \mapsto \mathbb{R}$, we get the salient Gaussian process regression (GPR) model. The task in GPR model is to infer the mapping function $f(\cdot)$ using an observed dataset  $\mathcal{D}\triangleq\{\bm{x}_i, y_i\}_{i = 1}^{n} \triangleq \{\bm{X}, \bm{y}\}$ consisting of $n$ samples, or alternatively input-output pairs. Conditioning on the observed data, the posterior distribution of the mapping function,  $p(f(\bm{x}_*) \vert \bm{x}_*, \cD)$, at any test input $\bm{x}_* \in \cX$, is Gaussian, fully characterized by the posterior mean $\xi$ and the posterior variance $\Xi$.  Concretely, 
\begin{subequations}
    \label{eq:GP_posterior}
    \begin{align}
        &\!\! \xi(\bm{x}_*)  \!=\! \bm{K}_{\bm{x}_*, \bm{X}} \left(\boldsymbol{K}_{\bm{X},\bm{X}}+ \sigma_{e}^{2} \boldsymbol{I}_{n} \right)^{-1} { \bm{y}}, 
        \label{eq:post_mean}\\
        &\!\! \Xi(\bm{x}_*) \!=\! k(\bm{x}_*, \bm{x}_*)  \!-\! \bm{K}_{\bm{x}_*, \bm{X}} \left( \boldsymbol{K}_{\bm{X},\bm{X}}\!+\! \sigma_{e}^{2} \boldsymbol{I}_{n} \right)^{-1} \bm{K}_{\bm{x}_*, \bm{X}}^\top,
        \label{eq:post_cov}
    \end{align}
\end{subequations}
where $\boldsymbol{K}_{\bm{X},\bm{X}}$ denotes the covariance matrix evaluated on the training input $\bm{X}$, and each entry is $[\boldsymbol{K}_{\bm{X},\bm{X}}]_{i,j} = k({\bm{x}}_i, \bm{x}_j)$; $\bm{K}_{\bm{x}_*, \bm{X}}$ denotes the cross covariance matrix between the test input $\bm{x}_*$ and the training input $\bm{X}$;  
the zero-mean GP prior is assumed here and will be used in the rest of this paper if there is no further specification.  
Note that the posterior distribution $p(f(\bm{x}_*) \vert \bm{x}_*, \cD)$ gives not only a point estimate, i.e., the posterior mean, but also an uncertainty region of such estimate quantified by the posterior variance.
It should also be noted that here we denote the variables in the GPR model using mathematical mode italics, such as $\bm{x}_i$ and $y_i$; these variables should not be confused with the latent state $\x_t$ and observation $\y_t$ in SSM (cf. Eq.~(\ref{eq:SSM})).


\subsection{Gaussian Process State-Space Model (GPSSM)}\label{subsec:GPSSM}
Placing GP priors over both  transition function $h(\cdot)$ and emission function  $g(\cdot)$ in SSM (cf. Eq.~(\ref{eq:SSM})) leads to the well-known GPSSM.  However, such GPSSM \textit{with transition and emission GPs} incurs severe nonidentifiability issue between $h(\cdot)$ and $g(\cdot)$ \cite{frigola2014variational}.  To address this issue,  GPSSM \textit{with GP transition and parametric emission} is considered in the literature, as it keeps the same model capacity as the original one only at the cost of introducing higher dimensional latent states \cite{frigola2015bayesian}. The result is summarized in the following theorem.
\begin{theorem}
	\label{theorem:gpssmidentifi}
	For the GPSSM with transition and emission GPs,  
	\begin{subequations}
		\label{eq:gpssm_trans_emis}
		\begin{align}
			& \x_{t+1} = h(\x_t) + \mathbf{v}_t,   \quad 	 h(\cdot) \sim \mathcal{GP},\\
			& \y_{t} = g(\x_t) + \mathbf{e}_t,   \qquad ~ 	 g(\cdot)  \sim \mathcal{GP},
		\end{align}
	\end{subequations}
by defining the augmented state $\bar{\x}_{t} \triangleq [\x_{t+1}, g(\x_{t})]^\top$, Gaussian process $f(\bar{\x}_t ) \triangleq \left[h(\x_{t+1}), g(\x_{t+1}) \right]^\top$, and the augmented process noise $\mathbf{w}_t \triangleq [\mathbf{v}_{t+1}, \mathbf{0}]^\top$,  the original model in Eq.~(\ref{eq:gpssm_trans_emis}) can be reformulated to  a GPSSM with a GP transition and a simple parametric emission, 
\begin{subequations}
	\label{eq:gpssm_trans_emis2}
	\begin{align}
		& \bar{\x}_{t+1} = f(\bar{\x}_t) + \mathbf{w}_t,    	\qquad    f(\cdot) \sim \mathcal{GP},\\
		& \y_{t} = [\bm{0}, \bm{I}] \ \bar{\x}_{t} + \mathbf{e}_t,   
	\end{align} 
 \label{eq:gpssm_original}
\end{subequations}
so as to eliminate/alleviate the severe nonidentifiability issue. 
\end{theorem}
Interested reader can refer to \cite{frigola2015bayesian} (Section 3.2.1) for more details. 
Therefore, in keeping with the previous literature and without loss of generality, we mainly consider the GPSSM with GP transition and parametric emission in this paper. More specifically, the GPSSM we consider is depicted in Fig.~\ref{fig:graphical_model} and expressed by the following equations:
\begin{subequations}
	\label{eq:gpssm}
	\begin{gather}
		 f(\cdot)  \sim \mathcal{G} \mathcal{P}\left(\mu(\cdot), k(\cdot, \cdot); \btheta_{gp} \right) \\
		 \mathbf{x}_{0} \sim p\left(\mathbf{x}_{0} \right) \\
		 {\f}_{t} =f\left(\mathbf{x}_{t-1}\right) \\
		 \mathbf{x}_{t} \mid {\f}_{t}  \sim \mathcal{N}\left( \x_t \mid {\f}_{t}, \mathbf{Q}\right) \\
		 \mathbf{y}_{t} \mid  \mathbf{x}_{t} \sim \cN \left(\mathbf{y}_{t} \mid \bm{C} \mathbf{x}_{t}, \boldsymbol{R}\right)
	\end{gather}
\end{subequations}
where the latent states form a Markov chain, that is, for any time instance $t$, the future state $\x_{t+1}$ is generated by conditioning on only $\x_t$ and the GP transition $f(\cdot)$.  According to Theorem \ref{theorem:gpssmidentifi},  the parametric emission model is assumed to be known and restricted to be a linear mapping with a coefficient matrix $\bm{C} \in \mathbb{R}^{d_y \times d_x}$.  Both the state transitions and observations are corrupted by zero-mean Gaussian noise with covariance matrices $\bm{Q}$ and $\bm{R}$, respectively.

\begin{figure}[t!]
	\centering
	\footnotesize
	\begin{tikzpicture}[align = center, latent/.style={circle, draw, text width = 0.45cm}, observed/.style={circle, draw, fill=gray!20, text width = 0.45cm}, transparent/.style={circle, text width = 0.45cm}, node distance=1.2cm]
		\node[latent](x0) {${\x}_0$};
		\node[latent, right of=x0, node distance=1.4cm](x1) {${\x}_{1}$};
		\node[transparent, right of=x1](x2) {$\cdots$};
		\node[latent, right of=x2](xt-1) {$\!\!{\x}_{t-1}\!\!$};
		\node[latent, right of=xt-1, node distance=1.4cm](xt) {${\x}_{t}$};
		\node[transparent, right of=xt](xinf) {$\cdots$};
		\node[transparent, above of=x0](f0) {$\cdots$};
		\node[latent, above of=x1](f1) {${\f}_{1}$};
		\node[transparent, right of=f1](f2) {$\cdots$};
		\node[latent, above of=xt-1](ft-1) {$\!\!{\f}_{t-1}\!\!$};
		\node[latent, above of=xt](ft) {${\f}_{t}$};
		\node[transparent, right of=ft](finf) {$\cdots$};
		\node[observed, below of=x1](y1) {${\y}_{1}$};
		\node[transparent, below of=x2](y2) {$\cdots$};
		\node[observed, below of=xt-1](yt-1) {$\!\!{\y}_{t-1}\!\!$};
		\node[observed, below of=xt](yt) {${\y}_{t}$};
		\node[transparent, right of=yt](yinf) {$\cdots$};
		\draw[-latex] (x0) -- (f1);
		\draw[-latex] (f1) -- (x1);
		\draw[-latex] (x1) -- (f2);
		\draw[-latex] (ft-1) -- (xt-1);
		\draw[-latex] (xt-1) -- (ft);
		\draw[-latex] (x2) -- (ft-1);
		\draw[-latex] (ft) -- (xt);
		\draw[-latex] (xt) -- (finf);
		\draw[-latex] (x1) -- (y1);
		\draw[-latex] (xt-1) -- (yt-1);
		\draw[-latex] (xt) -- (yt);
		\draw[ultra thick]
		(f0) -- (f1)
		(f1) -- (f2)
		(f2) -- (ft-1)
		(ft-1) -- (ft)
		(ft) -- (finf)
		;
	\end{tikzpicture}
	\caption{Graphical model of a GPSSM with GP transition and parametric emission. The white circles indicate that the variables are latent, while the gray circles represent the observable variables. The thick horizontal bar represents a set of fully connected nodes, i.e., the GP.}
	\label{fig:graphical_model}
\end{figure}
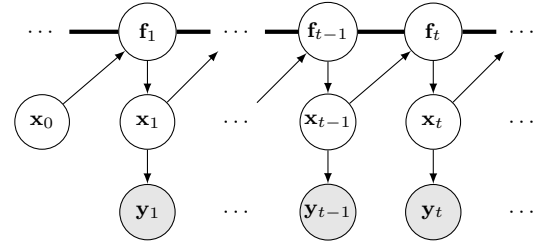

If the state is multidimensional, i.e., $d_x>1$,  the system transition function $f(\cdot): \mathbb{R}^{d_x} \mapsto  \mathbb{R}^{d_x}$, is typically modeled by using $d_x$ mutually independent GPs. More concretely,  each output dimension-specific function, $f_d(\cdot): \mathbb{R}^{d_x} \mapsto \mathbb{R}$, is independently modeled by a scalar-valued GP, and we denote
\begin{equation}
	\f_t = f(\x_{t-1}) \triangleq \{f_d(\x_{t-1})\}_{d=1}^{d_x}, 
	\label{eq:multivariate_GP}
\end{equation}
where each independent GP, $f_d(\cdot)$, has its own mean function, $\mu_d(\cdot)$,  and kernel function, $k_d(\cdot, \cdot)$. 
Finally, note that the GPSSM depicted in Fig.~\ref{fig:graphical_model} can be easily extended to a control system with deterministic control input $\bm{c}_t \in \mathbb{R}^{d_c}$ through an augmented latent state $[\x_t, \bm{c}_t] \in \mathbb{R}^{d_x + d_c}$, but for notation brevity, we omit $\bm{c}_t$ throughout this paper. 

To ease our notation in the rest of discussions, we introduce the following short-hand notations. Let an observation sequence of length $T$ be $\vy \triangleq \y_{1:T} =  \{\y_t\}_{t=1}^T$, latent function variables $\vf \triangleq \f_{1:T}= \{\f_t\}_{t=1}^T$, and  latent states $\vx \triangleq \x_{0:T}= \{\x_t\}_{t=0}^T$. Based on the aforementioned model settings, the joint density function of the GPSSM depicted in Fig. \ref{fig:graphical_model} can be written as:
\begin{equation}
	\begin{aligned}
		p(\vf, {\vx}, \vy \vert \bm{\theta}) = p(\mathbf{x}_{0}) p(\f_{1:T}) \prod_{t=1}^{T} p(\mathbf{y}_{t} \vert \mathbf{x}_{t})  p(\mathbf{x}_{t} \vert {\f}_{t}), 
	\end{aligned}
	\label{eq:joint_density}
\end{equation}
where 
$p(\f_{1:T}) \!\!=\!\! p(f(\x_{0:T-1})) \!\!=\!\! \prod_{t=1}^{T} p(\f_t \vert \f_{1:t-1}, \x_{0:t-1})$ corresponds to a finite dimensional GP distribution \cite{frigola2015bayesian}. The model parameters $\bm{\theta}$ includes the noise and GP hyper-parameters, i.e., $\bm{\theta} \!\!=\!\! [\bm{Q}, \bm{R}, \bm{\theta}_{gp}]$. One of the most challenging tasks in GPSSM is to learn $\btheta$, and simultaneously infer the latent states of interest, which usually involves the marginal distribution $p(\vy|\btheta)$. Due to the nonlinearity of the GPs, however, a closed-form analytical solution for $p(\vy|\btheta)$ is unavailable. Thus, approximation methods need to be employed. We shall defer the discussion of this until Section \ref{sec:vari_infer}.

\section{Proposed Model}
\label{sec:proposed-model}
This section aims to tackle the issue of limited model representation power mentioned in Section \ref{sec:intro}, thereby improving the learning and inference performance of the GPSSM. To accomplish this, we propose a flexible probabilistic SSM that leverages the parametric normalizing flow technique \cite{papamakarios2021normalizing}. 
More specifically, Section \ref{subsec:TGP_prior} introduces a more flexible function prior, namely the transformed Gaussian process (TGP) \cite{maronas2021transforming}, by exploiting the parametric normalizing flow. Different normalizing flows for TGP construction under practical usage considerations are detailed in Section \ref{subsec:NF_used}.  Lastly,  in Section \ref{subsec:tgpssm}, we introduce our proposed probabilistic SSM that uses the TGP.


\subsection{Transformed Gaussian Process}
\label{subsec:TGP_prior}
 Normalizing flow was originally proposed to transform a simple random variable into a more complex one by applying a sequence of invertible and differentiable  transformations (i.e., the \textit{diffeomorphisms}) \cite{papamakarios2021normalizing}. Specifically, let $\x$ be a $d_x$-dimensional continuous random vector, and $p(\x)$ be the corresponding probability density. Normalizing flow can help construct a desired, often more complex and possibly multi-modal distribution by pushing $\x$  through a series of transformations, $\mathbb{G}_{\btheta_{F}}(\cdot)=\mathbb{G}_{\theta_{0}} \circ \mathbb{G}_{\theta_{1}} \circ \cdots \circ \mathbb{G}_{\theta_{J-1}}$, i.e., 
\begin{equation}
	\tilde{\x} = \mathbb{G}_{\theta_{J-1}}\left( \G_{\theta_{J-2}}\left( ... \G_{\theta_0}(\x)...  \right) \right),
\end{equation}
where 
the set of transformations, $  \{ \G_{\theta_j}(\cdot): \mathbb{R}^{d_x} \mapsto \mathbb{R}^{d_x} \}_{j=0}^{J-1}$, parameterized by $\bm{\theta}_F \triangleq [\theta_{0}, \theta_1, ..., \theta_{J-1}]$, has to be invertible and differentiable \cite{kobyzev2020normalizing}. Under these conditions, the probability density of the induced random vector, $\pi(\tilde{\x})$,  is well-defined and can be obtained by the  ``change of variables'' formula, yielding:
\begin{equation}
		\pi(\tilde{\x})  \!=\!p({\x}) \prod_{j=1}^{J-1}\left|\operatorname{det} \frac{\partial \ \mathbb{G}_{\theta_{j}}( \G_{\theta_{j-1}}( ... \G_{\theta_0}(\x)... ) )}{\partial \  \G_{\theta_{j-1}} ( ... \G_{\theta_0}(\x)... )  }\right|^{-1}.
	\label{eq:nf}
\end{equation}
We refer the readers to \cite{papamakarios2021normalizing,kobyzev2020normalizing} for more details.

For the complex system dynamics that a standard GP equipped with elementary kernels cannot model, a good way is to transform the GP prior. Specifically, by applying the same idea as the normalizing flow on random variables, we transform the standard GP,  $f(\cdot)$, to get a more flexible and expressive random process, $\tilde{f}(\cdot)$, namely the TGP prior \cite{maronas2021transforming}, defined as follows.
\begin{definition}[Transformed Gaussian process (TGP)]
A TGP, $\tilde{f}(\cdot)$, is a collection of random variables, such that any finite collection, ${\tf}_{1:T} \triangleq \tilde{f}(\x_{0:T-1}),
T \in \mathbb{N}$, has joint distribution defined by 
\begin{equation}
	\label{eq:pf_TGP}
	p(\tilde{\f}_{1:T}) = p(\tilde{f}(\x_{0:T-1}) ) =  p\left(\f_{1:T}\right)  \J_\f,  
\end{equation} 
where
\begin{equation}
	 \J_\f \triangleq  \prod_{j=1}^{J-1}\left|\operatorname{det} \frac{\partial \ \mathbb{G}_{\theta_{j}}\left( {\G_{\theta_{j-1}}\left( ... \G_{\theta_0}(\f_{1:T})... \right)} \right)}{\partial \  {\G_{\theta_{j-1}}\left( ... \G_{\theta_0}(\f_{1:T})... \right)}  }\right|^{-1},
\end{equation}
and $\{ \G_{\theta_j}(\cdot): \mathbb{R}^{Td_x} \mapsto \mathbb{R}^{Td_x} \}_{j=0}^{J-1}$ parameterized by $\btheta_F$ is a set of invertible and differentiable mapping functions such that the induced joint distribution, $p(\tilde{\f}_{1:T})$, satisfies \textit{Kolmogorov's consistency theorem} \cite{tao2011introduction}.
Samples from this joint distribution are obtained by the following generative equations:
\begin{equation}
	\tf_{1:T} = \tilde{f}(\x_{0:T-1}) =  \G_{\btheta_{F}}({\f}_{1:T}),  \ \  \f_{1:T} = f(\x_{0:T-1}), 
\label{eq:marg_trans}
\end{equation} where $\mathbb{G}_{\btheta_{F}}(\cdot)=\mathbb{G}_{\theta_{0}} \circ \mathbb{G}_{\theta_{1}} \circ \cdots \circ \mathbb{G}_{\theta_{J-1}},$
and $\f_{1:T}$ is finite collection of Gaussian random variables from the standard GP, $f(\cdot)$, evaluated at the input $\x_{0:T-1}$ (see Eq.~(\ref{eq:multivariate_GP})). 
%
%
%
%
\end{definition}
%
%
%
%
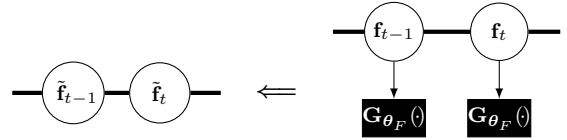
\begin{figure}[t!]
	\centering
	\footnotesize
	\begin{tikzpicture}[align = center, latent/.style={circle, draw, text width = 0.45cm}, observed/.style={circle, draw, fill=gray!20, text width = 0.45cm}, transparent/.style={circle, text width = 0.45cm}, transform/.style={rectangle, draw, fill=black, text width = 0.65cm}, node distance=1.15cm]
		\node[transparent](f2) {};
		\node[latent, right of=f2](ft-1) {$\!\!{\f}_{t-1}\!\!$};
		\node[latent, right of=ft-1, node distance=1.4cm](ft) {${\f}_{t}$};
		\node[transparent, right of=ft](finf) {};
		\node[transparent, below of=f2](fJ2) {};
		\node[transform, below of=ft-1](fJt-1) {$\!\!\textcolor{white}{\mathbf{G}_{\btheta_F}(\!\cdot\!)}\!\!$};
		\node[transform, below of=ft](fJt) {$\!\!\textcolor{white}{\mathbf{G}_{\btheta_F}(\!\cdot\!)}\!\!$};
		\node[transparent, below left of=f2](tf1) {};
		\node[latent, left of=tf1](tft) {${\tf}_{t}$};
		\node[transparent, right of=tft,  node distance=1.5cm](tf0) {$\bm{\Longleftarrow}$};
		\node[latent, left of=tft](tft-1) {${\tf}_{t-1}$};
		\node[transparent, left of=tft-1](tft-2) {};
		
		\draw[-latex] (ft-1) -- (fJt-1);
		\draw[-latex] (ft) -- (fJt);
		\draw[ultra thick]
		(tft-2) -- (tft-1)
		(tft-1) -- (tft)
		(tft) -- (tf1)
		(f2) -- (ft-1)
		(ft-1) -- (ft)
		(ft) -- (finf)	;
	\end{tikzpicture}
	\caption{(\textbf{Left}) Generic representation of TGP prior. (\textbf{Right}) GP prior is transformed by marginal flow, leading to a TGP prior.  The black square blocks represent the (coordinate-wise) marginal flow. 
 }
	\label{fig:TGP}
\end{figure}
The TGP definition implies that with the mapping $\G_{\btheta_F}(\cdot)$, any finite dimensional distribution $p(\tilde{f}(\x_{0:T-1}))$ can be non-Gaussian, thus the obtained TGP is a more flexible and expressive function prior. 
If the mapping $\G_{\btheta_{F}}(\cdot)$ in Eq.~(\ref{eq:marg_trans}) transforms the standard GP coordinate-wisely, i.e., $\G_{\btheta_{F}}(\cdot): \mathbb{R}^{d_x} \mapsto \mathbb{R}^{d_x}$, and
\begin{equation}
    \tf_{t} = \tilde{f}(\x_{t-1}) =  \G_{\btheta_{F}}({\f}_{t}),  \ \  \f_{t} = f(\x_{t-1}),  \ t = 1, 2,..., T,
    \label{eq:marignal_flow}
\end{equation}
 it is known as the \textit{marginal flow} \cite{rios2020contributions,maronas2021transforming}.  
The graphical representations of the TGP prior and marginal flow are depicted in Figure~\ref{fig:TGP}.
It has been formally proven in \cite{rios2020contributions,wilson2010copula} that a scalar-valued stochastic process transformed by marginal flow induces a valid stochastic process.  
This result has recently been applied to multi-class classification problems \cite{maronas2022efficient,maronas2021transforming}. We extend the theoretical result for the vector-valued GP and summarize it in the following corollary.
\begin{corollary}
\label{corollary1}
Given a marginal flow, $\mathbb{G}_{\btheta_{F}}(\cdot)$,  and a multidimensional GP \cite{chen2023remarks}, $f(\cdot)$, with input space $\mathcal{X}$,  the transformed process, denoted by $\tilde{f} \triangleq \G_{\btheta_{F}}(f)$, is a valid multivariate stochastic process in the same input space.
\end{corollary}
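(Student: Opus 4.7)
The plan is to invoke the Kolmogorov extension theorem, which guarantees the existence of a stochastic process on the index set $\mathcal{X}$ once a permutation-invariant and marginally consistent family of finite-dimensional distributions (FDDs) is specified. For any finite input collection $\{\x_1,\ldots,\x_T\}\subset\mathcal{X}$, I would define the FDD of $\tilde{f}$ as the pushforward of the FDD of $f$ under the product diffeomorphism $\G_{\btheta_{F}}^{\otimes T}:(\f_1,\ldots,\f_T)\mapsto(\G_{\btheta_{F}}(\f_1),\ldots,\G_{\btheta_{F}}(\f_T))$. Because $\G_{\btheta_{F}}$ is invertible and differentiable on $\mathbb{R}^{d_x}$, so is $\G_{\btheta_{F}}^{\otimes T}$ on $\mathbb{R}^{T d_x}$, and its Jacobian is block-diagonal with blocks $\mathrm{D}\G_{\btheta_{F}}(\f_t)$; Eq.~(\ref{eq:nf}), applied blockwise, therefore delivers a well-defined joint density for $(\tf_1,\ldots,\tf_T)$.

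With the FDDs so constructed, the two Kolmogorov conditions would be verified in turn. Permutation invariance is immediate: the same map $\G_{\btheta_{F}}$ acts at every input location, so the product map commutes with coordinate permutations, and the underlying vector-valued GP $f$ already enjoys permutation invariance as a valid stochastic process \cite{chen2020remarks}. For marginal consistency, I would show that integrating the $(T{+}1)$-dimensional pushforward density over $\tf_{T+1}$ recovers the $T$-dimensional one. Making the change of variables $\f_{T+1}=\G_{\btheta_{F}}^{-1}(\tf_{T+1})$ in this integral cancels the $(T{+}1)$-th Jacobian block and converts it to an integral over $\f_{T+1}$ against the underlying GP density; by Kolmogorov consistency of $f$ this yields the $T$-input GP FDD, and applying $\G_{\btheta_{F}}^{\otimes T}$ to both sides closes the argument.

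The main obstacle I anticipate is the Jacobian bookkeeping in the consistency check, i.e., rigorously verifying that the block-diagonal structure decouples the $\tf_{T+1}$ integral from the remaining variables; this is intuitively clear but requires a careful combination of Fubini's theorem and the inverse-function theorem. A cleaner route that I would actually adopt is to work directly at the level of probability measures: define the FDD of $\tilde{f}$ as the image measure of the FDD of $f$ under the diffeomorphism $\G_{\btheta_{F}}^{\otimes T}$, and inherit both Kolmogorov conditions from the corresponding properties of $f$ without ever writing down a density. This measure-theoretic formulation makes transparent that the proof is a verbatim extension of the scalar case in \cite{rios2020contributions}, with $\mathbb{R}$ replaced by $\mathbb{R}^{d_x}$ and the scalar marginal diffeomorphism replaced by the multivariate $\G_{\btheta_{F}}$, and it sidesteps any subtle issues that could arise if $\G_{\btheta_{F}}$ were only piecewise smooth or had singular Jacobian sets of positive measure at the boundary.
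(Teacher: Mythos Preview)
Your proposal is correct and follows essentially the same approach as the paper: both invoke Kolmogorov's consistency theorem, note that the pushforward measure is well defined because $\G_{\btheta_{F}}$ is invertible and differentiable, and rely on the coordinate-wise (marginal) nature of the flow to inherit the consistency conditions from the underlying GP. Your version is considerably more detailed than the paper's two-sentence proof, but the strategy and key ingredients are identical.
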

\begin{proof}
	 First, since $\G_{\btheta_{F}}(\cdot)$ is invertible and differentiable, the push-forward measure is well-defined.  Then,  because the marginal flows are coordinate-wise mappings, it is not difficult to verify that the joint distribution of any finite collection of variables, $\{ \tf_1, \tf_2, \ldots, \tf_T \},  T \in \mathbb{N}$, from $\tilde{f}$, satisfies the consistency conditions required by the \textit{Kolmogorov's consistency theorem} \cite{tao2011introduction}, which completes the proof.  
\end{proof}

It should be noted that by using marginal flows, the resulting finite dimensional distribution, $p(\tilde{\f}_{1:T})$, can have non-Gaussian marginals, while still sharing the same Copula as $p({\f}_{1:T})$, according to Sklar's theorem \cite{wilson2010copula}. Nevertheless, for our purposes, the increased expressiveness obtained by modifying the marginals is enough, as we prefer to benefit from a more efficient sparse variational inference algorithm when $\G_{\btheta_{F}}(\cdot)$ is marginal flow \cite{maronas2021transforming}. It is also noteworthy that the idea of using transformations to increase the flexibility of GP models is natural and straightforward. The TGP prior leveraging parametric normalizing flows was recently proposed in \cite{maronas2021transforming} and is in line with several existing works \cite{wauthier2010heavy,wilson2010copula,snelson2003warped,rios2019compositionally,rios2020contributions,maronas2022efficient}, such as  Copula processes \cite{wilson2010copula} and warped GPs \cite{snelson2003warped,rios2019compositionally,rios2020contributions}. However, these works mainly focus on supervised learning tasks instead of learning and inference in SSMs with latent states.  If the transformation is constructed by multi-layered GPs, then this leads to the so-called deep Gaussian process (DGP) models \cite{damianou2013deep}.  However, training the DGP is essentially challenging because of the high computational complexity involved during the training phase \cite{maronas2021transforming,maronas2022efficient,rios2020contributions}.  In this paper, we concentrate on exploiting parametric marginal flows to increase the model representation power in the original GPSSM without substantially increasing the overall computational complexity. 

\subsection{Construction and Usage of Normalizing Flow}
\label{subsec:NF_used}

This subsection presents practical guidelines for the construction and use of normalizing flows in the TGP model.
In essence, various types of normalizing flows can be employed as marginal flows, ranging from basic and interpretable elementary flows to more sophisticated data-driven ones developed in recent years \cite{kobyzev2020normalizing, papamakarios2021normalizing}. However, in cases where prior knowledge regarding the underlying system dynamics is available, elementary flows are typically preferred due to their superior interpretability and lower number of model parameters relative to their data-driven counterparts. For example, if the latent states of the underlying dynamical system exhibit significant dispersion and form a heavy-tailed distribution statistically, then Sinh-Arcsinh transformations may be employed to control the mean, variance, asymmetry, and kurtosis of the TGP priors while modeling the system dynamics \cite{jones2009sinh}. Another example is that by combining two affine-logarithmic transformations, bounded latent states can be effectively handled \cite{rios2020contributions}.
A simple and widely-used example for elementary flow compositions is given by stacking $J$ layers of Sinh-Arcsinh-Linear (SAL) flow \cite{maronas2021transforming,rios2019compositionally}:
\begin{equation}
    \label{eq:SAL_flows}
    \G_{\theta_j}(\cdot) = d_j \sinh \left(b_j \operatorname{arcsinh}(\cdot )-a_j \right)+c_j, 
\end{equation}
where $\theta_{j} \!=\! [a_j, b_j, c_j, d_j], j \!=\! 0, 1, ..., J\!-\!1$. We summarize these interpretable elementary flows along with their possible compositions used in the literature in Table \ref{tab:flows}, Appendix~\ref{subsec:appx_flow_table}. 


However, the elementary flows and their compositions may still be insufficient, especially when modeling the dynamical system in extremely harsh and complex scenarios. To this end, the normalizing flows developed recently can be integrated with GP prior to increase the model flexibility, such as the real-valued non-volume preserving (RealNVP) flow \cite{dinh2017density}  and continual normalizing flow \cite{chen2018neural}.   In this paper, we utilize the RealNVP flow as an example to model multidimensional TGP because of its universal approximation capability and competitive performance compared to other competitors in terms of the number of model parameters and the training speed  \cite{BondTaylor2021}.  
The transformations, $\G_{\theta_{j}}(\cdot), j =0, 1, ..., J-1,$ in RealNVP are constructed by coupling layers. Specifically, given a $d_x$-dimensional function output, $\f_t = [\f_{t,1}, \f_{t,2},..., \f_{t,d_x}]^\top \in \mathbb{R}^{d_x}$, and $d< d_x$, the first coupling layer, $\G_{\theta_{0}}(\f_t)$, outputs $\f_t^{1} = [ \f_{t, 1}^{1}, \f_{t, 2}^{1},..., \f_{t, d_x}^{1}]^\top$, following the equations
\begin{subequations}
	\begin{align}
		&  \f_{t, 1:d}^{1} = 	\f_{t, 1:d},\\
		&  \f_{t, d+1:d_x}^{1} = 	\f_{t, d+1:d_x}  \odot \exp( s(\f_{t, 1:d}) ) + r(\f_{t, 1:d})), 
	\end{align}
	\end{subequations}
where $s(\cdot)$ and $r(\cdot)$ are arbitrary complex mappings $\mathbb{R}^{d} \mapsto \mathbb{R}^{d_x - d}$,  which can be modeled by neural networks, and $\odot$ represents element-wise product. Then by stacking coupling layers in an alternating fashion, elements that are unchanged in the last coupling layer are updated in the next \cite{dinh2017density}. 
Note that if the elements of the function output, $\{\f_{t,1}, \f_{t,2}, ..., \f_{t,d_x}\}$, are independent, the outputs of the RealNVP flow, e.g., $\{\f_{t, 1}^{1}, \f_{t, 2}^{1},..., \f_{t, d_x}^{1}\}$, are dependent because of the transformation of the $J$ coupling layers.


\subsection{Transformed Gaussian Process State-Space Model}
\label{subsec:tgpssm} 
Placing the TGP prior $\tilde{f}(\cdot)$ defined in Section \ref{subsec:TGP_prior}  over the transition function of SSM gives rise to the transformed Gaussian process state-space model (TGPSSM). The corresponding graphical model is shown in Fig.~\ref{fig:TGPSSM_model_inducing}. Mathematically, the model can be expressed by the following equations:
\begin{subequations}
	\label{eq:tgpssm}
	\begin{align}
	    & \tilde{f}(\cdot) = \G_{\btheta_{F}}(f), \quad  f(\cdot)  \sim \mathcal{G} \mathcal{P}\left(\mu(\cdot), k(\cdot, \cdot) ; \btheta_{gp} \right)  \\
		&  \mathbf{x}_{0} \sim p\left(\mathbf{x}_{0}\right) \\
		&  \f_{t} = f(\x_{t-1})  \\
 		& \tilde{\f}_{t} = \G_{\btheta_F}(\f_{t}) =\tilde{f}\left(\mathbf{x}_{t-1}\right)  \\
		& \mathbf{x}_{t} \mid \tilde{\f}_{t}  \sim \mathcal{N}\left( \x_t \mid \tilde{\f}_{t}, \mathbf{Q}\right) \\
		&  \mathbf{y}_{t} \mid  \mathbf{x}_{t} \sim \cN \left(\mathbf{y}_{t} \mid \bm{C} \mathbf{x}_{t}, \boldsymbol{R}\right)
	\end{align}
\end{subequations}
where $\mathbb{G}_{\btheta_F}(\cdot)$ is a marginal flow described in Section \ref{subsec:TGP_prior}. Given an observation sequence of size $T$,  
the corresponding TGP function values are denoted as ${\tF} \triangleq {\tf}_{1:T}=\{\tilde{f}(\x_t)\}_{t=0}^{T-1}$, and 
the joint probability density function of the TGPSSM is
\begin{equation}
    p(\tF, \vx, \vy \vert \bm{\theta}) =p(\mathbf{x}_{0}) p(\tilde{\f}_{1:T} )  \prod_{t=1}^{T} p(\mathbf{x}_{t} \vert \tilde{\f}_{t} )   p(\mathbf{y}_{t} \vert \mathbf{x}_{t}),
    \label{eq:joint_density_TGPSSM}
\end{equation}
where $p(\tilde{\f}_{1:T})$ corresponds to the finite dimensional TGP distribution, and $\btheta \!\! = \!\! \{ \bm{\theta}_{gp}, \btheta_{F}, \bm{Q}, \bm{R}\}$ is the set of the model parameters.
Sampling from the TGPSSM prior is similar to sampling from GPSSM prior \cite{frigola2015bayesian}, except for an extra transformation step that transforms the sampled latent function value $\f_{t}$ into ${\tilde{\f}}_{t}$.  More details about the sampling steps and examples can be found in Appendix~\ref{subsec:samplingTGPSSM}. 
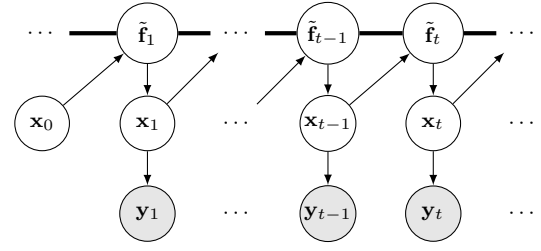
\begin{figure}[t!]
	\centering
	\footnotesize
	\begin{tikzpicture}[align = center, latent/.style={circle, draw, text width = 0.45cm}, observed/.style={circle, draw, fill=gray!20, text width = 0.45cm}, transparent/.style={circle, text width = 0.45cm}, node distance=1.2cm]
		\node[latent](x0) {${\x}_0$};
		\node[latent, right of=x0, node distance=1.4cm](x1) {${\x}_{1}$};
		\node[transparent, right of=x1](x2) {$\cdots$};
		\node[latent, right of=x2](xt-1) {$\!\!{\x}_{t-1}\!\!$};
		\node[latent, right of=xt-1, node distance=1.4cm](xt) {${\x}_{t}$};
		\node[transparent, right of=xt](xinf) {$\cdots$};
		\node[transparent, above of=x0](f0) {$\cdots$};
		\node[latent, above of=x1](f1) {${\tf}_{1}$};
		\node[transparent, right of=f1](f2) {$\cdots$};
		\node[latent, above of=xt-1](ft-1) {$\!\!{\tf}_{t-1}\!\!$};
		\node[latent, above of=xt](ft) {${\tf}_{t}$};
		\node[transparent, right of=ft](finf) {$\cdots$};
		\node[observed, below of=x1](y1) {${\y}_{1}$};
		\node[transparent, below of=x2](y2) {$\cdots$};
		\node[observed, below of=xt-1](yt-1) {$\!\!{\y}_{t-1}\!\!$};
		\node[observed, below of=xt](yt) {${\y}_{t}$};
		\node[transparent, right of=yt](yinf) {$\cdots$};
		\draw[-latex] (x0) -- (f1);
		\draw[-latex] (f1) -- (x1);
		\draw[-latex] (x1) -- (f2);
		\draw[-latex] (ft-1) -- (xt-1);
		\draw[-latex] (xt-1) -- (ft);
		\draw[-latex] (x2) -- (ft-1);
		\draw[-latex] (ft) -- (xt);
		\draw[-latex] (xt) -- (finf);
		\draw[-latex] (x1) -- (y1);
		\draw[-latex] (xt-1) -- (yt-1);
		\draw[-latex] (xt) -- (yt);
		\draw[ultra thick]
		(f0) -- (f1)
		(f1) -- (f2)
		(f2) -- (ft-1)
		(ft-1) -- (ft)
		(ft) -- (finf)
		;
	\end{tikzpicture}
	\caption{Graphical model of TGPSSM. The white circles indicate that the variables are latent, while the gray circles represent the observable variables. The thick horizontal bar represents a set of fully connected nodes, i.e., the TGP.}
	\label{fig:TGPSSM_model_inducing}
\end{figure}
\begin{remark}[Dependency construction] \label{remark:dependency_construction}
In the GPSSM described in Eq.~(\ref{eq:gpssm}), the $d_x$-dimensional ($d_x >1$) transition function is  typically modeled by $d_x$ mutually independent GPs, which is simplifying but may be unrealistic \cite{lin2022output}. 
This issue can be tackled in TGPSSMs as the $d_x$-dimensional TGP prior can establish dependencies between dimensions via normalizing flow.  For instance, coupling layers in RealNVP can output dependent function values. Notably, in the case of a linear normalizing flow, the TGPSSM is equivalent to  the output-dependent GPSSM proposed in \cite{lin2022output}, offering potential practical and inference advantages.
\end{remark}

The computational complexity of TGPSSM mainly arises from the computation of standard GP model. For instance, sampling from TGPSSM scales as $\mathcal{O}(d_x T^3)$ (see Appendix~\ref{subsec:samplingTGPSSM}), which is prohibitive for big data. To address the computational issue, we  leverage the sparse GP \cite{titsias2009variational} to alleviate the cubic computational complexity in the TGPSSM. By introducing sparse GP to augment the GP prior with $M$ inducing points $\vu \triangleq \u_{1:M} = \{\u_1, \u_2, ..., \u_M\}$, at locations $\vz \triangleq \z_{1:M} =  \{\z_1, \z_2, ..., \z_M\}$, i.e., $\u_{1:M} = f(\z_{1:M})$, the TGP prior after introducing the sparse inducing points is given by
\begin{equation}
	p(\tF, \tU)  = p(\tilde{\f}_{1:T}, \tilde{\u}_{1:M}) = \underset{=p(\tF \vert \tU)}{\underbrace{p(\vf \vert \vu)  \J_{\f}}} \cdot \underset{=p(\tU)}{\underbrace{p(\vu) \J_{\u}}},
	\label{eq:joint_TGP}
\end{equation}
where $p(\vu)$ is the sparse GP distribution, $p(\vf \vert \vu)$ is the corresponding noiseless GP posterior whose mean and covariance can be computed similarly to Eq.~(\ref{eq:GP_posterior}),
$\tU \triangleq \tilde{\u}_{1:M} = \tilde{f}(\z_{1:M})$, and 
\begin{subequations}
	\begin{align}
		& \J_{\f} = \prod_{j=1}^{J-1} \left| \det \frac{\partial \mathbb{G}_{\theta_{j}}\left( \G_{\theta_{j-1}}\left( ... \G_{\theta_0}(\vf)... \right) \right)}{\partial \G_{\theta_{j-1}}\left( ... \G_{\theta_0}(\vf)... \right)  }  \right|^{-1},\\
		& \J_{\u} = \prod_{j=1}^{J-1} \left|\operatorname{det} \frac{\partial \mathbb{G}_{\theta_{j}}\left( \G_{\theta_{j-1}}\left( ... \G_{\theta_0}(\vu)... \right) \right)}{\partial \G_{\theta_{j-1}}\left( ... \G_{\theta_0}(\vu)... \right) } \right|^{-1}.
	\end{align}
\end{subequations}
Detailed derivations of Eq.~(\ref{eq:joint_TGP}) can be found in Section~\ref{appedx:derivation_joint_tgp}, where we use the fact that the coordinate-wise marginal flow $\G_{\btheta_{F}}(\cdot)$, induces a valid TGP and, in turn, guarantees that the transformed sparse GP is consistent, see Corollary \ref{corollary1}. 
Therefore, the joint distribution of the TGPSSM augmented by the inducing points is
\begin{equation}
	p(\tF, \tU, \vx, \vy) \! \!=\!\! p(\mathbf{x}_{0})  p(\tf_{1:T}, \tu_{1:M}) \!\! \prod_{t=1}^{T}\!   p(\mathbf{x}_{t} \vert {\tf}_{t}) p(\mathbf{y}_{t} \vert \mathbf{x}_{t}).\!
	\label{eq:joint_dist}
\end{equation}
Similarly, we can sample from the TGPSSM prior by first generating a random instance $\vu$ from $p(\vu)$, and then sampling $\f_t$ from the conditional distribution $p(\f_t \vert \vu, \x_{t-1})$ at each time step $t$, given the sampled $\vu$ and previous state $\x_{t-1}$. The detailed sampling procedure for this prior is also given in Appendix~\ref{subsec:samplingTGPSSM}.   With the aid of the sparse GP, the computational complexity of the TGPSSM can be reduced to $\mathcal{O}(d_x T M^2)$ with $M \ll T$, comparing to the original $\mathcal{O}(d_x T^3)$.

Lastly, we note that if $M$ is sufficiently large, the set of inducing points, $\vu$, is often regarded as a \textit{sufficient statistic} for the GP function values, $\vf$, in a sense that given $\vu$, the function values $\vf$ and any novel $\f_*$ are independent \cite{titsias2009variational}, i.e., $p(\f_* \vert \vf, \vu) = p(\f_* \vert \vu)$, for any $\f_*$.   This property remains valid in the TGP owing to the bijective nature of normalizing flow. We summarize the result in the following theorem.
\begin{theorem}
	\label{theorem:ips_tgp}
	If the inducing points, $\vu$, is a sufficient statistic for the standard GP model, meaning that $M$ is sufficiently large, then the transformed inducing points, $\tilde{\vu} \!=\! \tilde{f}(\z_{1:M})$, is a \textit{sufficient statistic} for the TGP function values, $\tilde{\vf} \!=\! \tilde{f}(\x_{1:T})$.
\end{theorem}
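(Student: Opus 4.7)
The plan is to exploit the bijectivity of the marginal flow $\mathbb{G}_{\btheta_{F}}(\cdot)$: a bijective transformation preserves the information content of a random variable, so the conditional independence structure inherited from the underlying GP must carry over to the TGP. I would formulate the goal as showing $p(\tf_* \mid \tF, \tU) = p(\tf_* \mid \tU)$ for any test point $\tf_* = \tilde{f}(\x_*)$, and then reduce it to the corresponding GP statement $p(\f_* \mid \vf, \vu) = p(\f_* \mid \vu)$, which holds by the standing sparse-GP sufficiency assumption.

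First I would apply the change-of-variables formula from Eq.~(\ref{eq:nf}) to write
\begin{equation}
p(\tf_*, \tF, \tU) = p(\f_*, \vf, \vu) \, \J_{\f_*} \J_{\f} \J_{\u},
\end{equation}
where each Jacobian factor depends only on its own argument because $\mathbb{G}_{\btheta_{F}}(\cdot)$ is a marginal (coordinate-wise) flow. A similar change of variables yields $p(\tF, \tU) = p(\vf, \vu)\,\J_{\f}\J_{\u}$ and $p(\tU) = p(\vu)\,\J_{\u}$. Dividing these expressions, the $\J_{\f}$ and $\J_{\u}$ factors cancel cleanly, leaving
\begin{equation}
p(\tf_* \mid \tF, \tU) = p(\f_* \mid \vf, \vu)\,\J_{\f_*}, \qquad p(\tf_* \mid \tU) = p(\f_* \mid \vu)\,\J_{\f_*}.
\end{equation}

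Next I would invoke the sufficiency of $\vu$ for the standard GP, i.e.\ $p(\f_* \mid \vf, \vu) = p(\f_* \mid \vu)$, to conclude that the two right-hand sides above are identical, so $p(\tf_* \mid \tF, \tU) = p(\tf_* \mid \tU)$. Since $\tf_*$ was arbitrary, $\tU$ is a sufficient statistic for $\tF$ in the TGP model.

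The main obstacle is strictly bookkeeping: I need to ensure the Jacobian factors factor as claimed across the conditioning. This is where the marginal (coordinate-wise) nature of the flow is essential — it guarantees that the Jacobian of $\tf_*$ depends only on $\f_*$ and not on $(\vf, \vu)$, so nothing couples the three blocks. A more nuanced flow (e.g.\ coupling layers acting across time indices) would entangle Jacobians and break the cancellation, so I would explicitly note that the proof relies on the flow being marginal, consistent with the construction in Section~\ref{subsec:TGP_prior}.
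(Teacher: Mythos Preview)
Your argument is correct. You work directly with the conditional-independence characterization of sufficiency stated in the text preceding the theorem, namely $p(\f_* \mid \vf, \vu) = p(\f_* \mid \vu)$, and push it through the bijection via explicit change-of-variables and Jacobian cancellation. The paper's own proof takes a different route: it invokes the Fisher--Neyman factorization theorem with respect to the GP parameters $(\bm{\mu}, \bm{K})$, and proceeds in two steps, first showing that $\vu$ remains sufficient for the transformed values $\tF$ (by absorbing $\G_{\btheta_F}^{-1}$ into the parameter-free factor), and then arguing by symmetry that $\tU$ is likewise sufficient for $\tF$. Your approach is more elementary and stays closer to the operational definition actually used downstream in the sparse-GP machinery; it also makes the role of the marginal (coordinate-wise) flow completely explicit through the block-diagonal Jacobian. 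The paper's Fisher--Neyman argument is a bit more abstract and hides the Jacobian bookkeeping inside the factorization, at the cost of momentarily switching to a parametric notion of sufficiency that is not quite the same object as the conditional-independence statement used elsewhere in the paper.
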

\begin{proof}
	See Section~\ref{appedx:proof_thm2}.
\end{proof}
Theorem \ref{theorem:ips_tgp} suggests that TGPSSM inherits the merit of sparse GP models, which can greatly benefit the design of scalable learning and inference algorithms for TGPSSM (see Section \ref{sec:vari_infer}).  Similar to the sparse GPs, in TGPSSM, the number of inducing points, $M$, in practice can be predetermined based on available computational resources, and the inducing locations, $\vz$, can be selected by applying gradient-based optimization.  Carefully selecting the inducing locations, $\vz$,  is necessary when $M$ is insufficiently large, as it allows the transition function to be better specified by the transformed inducing points, $\tU$. In contrast, if $M$ is large enough and $\tU$ is sufficient for $\tF$, TGPSSM will be insensitive to the specific locations of inducing points, and optimizing their locations becomes optional. 

\section{Variational Learning and Inference}
\label{sec:vari_infer}
To learn TGPSSM and simultaneously infer the latent states without significant computational cost, this paper resorts to the sparse representation of TGPSSM and variational approximation methods \cite{theodoridis2020machine}. 
In the following, Section \ref{subsec:VI_problems} first points out the issues existing in the variational GPSSM literature.  To overcome these issues, Section \ref{subsec:mf_tgpssm} and Section \ref{subsec:mf_cotgpssm} detail the proposed variational inference algorithms for TGPSSM.

\subsection{Variational Inference and Approximations} \label{subsec:VI_problems}
Let us take the TGPSSM augmented by the sparse inducing points (see Eq.~(\ref{eq:joint_dist})) as an example. 
In Bayesian statistics, the model evidence $p(\vy \vert \btheta)$ is a fundamental quantity for model selection and comparison \cite{courts2021gaussian}. By maximizing the logarithm of $p(\vy \vert \btheta)$ w.r.t. the model parameters $\btheta$, the goodness of data fitting and the model complexity are automatically balanced, in accordance with Occam's razor principle \cite{cheng2022rethinking}.
However,  $p(\vy \vert \btheta)$ is obtained by integrating out all the latent variables $\{\vx, \tF, \tU\}$ in the joint distribution, see Eq.~(\ref{eq:joint_dist}), which is analytically intractable.  Thus, the posterior distribution of the latent variables, $p(\vx, \tF, \tU\vert \vy) = \frac{p(\vy, \vx, \tF, \tU)}{p(\vy \vert \btheta)},$ cannot be expressed in a closed-form analytical expression, either.  This intractability issue has been addressed in variational Bayesian methods by adopting a variational distribution \cite{theodoridis2020machine}, $q(\vx, \tF, \tU)$, to approximate the intractable $p(\vx, \tF, \tU\vert \vy)$. With the variational distribution $q(\vx, \tF, \tU)$, it can be shown that the logarithm of the evidence function is bounded by the so-called evidence lower bound (ELBO) \cite{cheng2022rethinking}, namely,
\begin{align}
\!\!\!\!	\log p(\vy \vert \btheta) \ge \operatorname{ELBO} \!\triangleq\! \mathbb{E}_{q(\vx, \tF, \tU )} \! \left[\log \frac{p(\tF, \tU, \vx, \vy)} {q(\vx, \tF, \tU) }\right]\!\!,
	\label{eq:ELBO_general}
\end{align}
where the ELBO will serve as a surrogate function to be maximized w.r.t. the model parameters $\btheta$ and variational distribution $q(\vx, \tF, \tU)$.  Maximizing the ELBO w.r.t. the variational distribution $q(\vx, \tF, \tU)$ corresponds to inference, while maximizing it w.r.t. the model parameters $\btheta$ corresponds to learning. 

The tightness of the ELBO is determined by the closeness between the variational distribution $q(\vx, \tF, \tU)$ and posterior $p(\vx, \tF, \tU\vert \vy)$, measured by the Kullback-Leibler (KL) divergence  \cite{theodoridis2020machine}, $\operatorname{KL}[ q(\vx, \tF, \tU) \| p(\vx, \tF, \tU\vert \vy) ]$. Therefore, it is essential to design a variational distribution flexible enough to match $p(\vx, \tF, \tU\vert \vy)$ closely, thus tightening the ELBO. Meanwhile, the variational distribution should be able to make the ELBO more tractable.
For a tight ELBO, the variational distribution for the TGPSSM should mirror the factorization of the true posterior distribution \cite{courts2021gaussian}, $p(\vx, \tF, \tU\vert \vy)$, which can be factorized as
\begin{equation} 
 p(\vx, \tF, \tU\vert \vy) = p(\tF, \tU \vert \vy) p(\x_0 \vert \vy) \prod_{t= 1}^T p(\x_{t} \vert \tilde{\f}_t, \vy), 
\end{equation}
according to the model defined in Eq.~(\ref{eq:joint_dist})  \cite{krishnan2017structured}.  Therefore,  the ideal variational distribution for the TGPSSM is factorized as  
\begin{equation}
	q(\vx, \tF, \tU) =  q(\tF, \tU) q(\x_0) \prod_{t= 1}^T q(\x_{t} \vert \tilde{\f}_t),
	\label{eq:generic_vi_dist}
\end{equation}
where $q(\tF, \tU), q(\x_0)$ and $q(\x_{t} \vert \tilde{\f}_t)$ are the corresponding variational distributions of the latent variables. The generic factorization of $q(\vx, \tF, \tU)$ in Eq.~(\ref{eq:generic_vi_dist}) has been known as the NMF assumption in the GPSSM literature \cite{doerr2018probabilistic}, because it explicitly builds the dependence between the latent states and the transition function values.  However, the NMF assumption is incapable of making the ELBO more tractable, thus bringing a more significant computational cost of learning and inference \cite{ialongo2019overcoming,curi2020structured,lindinger2022laplace}, compared to the MF approximation summarized as follows.
\begin{assumption}[Mean-field assumption]
\label{assumption:MF}
 We assume that the variational distribution, $q(\tF, \tU, \x_{0:T})$, is factorized such that the transition function values, $\{\tF, \tU\}$, and the latent states, $\x_{0:T}$, are independent, which mathematically can be expressed as
\begin{equation}
q(\tF, \tU) q(\x_0) \prod_{t=1}^T q(\x_{t} \vert \tf_t) = q(\tF, \tU) q(\x_{0:T}),
\label{eq:mf_assumption}
\end{equation}
where $q(\tF, \tU)$ and $q(\x_{0:T})$ are the variational distributions that need to be further designed (see next subsection). 
\end{assumption}
\noindent The MF assumption simplifies the variational distribution yet will enable us to handle a more tractable ELBO \cite{eleftheriadis2017identification}, see Section \ref{subsec:mf_tgpssm}.  The algorithms based on the MF assumption correspond to the MF variational learning and inference algorithms.
%

%
%

Due to the integration of normalizing flow, the existing MF variational inference algorithms in the GPSSM literature \cite{frigola2014variational,frigola2015bayesian,mchutchon2015nonlinear,eleftheriadis2017identification} are not directly applicable to the TGPSSM. For example, to make use of the algorithm in \cite{frigola2014variational}, analytical marginalization of the GP transition function values is required. However,  the nonlinear normalizing flow in the TGPSSM makes it impossible to analytically integrate out the corresponding TGP function values, see Remark \ref{remark:non-closed-form} in Section~\ref{ELBO_1}. Additionally, existing works, e.g., \cite{mchutchon2015nonlinear,eleftheriadis2017identification}, directly assume that $q(\x_{0:T})$ is a joint Gaussian distribution with a Markovian structure, which can fail to closely match the posterior distribution of the latent states that can be neither Gaussian nor unimodal. 
Another issue with the existing algorithms is their overemphasis on achieving high ELBO values, rather than learning informative latent state-space representations. As a result, the underlying system dynamics cannot be well captured, leading to a degenerated model inference performance. To address this issue, one must ensure that the learned state representation can reconstruct the observations and, at the same time, the transition function can characterize the state dynamics \cite{gedon2020deep}.  In the following subsections, we propose our variational algorithm to address the aforementioned issues.  By explicitly assuming a non-Gaussian variational distribution and exploiting a constrained optimization framework,  the proposed algorithm helps improve the inference and state-space representation capacities in TGPSSM.

\subsection{Mean-Field Variational Algorithm}
\label{subsec:mf_tgpssm}

By utilizing Theorem \ref{theorem:ips_tgp} and employing the same algebraic tricks as in the sparse variational GP \cite{titsias2009variational,hensman2013gaussian}, we set $q(\tF, \tU) \!=\! q(\tF \vert \tU) q(\tU) \!=\! p(\tF \vert \tU)q(\tU)$; additionally adopting Assumption \ref{assumption:MF}, the generic variational distribution in Eq.~(\ref{eq:generic_vi_dist}) then becomes
\begin{subequations}
	\label{eq:variation_di_tgp}
	\begin{align}
		q(\vx, \tF, \tU) 
		&=  q(\x_{0:T}) \cdot {q(\tU)} \cdot {p(\tF \vert \tU)}\\
		& = q(\x_{0:T}) \cdot  {q(\vu)} \J_{\u} \cdot  p(\vf \vert \vu)  \J_{\f}, \label{eq:variation_di_tgp_2}
	\end{align}
\end{subequations}
where the variational distribution of the inducing points $q(\vu)$ is assumed to be a free-form Gaussian \cite{hensman2013gaussian}, i.e., 
\begin{equation}
	    q(\vu)\!=\! \prod_{d=1}^{d_x} \cN(\{\u_{i, d}\}_{i=1}^M \vert \m_d, \mathbf{L}_{d} \mathbf{L}_{d}^\top ) \!=\! \cN(\vu \vert \mathbf{m}, \mathbf{S}),
	\label{eq:qu_variational}
\end{equation}
with free variational parameters $\mathbf{m} \!=\! [\m_1^\top, ..., \m_{d_x}^\top]^\top \in \mathbb{R}^{M d_x}$ and $\mathbf{S} \!=\! \operatorname{diag}(\mathbf{L}_1 \mathbf{L}_{1}^\top, ..., \mathbf{L}_{d_x} \mathbf{L}_{d_x}^\top) \in \mathbb{R}^{Md_x \times Md_x}$; the variational distribution of the latent states, $q(\x_{0:T})$, is assumed to be Markov-structured, and is modeled by a neural network, a.k.a. inference network, hence the number of variational parameters will not grow linearly with the length of the observation sequence \cite{krishnan2017structured}. More specifically, the structured variational distribution is characterized by the following set of equations, 
\begin{subequations}
\label{eq:MF_qx}
	\begin{align}
		& q(\x_0) = \cN \! \left(\x_0 \vert  \mathbf{m}_{0},  \mathbf{L}_{0} \mathbf{L}_{0}^\top \right)\\
		& q(\x_{0:T})  = q(\x_{0}) \prod_{t=1}^T q(\x_{t} \vert \x_{t-1} )\\
		& q(\x_{t} \vert \x_{t-1}) \!=\! \cN\! \left(  \x_t \vert \Phi_{\bphi}(\x_{t-1}, \y_{1:T}), \bm{\Sigma}_{\bphi}(\x_{t-1}, \y_{1:T}) \right)
	\end{align}
\end{subequations}
where $\Phi_{\bphi}(\cdot)$ and $\bm{\Sigma}_{\bphi}(\cdot)$ are the outputs of the inference network with inputs $\x_{t-1}$ and $\y_{1:T}$, and $\bphi$ denotes the network model parameters; vector ${\m}_{0} \in \mathbb{R}^{d_x}$ and lower-triangular matrix $\mathbf{L}_{0} \in \mathbb{R}^{d_x \times d_x}$ are free variational parameters of $q(\x_0)$.  
Note that the Markovian structure of $q(\x_{0:T})$ is assumed here as a result of Proposition \ref{proposition:optimal_Markov_qx_maintext}, of which the proof can be found in  Section~\ref{ELBO_1}.
\begin{proposition}
	\label{proposition:optimal_Markov_qx_maintext}
	The optimal  state distribution $q^*(\x_{0:T})$ for maximizing the ELBO in the MF case has a Markovian structure.
\end{proposition}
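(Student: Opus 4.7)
The plan is to show that, once the MF assumption has absorbed the transition function into a set of pairwise terms, the induced ELBO becomes an entropy-regularized functional of $q(\x_{0:T})$ whose stationary point is a tree-structured Gibbs distribution, hence Markov.

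First I would plug the MF variational distribution $q(\vx,\tF,\tU)=q(\x_{0:T})\,q(\tU)\,p(\tF\vert\tU)$ into the ELBO in Eq.~(\ref{eq:ELBO_general}) and substitute the joint density factorization of Eq.~(\ref{eq:joint_dist}). The factor $p(\tF\vert\tU)$ then cancels between numerator and denominator, mirroring the standard variational sparse GP trick, leaving
\begin{equation*}
\operatorname{ELBO} \;=\; \mathbb{E}_{q(\x_{0:T})q(\tU)p(\tF\vert\tU)}\!\Bigl[\log p(\x_0) + \sum_{t=1}^{T}\bigl(\log p(\y_t\vert\x_t)+\log p(\x_t\vert\tf_t)\bigr)\Bigr] \;-\; H_q(\x_{0:T}) \;-\; \operatorname{KL}\!\bigl[q(\tU)\,\|\,p(\tU)\bigr],
\end{equation*}
where $H_q$ denotes entropy. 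Because $\tf_t=\tilde f(\x_{t-1})$ depends on the states only through $\x_{t-1}$, the inner expectation over $p(\tf_t\vert\tU,\x_{t-1})$ followed by the expectation over $q(\tU)$ yields a quantity that depends solely on the pair $(\x_{t-1},\x_t)$. Concretely, define $\ell_t(\x_{t-1},\x_t)\triangleq \mathbb{E}_{q(\tf_t\vert\x_{t-1})}\bigl[\log p(\x_t\vert\tf_t)\bigr]$ with $q(\tf_t\vert\x_{t-1})=\int q(\tU)\,p(\tf_t\vert\tU,\x_{t-1})\,d\tU$.

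Next, holding $q(\tU)$ (and $\btheta$) fixed, I would view the ELBO as a functional of $q(\x_{0:T})$ only, of the form
\begin{equation*}
\mathcal{F}[q] \;=\; \mathbb{E}_{q(\x_{0:T})}\!\bigl[g(\x_{0:T})\bigr] \;-\; \mathbb{E}_{q(\x_{0:T})}\!\bigl[\log q(\x_{0:T})\bigr] \;+\; \text{(terms independent of }q(\x_{0:T})\text{)},
\end{equation*}
with $g(\x_{0:T})=\log p(\x_0)+\sum_{t=1}^{T}\bigl[\log p(\y_t\vert\x_t)+\ell_t(\x_{t-1},\x_t)\bigr]$. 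By the Gibbs inequality (equivalently, recognizing $\mathcal{F}[q]=-\operatorname{KL}[q\,\|\,e^{g}/Z]+\log Z$), the unique maximizer over the space of valid densities is
\begin{equation*}
q^{\ast}(\x_{0:T}) \;\propto\; p(\x_0)\,\prod_{t=1}^{T}\exp\!\bigl(\ell_t(\x_{t-1},\x_t)\bigr)\,p(\y_t\vert\x_t).
\end{equation*}

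Finally, I would observe that $q^{\ast}$ is a strictly positive density whose log-potential decomposes into node potentials on $\x_t$ and edge potentials on consecutive pairs $(\x_{t-1},\x_t)$. This is exactly a chain-structured undirected graphical model; by the Hammersley--Clifford factorization on a tree, any such positive distribution admits the directed factorization $q^{\ast}(\x_{0:T})=q^{\ast}(\x_0)\prod_{t=1}^{T}q^{\ast}(\x_t\vert\x_{t-1})$, i.e.\ it is Markov-structured, establishing the claim. The main obstacle I anticipate is arguing cleanly that the inner expectations collapse the state dependence to pairwise terms despite the coupling introduced by the marginal flow $\G_{\btheta_F}(\cdot)$; this requires care in confirming that $\tf_t$ is conditionally independent of $\x_{0:t-2}$ given $\x_{t-1}$ and $\tU$, which follows from the definition $\tf_t=\G_{\btheta_F}(f(\x_{t-1}))$ together with the sparse-GP conditional $p(\f_t\vert\vu,\x_{t-1})$.
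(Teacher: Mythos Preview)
Your proposal is correct and follows essentially the same route as the paper: both reduce the ELBO (with $q(\tU)$ fixed) to an entropy-regularized linear functional of $q(\x_{0:T})$, identify the optimizer as $q^{\ast}(\x_{0:T})\propto p(\x_0)\prod_{t}p(\y_t\vert\x_t)\exp[\Psi(\x_{t-1},\x_t)]$ via the pairwise expected log-transition term, and conclude Markov structure---the paper uses the Euler--Lagrange stationarity condition where you invoke the Gibbs/KL argument, and it appeals to the ``smoothing distribution of a Markov SSM'' where you cite Hammersley--Clifford, but these are stylistic variants of the same argument. One minor slip: in your first displayed ELBO the entropy term should appear as $+H_q(\x_{0:T})$, not $-H_q(\x_{0:T})$; your subsequent expression for $\mathcal{F}[q]$ has the sign right, so the conclusion is unaffected.
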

\begin{remark}
Unlike the works in \cite{mchutchon2015nonlinear, eleftheriadis2017identification} assuming that $q(\x_{0:T})$ is a joint Gaussian distribution with Markovian structure,  the variational distribution $q(\x_{0:T})$  defined in Eq.~(\ref{eq:MF_qx})  can be non-Gaussian due to the newly introduced inference network, making the variational distribution more flexible to match multimodal posterior distribution of the latent states.
 It is also worth noting that unlike previous work in \cite{mattos2019stochastic} that considers a full factorization over $q(\x_{0:T})$, i.e., $q(\x_{0:T}) = \prod_{t=0}^T q(\x_t)$, in this paper, we analyze the structure of the optimal distribution, $q^*(\x_{0:T})$, and adopt the optimal structure for the variational distribution $q(\x_{0:T})$.
\end{remark}

Based on the model defined in Eq.~(\ref{eq:joint_dist}) and the corresponding variational distributions, we can ultimately obtain the ELBO, as summarized in the following theorem.
\begin{theorem}[Evidence lower bound]
	\label{thm:ELBO_mf}
	Under Assumption \ref{assumption:MF}, and the variational distributions in Eqs.~(\ref{eq:variation_di_tgp}, \ref{eq:qu_variational}, \ref{eq:MF_qx}),  the ELBO of the TGPSSM defined in Eq.~(\ref{eq:ELBO_general}) is given by 
	\begin{subequations}
		\label{eq:elbo_tgpssm_mf}
		\begin{align}
			& \operatorname{ELBO} =  \nonumber \\
			&
			 - \operatorname{KL}\left[ q(\x_0) \| p(\x_0) \right]  \label{eq:term2} \\
			&  
		   	-\operatorname{KL}\left[ q(\vu) \| p(\vu) \right] \label{eq:term3}  \\
			&  
			 +  \sum_{t=1}^T \mathbb{E}_{q(\x_{t-1})}\left[ \frac{d_x}{2} \log (2 \pi)+\frac{1}{2} \log \left| \bm{\Sigma}_{\x_t} \right|+\frac{1}{2} d_x  \right] 
			 \label{eq:term4} \\
			&  +\sum_{t=1}^T  \E_{q(\x_{t-1:t}) q(\f_t) }  \left[  \log p(\x_t \vert \G_{\btheta_{F}}(\f_t))\right] \label{eq:term5} \\
			&   +	\sum_{t=1}^T  \underset{q(\x_{t-1})}{ \mathbb{E} } \left[ \log \mathcal{N}(\y_t \vert \bm{C}  \Phi_{\x_t}, \bm{R}) \!-\! \frac{1}{2}\operatorname{tr}\left[ \bm{R}^{-1} (\bm{C \Sigma_{\x_t} C^\top})\right] \right]
			\label{eq:term1} 
		\end{align}
	\end{subequations}
where $\Phi_{\x_t}\triangleq  \Phi_{\phi}(\x_{t-1}, \y_{1:T})$ and $\bm{\Sigma}_{\x_t} \triangleq \bm{\Sigma}_{\phi}(\x_{t-1}, \y_{1:T})$,  for notation brevity. 
\end{theorem}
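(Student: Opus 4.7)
The plan is to start directly from the generic ELBO in Eq.~(\ref{eq:ELBO_general}), substitute the joint TGPSSM density from Eq.~(\ref{eq:joint_dist}) together with the structured variational family in Eqs.~(\ref{eq:variation_di_tgp}), (\ref{eq:qu_variational}) and (\ref{eq:MF_qx}), and then decompose the log-ratio into five cleanly separated expectations that will correspond term-by-term to (\ref{eq:term2})--(\ref{eq:term1}).

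First I would exploit the usual sparse-GP algebraic trick built into our variational family: since $q(\tF\mid\tU)=p(\tF\mid\tU)$ by construction, the conditional TGP factor immediately cancels between numerator and denominator in the log-ratio, so only $p(\tU)\,p(\x_0)\,\prod_t p(\x_t\mid\tf_t)\,p(\y_t\mid\x_t)$ over $q(\x_{0:T})\,q(\tU)$ remains. Using the Markov factorization of $q(\x_{0:T})$ from Eq.~(\ref{eq:MF_qx}), I then split the log-ratio into the five additive pieces: (i) $\log p(\x_0)-\log q(\x_0)$; (ii) $\log p(\tU)-\log q(\tU)$; (iii) $-\sum_t \log q(\x_t\mid\x_{t-1})$; (iv) $\sum_t \log p(\x_t\mid\tf_t)$; and (v) $\sum_t \log p(\y_t\mid\x_t)$. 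Taking expectations term by term immediately yields $-\operatorname{KL}[q(\x_0)\|p(\x_0)]$ for (i), matching Eq.~(\ref{eq:term2}).

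The main technical subtlety sits in piece (ii). Because $q(\tU)=q(\vu)\,\J_{\u}$ and $p(\tU)=p(\vu)\,\J_{\u}$ share the same Jacobian of the bijective marginal flow $\G_{\btheta_F}(\cdot)$, the Jacobian factor cancels inside the log-ratio, so the change-of-variables formula gives $\operatorname{KL}[q(\tU)\|p(\tU)]=\operatorname{KL}[q(\vu)\|p(\vu)]$, which is precisely the invariance of KL under bijections applied to the sparse inducing variables; this reproduces Eq.~(\ref{eq:term3}) and has a closed form between the two Gaussians $q(\vu)$ and $p(\vu)$. For piece (iii), I would use the differential entropy of $q(\x_t\mid\x_{t-1})=\cN(\omega_{\x_t},\Sigma_{\x_t})$, namely $\tfrac{d_x}{2}\log(2\pi)+\tfrac12\log|\Sigma_{\x_t}|+\tfrac{d_x}{2}$, and then take the outer expectation with respect to $q(\x_{t-1})$, since $\omega_{\x_t}$ and $\Sigma_{\x_t}$ are inference-network outputs depending on $\x_{t-1}$. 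This gives Eq.~(\ref{eq:term4}). Piece (iv) is left in its generic expectation form, observing that the relevant marginal under our variational family is $q(\x_{t-1:t},\f_t)=q(\x_{t-1})\,q(\x_t\mid\x_{t-1})\,q(\f_t\mid\x_{t-1})$, where $q(\f_t\mid\x_{t-1})$ is obtained by marginalizing $\vu$ against $q(\vu)$ through the sparse-GP conditional; this reproduces Eq.~(\ref{eq:term5}) exactly as stated.

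Finally, for piece (v), I would use the fact that $p(\y_t\mid\x_t)=\cN(\y_t\mid\bm{C}\x_t,\bm{R})$ is quadratic in $\x_t$ and that under $q$ the conditional $q(\x_t\mid\x_{t-1})$ is Gaussian with mean $\omega_{\x_t}$ and covariance $\Sigma_{\x_t}$. The standard Gaussian identity $\mathbb{E}_{\x_t\sim\cN(\omega_{\x_t},\Sigma_{\x_t})}[\log\cN(\y_t\mid\bm{C}\x_t,\bm{R})]=\log\cN(\y_t\mid\bm{C}\omega_{\x_t},\bm{R})-\tfrac12\operatorname{tr}[\bm{R}^{-1}\bm{C}\Sigma_{\x_t}\bm{C}^\top]$, followed by the outer expectation over $q(\x_{t-1})$, produces Eq.~(\ref{eq:term1}) verbatim. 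Assembling the five contributions gives the claimed ELBO. I expect the only real obstacle to be bookkeeping: being explicit about how the Jacobians $\J_{\f}$ and $\J_{\u}$ cancel in the two ratios (one inside $q(\tF\mid\tU)/p(\tF\mid\tU)$ and one inside $q(\tU)/p(\tU)$) so that every remaining term depends only on the standard sparse-GP quantities $p(\vu)$, $q(\vu)$, and the transformed conditionals $p(\x_t\mid\G_{\btheta_F}(\f_t))$, with no leftover flow determinants.
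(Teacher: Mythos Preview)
Your proposal is correct and follows essentially the same route as the paper's proof: start from the generic ELBO, cancel $q(\tF\mid\tU)=p(\tF\mid\tU)$, split the remaining log-ratio into the same five pieces, and evaluate each using, respectively, the Gaussian KL formula, invariance of KL under the bijection $\G_{\btheta_F}$ (the paper phrases this via the LOTUS rule rather than Jacobian cancellation, but it is the same observation), the Gaussian entropy formula, and the expected log-Gaussian likelihood identity. One minor point of bookkeeping: in your final paragraph you speak of $\J_{\f}$ cancelling inside $q(\tF\mid\tU)/p(\tF\mid\tU)$, but in fact that entire conditional factor cancels wholesale since $q(\tF\mid\tU)=p(\tF\mid\tU)$ by construction; the only place a change of variables is genuinely needed is in converting the expectation of $\log p(\x_t\mid\tf_t)$ over $q(\tF,\tU)$ into an expectation of $\log p(\x_t\mid\G_{\btheta_F}(\f_t))$ over $q(\vf,\vu)$, which you state correctly in piece~(iv).
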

\begin{proof}
	See  Section~\ref{ELBO_1}.  
	\vspace{-.03in}
\end{proof}

\begin{remark}[Interpretability of ELBO]
 Each component of the ELBO derived in Theorem \ref{thm:ELBO_mf}  is interpretable.  Their physical meanings are given as follows:
 \begin{itemize}
 	\item The KL term in Eq.~(\ref{eq:term2})  represents a regularization term for $q(\x_0)$, which discourages $q(\x_0)$ from deviating too much from the prior distribution $p(\x_0) = \cN(\x_0 \vert \bm{0}, \mathbf{I})$;
 	\item  Eq.~(\ref{eq:term3})  represents a regularization term for the GP transition surrogate $q(\vu)$, which discourages $q(\vu)$ from deviating too much from the GP prior $p(\vu)$; 
 	\item  Eq.~(\ref{eq:term4})  is the differential entropy term of the latent state trajectory. Maximizing the ELBO essentially encourages ``stretching'' every $q(\x_t)$ so that the approximated posterior distribution $q(\x_{0:T})$ will not be overly tight; 
 	\item  Eq.~(\ref{eq:term5})  represents the reconstruction of the latent state trajectory, which encourages the transition function  $\G_{\btheta_{F}}(\cdot)$ to fit the latent states $\x_{t} \sim q(\x_t\vert \x_{t-1}), \forall t$. In other words, this term measures the quality of learning/fitting the underlying system dynamics. Empowered by the normalizing flow, the TGP model in the TGPSSM is expected to be able to capture more complex system dynamics;
 	\item  Eq.~(\ref{eq:term1}) represents the data reconstruction error, which encourages any state trajectory $\x_{0:T}$ from $q(\x_{0:T})$ to accurately reconstruct the observations.
 \end{itemize}
\end{remark}
Note that there is no analytical form for Eq.~(\ref{eq:term5}), we thus use the following Monte-Carlo approximation:
\begin{align}
	& \text{Eq.(\ref{eq:term5})}  \approx  \sum_{t=1}^T  \E_{q(\x_{t-1:t}) }  \left[  \frac{1}{n} \sum_{i=1}^n \log p(\x_t \vert \G_{\btheta_{F}}(\f_t^{(i)}))\right] \nonumber \\
	& = \sum_{t=1}^T  \E_{q(\x_{t-1:t}) }  \left[  \frac{1}{n} \sum_{i=1}^n \log \cN(\x_t \vert \G_{\btheta_{F}}(\f_t^{(i)}),  \bm{Q})  \right], \label{eq:apprx_term5}
\end{align}
where $\{\f_t^{(i)}\}_{i=1}^n, n \in \mathbb{N}$, are generated from $q(\f_t)$, and 
\begin{align}
	q(\f_{t}) &= \int q(\f_{t}, \vu)  \mathrm{d} \vu= \mathcal{N}\left(\f_t \vert  {\m}_{\f_t}, \bm{\Sigma_{\f_t}} \right), \label{eq:q(f_0_t)}
\end{align}
and, with a bit abuse of notation, 
\begin{align}
	& {\m}_{\f_t} \!=\! K_{\x_{t-1}, \vz} K_{\vz, \vz}^{-1} {\m},\\
	& \bm{\Sigma_{\f_t}} \!=\! K_{\x_{t-1}, \x_{t-1}} \!-\!  K_{\x_{t-1}, \vz} K_{\vz, \vz}^{-1}\left[K_{\vz, \vz} \!-\! \mathbf{S}\right] K_{\vz, \vz}^{-1} K_{\vz, \x_{t-1}}. \nonumber
\end{align} 
Lastly, we use  the reparametrization trick \cite{kingma2019introduction} to compute the expectations w.r.t. the intractable $q(\x_{t-1:t})$. By sampling trajectories from $q(\x_{0:T})$ and evaluating the integrands in Eq.~(\ref{eq:elbo_tgpssm_mf}), we can obtain an estimate for the ELBO.
Together with all, we can apply gradient ascent-based methods \cite{kingma2015adam} to maximize the ELBO w.r.t. the model parameters $\btheta = [\bm{\theta}_{gp}, \btheta_{F}, \bm{Q}, \bm{R} ]$ and the variational parameters $\bm{\zeta} = [ \mathbf{m}_0, \mathbf{L}_0, \bphi, \mathbf{m}, \mathbf{S}, \vz]$. The gradient information can be back-propagated owning to the reparametrization trick \cite{kingma2019introduction}.   The pseudo code for implementing the MF variational learning and inference algorithm is summarized in Algorithm \ref{alg_tgpssm1}. 
\begin{remark} \label{remark:computaionalcomplexity}
The computational complexity of Algorithm \ref{alg_tgpssm1} lies in the computations of GP, normalizing flow, and inference network.  Compared to the $\mathcal{O}{(T d_x M^2)}$ computational costs in the GPSSMs using inference networks \cite{eleftheriadis2017identification,ialongo2019overcoming,curi2020structured} (assuming the GP is the main computational bottleneck), Algorithm \ref{alg_tgpssm1} for the TGPSSM only gently increases the computational complexity due to the use of normalizing flow. In fact, in the case of applying the elementary flows, the additional computational complexity it brings is negligible. Moreover, for large datasets with a large $T$, stochastic gradient optimization methods can be employed to make the algorithm scalable; See Appendix \ref{subsec:appex_scalability} for more detailed discussions.
\end{remark}

%
%
\begin{algorithm}[t!] 
	\caption{MF Variational Learning for TGPSSM} 
	\label{alg_tgpssm1} 
	\KwIn{Dataset $\{\y_{1:T}\}$.  Initial parameters $\btheta^{(0)}$, $\bm{\zeta}^{(0)}$.}
	\KwOut{Parameters $\btheta$ and $\bm{\zeta}$.} 
	\While{not converged}
	{	
		Evaluate Eq.~(\ref{eq:term2}) and Eq.~(\ref{eq:term3});\\
		Sample state trajectory $\x_{0:T} \! \sim  \! q(\x_{0:T})$ (Eq.~(\ref{eq:MF_qx}));\\
		  \For{$t=1:T$}{
			  Evaluate data reconstruction term, Eq.(\ref{eq:term1});\\
		    Evaluate entropy term,  Eq.~(\ref{eq:term4});\\
			Compute $q(\f_t)$ using Eq.~(\ref{eq:q(f_0_t)});\\
			Sample $\f_{t}^{(i)} \sim q(\f_{t}), i = 1, 2, ..., n$;\\
			Evaluate state reconstruction term, Eq.~(\ref{eq:apprx_term5});\\
		}
		Evaluate ELBO of Eq.~(\ref{eq:elbo_tgpssm_mf});\\
		Estimate the Monte-Carlo gradient w.r.t. $\btheta$ and $\bzeta$;\\
		Update $\btheta$ and $\bzeta$ using Adam \cite{kingma2015adam};
	}
\end{algorithm}	
%
%
\subsection{Taming TGPSSM with Constrained Optimization} \label{subsec:mf_cotgpssm}
Note that Algorithm \ref{alg_tgpssm1} optimizes both the parameters $\btheta$ and $\bzeta$ simultaneously.  However,  a local optimal obtained from maximizing the ELBO does not necessarily imply that the model has learned an appropriate latent state-space representations of the underlying system \cite{alemi2018fixing}.
Similar works in the area of variational autoencoder (VAE) address this issue by introducing heuristic weighting schedules, e.g., $\beta$-VAEs \cite{higgins2016beta}, where hand-crafted annealing of KL-terms is often used to achieve the desired performance. However, such solutions are sensitive to changes in model architecture and/or dataset \cite{knoblauch2022optimization}.
Another line of work reformulates the VAE objective function as the Lagrangian of a constrained optimization problem \cite{klushyn2019learning}. Inspired by this, we propose a constrained optimization framework to learn more informative TGPSSM representations of the underlying system dynamics.

For ease of discussion in the sequel, we denote the data reconstruction term, $\sum_{t=1}^T \E_{q(\x_{t})} \left[\log p(\y_t \vert \x_t)\right]$ (Eq.~(\ref{eq:term1}) in the ELBO), as $\mathcal{R}$.
To learn a good state-space representation that is able to reconstruct the observed data $\y_{1:T}$, we add an additional constraint on $\mathcal{R}$  to control the learning behavior. More specifically, we solve the following constrained optimization problem: 
\begin{equation}
	\min_{\btheta, \bzeta} \ -\operatorname{ELBO},  \quad  \operatorname{s.t.} \  \mathcal{R} \ge  \mathcal{R}_0,
	\label{eq:CO_ELBO}
\end{equation}
where $\mathcal{R}_0$ is the desired quality of observation reconstruction. The constraint on $\mathcal{R}$ serves as a guide for the inference network to learn the latent states that can accurately reconstruct the observations. This, in turn, provides a sound foundation for learning the underlying system dynamics. For further insights on how to empirically choose an appropriate value of $\mathcal{R}_0$, we refer the reader to Appendix~\ref{subsec:selection_R0_appx}. Note that compared to the unconstrained optimization problem, $\max_{\btheta, \bzeta} \operatorname{ELBO}$, solved by Algorithm \ref{alg_tgpssm1}, the constrained optimization problem formulated in Eq.~(\ref{eq:CO_ELBO}) can be interpreted as narrowing down the solution space of $\{\btheta, \bzeta\}$ by imposing an additional constraint on the data reconstruction term. As a result, once the constrained optimization problem is successfully solved, the resulting ELBO can still be guaranteed to be a valid lower bound of the model evidence.

To solve the constrained optimization problem, we can utilize the standard method of Lagrange multipliers \cite{theodoridis2020machine}, which involves introducing a Lagrange multiplier $\beta \geq 0$ and creating a new function called the Lagrangian $\mathcal{L}_{\beta}$. The Lagrangian is then optimized using a min-max optimization scheme, 
\begin{equation}
	\min_{\btheta, \bzeta}  \max_{\beta \ge 0} \ \mathcal{L}_{\beta} = -\operatorname{ELBO} + \beta (\mathcal{R}_0 - \mathcal{R}),
\end{equation}
where the parameters $\{\btheta, \bzeta\}$ are optimized through gradient descent, while the Lagrange multiplier $\beta$ is updated by 
\begin{equation}
\beta^{(i)} \leftarrow \beta^{(i-1)} \cdot  \exp\left( -\eta  \cdot (\hat{\mathcal R}^{(i)} - {\mathcal R}_0)\right)
\end{equation}
to enforce a non-negative $\beta$, where $\eta$ is the associated learning rate and $\hat{\mathcal{R}}^{(i)}$ is an estimate of the data reconstruction term in the $i$-th iteration.  In the context of the stochastic gradient training, $\hat{\mathcal{R}}^{(i)}$ can be estimated using a moving average, i.e., $\hat{\mathcal{R}}^{(i)} = (1-\alpha) \cdot \hat{\mathcal{R}}^{(i)}_{batch}  + \alpha \cdot \hat{\mathcal{R}}^{(i-1)}$, where $\alpha$ is a prefixed hyperparameter, and $\hat{\mathcal{R}}^{(i)}_{batch}$ is the reconstruction term estimated using current batch data. Throughout this paper, we empirically find that setting the values of $\alpha=0.5$ and $\eta = 0.001$ can consistently yield reasonable performance.  For clarity, we summarize the pseudo code for implementing the learning algorithm with constrained optimization framework in Algorithm \ref{alg_tgpssm2}.  It is obvious that Algorithm \ref{alg_tgpssm2} admits the same computational complexity as Algorithm \ref{alg_tgpssm1}.
\begin{remark}
Instead of manually tweaking the abstract hyperparameter $\beta$ that implicitly affects the model learning/fitting  performance as seen in the $\beta$-VAEs,  Algorithm \ref{alg_tgpssm2} leveraging the method of Lagrange multipliers and incorporating a more interpretable constraint, namely the desired data reconstruction quality, enables a more meticulous and principled update of the $\beta$.
Moreover, unlike the constrained optimization problem modeled in the VAEs  \cite{klushyn2019learning} that requires $0\le \beta \le 1$ to guarantee a valid lower bound of the model evidence,   our constrained optimization problem does not require setting an upper bound constraint for $\beta$. 
\end{remark}
\begin{algorithm}[t!] 
	\caption{Taming Variational TGPSSM with Constrained Optimization} 
	\label{alg_tgpssm2} 
	\KwIn{Dataset $\{\y_{1:T}\}$.  Initial parameters $\btheta^{(0)}$, $\bm{\zeta}^{(0)}$, $\beta^{(0)} = 1$, $i = 0$, $\alpha = 0.5$, $\eta = 0.001$. }
	\KwOut{Parameters $\btheta$ and $\bm{\zeta}$.} 
	\While{not converged}
	{	
		Estimate $\!\hat{\mathcal{R}}^{(i)}_{batch}\!$ using the $i$-th batch data (Eq.(\ref{eq:term1}));\\
		Estimate $\hat{\mathcal{R}}^{(i)} = (1-\alpha) \cdot \hat{\mathcal{R}}^{(i)}_{batch}  + \alpha \cdot \hat{\mathcal{R}}^{(i-1)}$;\\
		Update $\beta^{(i)} \leftarrow \beta^{(i-1)}  \cdot \exp  \left[ -\eta   \cdot \left(\mathcal{R}^{(i)} - \mathcal{R}_0\right) \right]$;\\
		Optimize $\mathcal{L}_{\beta}$ w.r.t. $\btheta$, $\bzeta$ as shown in Algorithm \ref{alg_tgpssm1};\\
		$i \leftarrow i + 1$;
	}
\end{algorithm}

\section{Experimental Results}
\label{sec:experimental-results}
This section presents a comprehensive numerical study of the proposed variational algorithms for TGPSSMs, which includes evaluating the performance of the TGPSSM on multiple datasets and comparing it with various benchmark algorithms. Additional details regarding the experimental setup can be found in Appendix, and the accompanying source code is publicly available online\footnote{\url{https://github.com/zhidilin/TGPSSM}}.

%
\begin{figure}[t!]
	\centering
	\subfloat[``Kink'' dynamical function]{\label{fig:KinkFunction}\includegraphics[width =0.22\textwidth, height= 3.3cm]{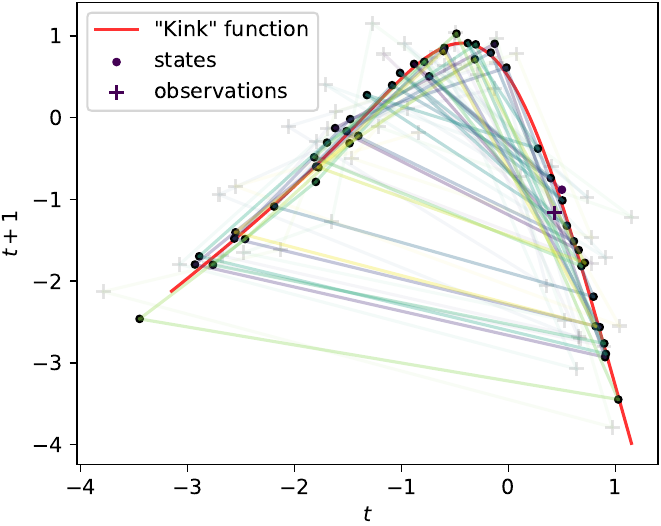}}  \hspace{.04cm}
	\subfloat[``Kink-step'' dynamical function]{\label{fig:ksfunc}\includegraphics[width =0.26\textwidth]{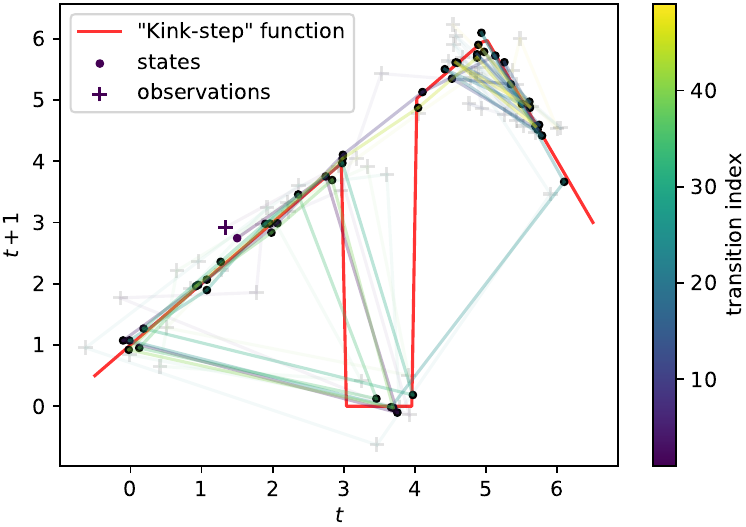}}
	\caption{Two 1-D dynamical systems and the generated 50 latent states \& observations.}
	\label{fig:1D_dataset}
\end{figure}
\begin{figure*}[t!]
	\centering
	\subfloat[JO-GPSSM (MSE: 2.2313)]{\label{fig:gp_ksfunc} \includegraphics[width=0.489\columnwidth]{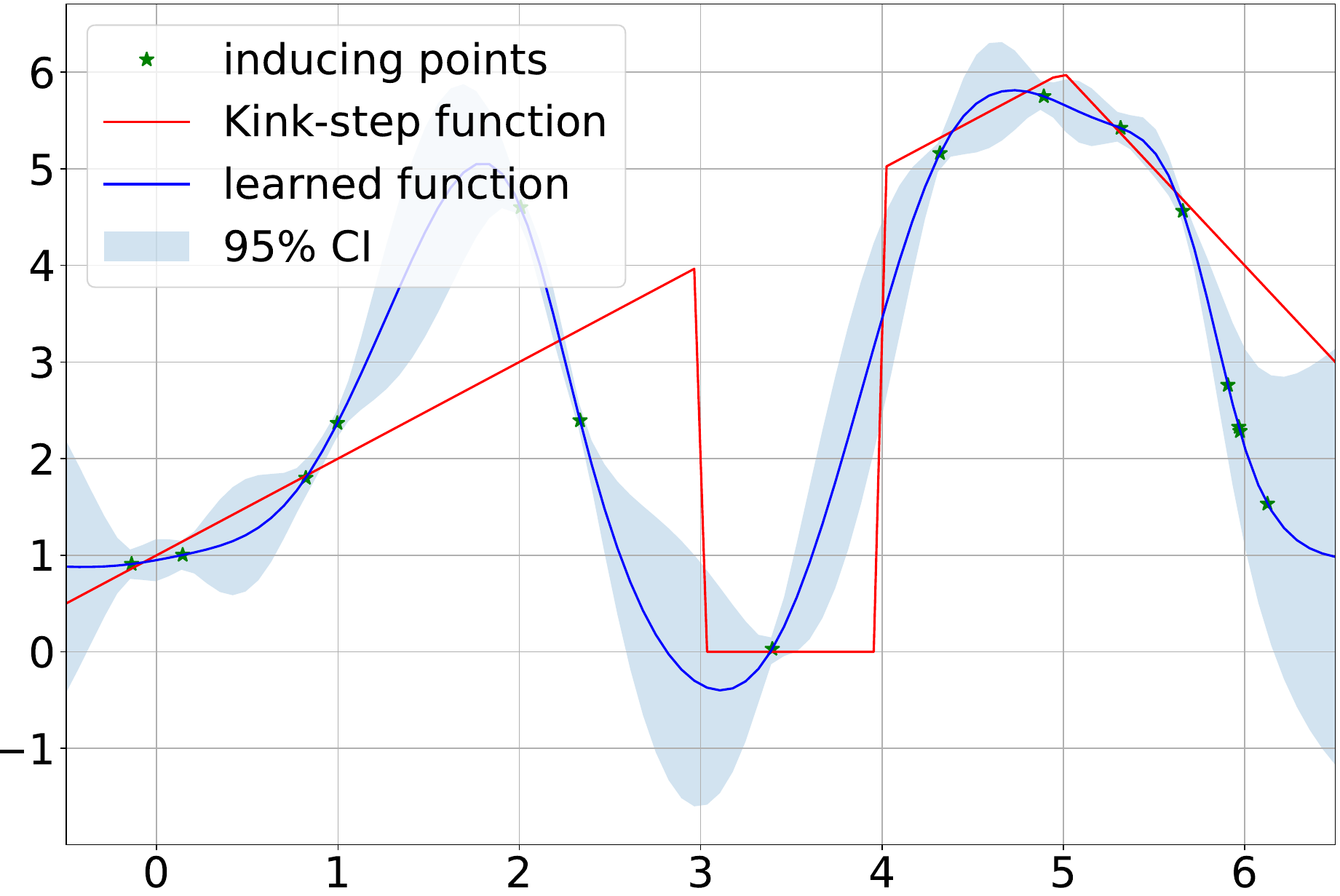}}
	\subfloat[CO-GPSSM (MSE: {0.3537}) ]{\label{fig:cogp_ksfunc} \includegraphics[width=0.485\columnwidth]{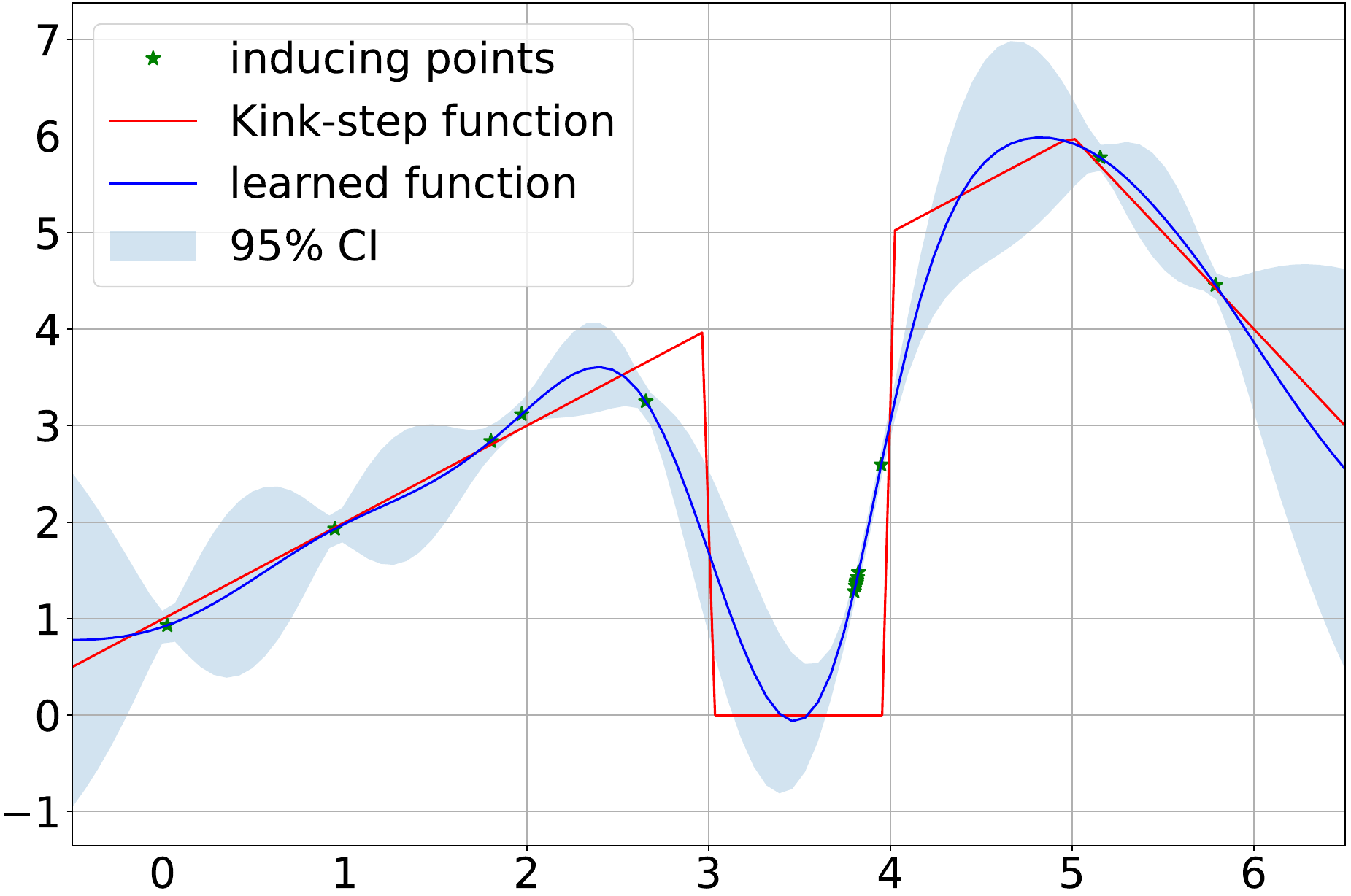}}
	\subfloat[JO-TGPSSM (MSE: 0.4049)]{\label{fig:tgp_ksfunc} \includegraphics[width=0.485\columnwidth]{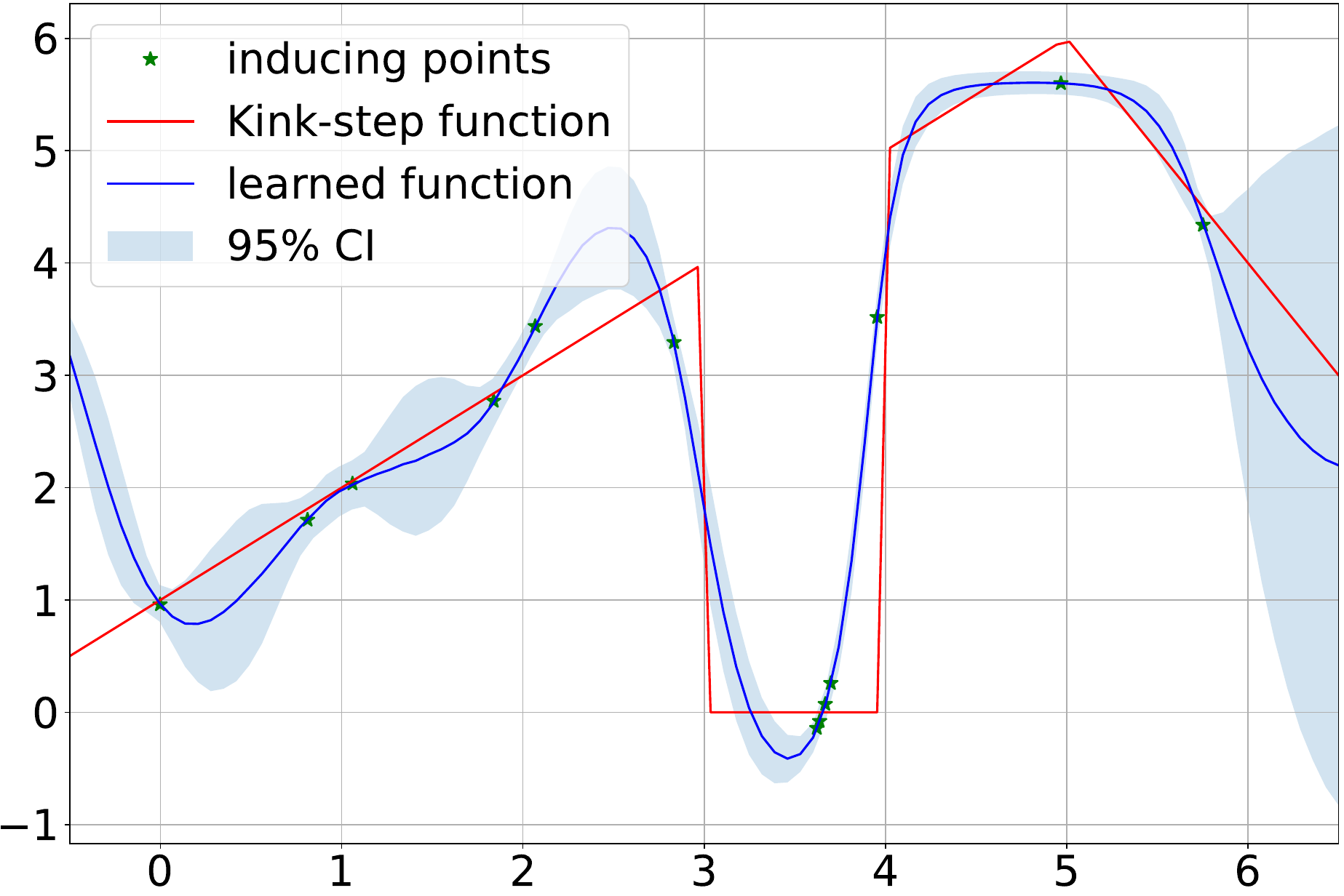}}
	\subfloat[CO-TGPSSM (MSE: {0.2319}) ]{\label{fig:cotgp_ksfunc} \includegraphics[width=0.485\columnwidth]{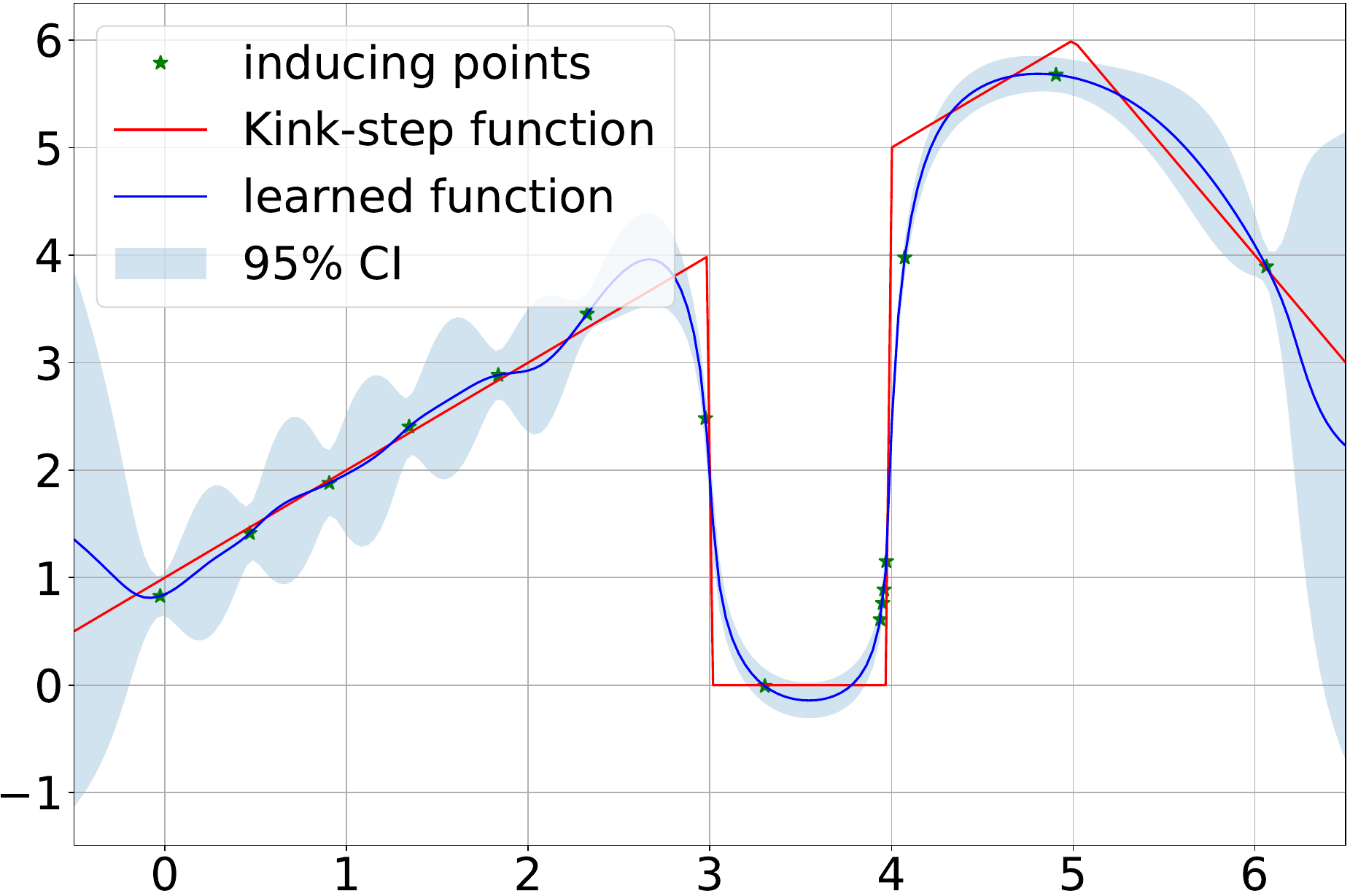}}
	\caption{Learning the ``kink-step'' dynamical system using GPSSMs and TGPSSMs.}
	\label{fig:1Ddataresults_SSMs}
\end{figure*}
%
\subsection{Learning the Dynamics} \label{subsec:1-Ddata}
This subsection aims to showcase the superior capability of TGPSSM for learning latent dynamics. To assess its performance, we employ two 1-D synthetic datasets, namely, the \textit{kink} function dataset and the \textit{kink-step} function dataset.  Details about the two datasets are given as follows.

\subsubsection{\textbf{Kink function dataset}}
This dataset is generated from a dynamical system that is described by Eq.~(\ref{eq:kink_function})
\begin{subequations}	
	\label{eq:kink_function}
	\begin{align}
		& \x_{t+1}  = \underbrace{0.8 + \left(\x_t+0.2\right)\left[1 - \frac{5}{1+ \exp(-2\x_t)}\right]}_{\triangleq \text{ ``kink function'' }f(\x_t)} + \mathbf{v}_{t},\\
		& \y_t = \x_t + \mathbf{e}_t, \  \mathbf{v}_{t} \sim \cN(0, 0.01), \  \mathbf{e}_t\sim \cN(0, 0.1),
	\end{align}
\end{subequations}
where the nonlinear, smooth and time-invariant transition function $f(\x_t)$ is called the ``kink'' function, as depicted in Fig.~\ref{fig:KinkFunction} (and Fig.~\ref{fig:KinkFunction_appx} in Appendix~\ref{appx:more_experimental_results}).
It is noted that this dynamical system has been widely used in the GPSSM literature to verify the goodness of the learned GP transition posterior \cite{lindinger2022laplace}.

\subsubsection{\textbf{Kink-step function dataset}} Similar to the kink function dataset, we create the kink-step function dataset using a modified dynamical system, where the nonlinear time-invariant transition function,  called the ``kink-step'' function, is a nonsmooth piecewise function with both a step function and a kink function, as described by the following equations and depicted in Fig.~\ref{fig:ksfunc} (and Fig.~\ref{fig:ksfunc_appx} in Appendix~\ref{appx:more_experimental_results}),
\begin{subequations}	
	\label{eq:kinkstep_function}
	\begin{align}
		& \x_{t+1}  \!=\! 
		\left\{ \!\!
		\begin{array}{ll}
			\x_t + 1 + \mathbf{v}_{t},        & \operatorname{if} \   \x_t<3  \ \text{or}  \  4 \le \x_t < 5  \\
			0 + \mathbf{v}_{t},                   & \operatorname{if} \   3 \le \x_t < 4 \\
			16 - 2\x_t  + \mathbf{v}_{t},   & \operatorname{if} \   \x_t \ge 5  \\
		\end{array}  \right. \\
		& \y_t = \x_t + \mathbf{e}_t, \quad  \mathbf{v}_{t} \sim \cN(0, 0.01), \quad  \mathbf{e}_t\sim \cN(0, 0.1). 
	\end{align}
\end{subequations}
This modified dataset is used to test the performance of the proposed TGPSSM when there are complex/sharp transitions in the underlying dynamics.

For the aforementioned two dynamical systems, we independently generate $30$ sequences of length $T=20$ noisy observations $\y_t$ to train the following SSMs.
\begin{itemize}
	\item \textbf{JO-TGPSSM}: The TGPSSM trained using the joint optimization (JO) framework (cf. Algorithm \ref{alg_tgpssm1})
	\item\textbf{CO-TGPSS}: The TGPSSM trained using the constrained optimization (CO) framework (cf. Algorithm \ref{alg_tgpssm2})
\end{itemize}
We compare our TGPSSMs with the following competitors.
\begin{itemize}
	\item \textbf{BS-GPSSM}: The GPSSM baseline with a Gaussian variational distribution for the latent states \cite{eleftheriadis2017identification, mchutchon2015nonlinear}.
	\item \textbf{JO-GPSSM}: The GPSSM with a non-Gaussian variational distribution for the latent states; it corresponds to the JO-TGPSSM without using normalizing flow in the transition function modeling.
	\item \textbf{CO-GPSSM}: The GPSSM with a non-Gaussian variational distribution for the latent states and trained using the constrained optimization framework; it corresponds to the CO-TGPSSM without using normalizing flow in the transition function modeling.
	\item \textbf{PRSSM}: The GPSSM trained using the NMF variational algorithm proposed in \cite{doerr2018probabilistic}.
	\item \textbf{ODGPSSM}: The output-dependent GPSSM trained using the NMF learning algorithm proposed in \cite{lin2022output}.
\end{itemize}
For all models, the size of the inducing points is set to $15$, and the GP model is equipped with the standard SE kernel. For TGPSSMs, we use a normalizing flow by concatenating $3$ SAL flows with $1$ Tanh flow (see Appendix \ref{subsec:appx_flow_table}), which only involves $16$ additional flow parameters. 
For the JO-TGPSSM and JO-GPSSM, we found that heuristically setting $\beta = T$ as the annealing factor of the data reconstruction term helps to achieve reasonably good performance. All the models are trained using the full gradient information and the same number of training epochs (1500 epochs). More details about the experimental configurations can be found in the accompanying source code. Next, we conduct a few ablation studies.
\begin{table}[t!]
	\centering
	\caption{Dynamic learning results of different probabilistic SSMs (MSE)}
	\setlength{\tabcolsep}{3mm}{
		\centering
		\begin{tabular}{ r  cc}
			\toprule
			Model & ``Kink'' function & ``Kink-step''  function  \\
			\midrule
			\textbf{BS-GPSSM}   &0.3059   &3.0663    \\
			\textbf{JO-GPSSM} &0.0364   & 2.2313    \\
			\textbf{CO-GPSSM}  &0.0410   &\textbf{0.3537}    \\
			\midrule
			\textbf{PRSSM}   &1.5605   & 3.4517   \\
			\textbf{ODGPSSM}  &1.8431   &3.2610    \\
			\midrule
			\textbf{JO-TGPSSM}  &\textbf{0.0361}  &  0.4049  \\
			\textbf{CO-TGPSSM} &\textbf{0.0351}  & \textbf{0.2319}  \\
			\bottomrule
		\end{tabular}
	}\label{tab:synthetic_dataset}
	\vspace{-.08in}
\end{table}
\textbf{TGPSSMs vs. GPSSMs}. Table \ref{tab:synthetic_dataset} (and Fig.~\ref{fig:1Ddataresults_SSMs_appx} in Appendix~\ref{appx:more_experimental_results}) reports the latent transition function learning performance for different SSMs in terms of the data fitting mean squared error (MSE). It can be observed that the performance of TGPSSMs and GPSSMs (with the exception of PRSSM and ODGPSSM) appears to be satisfactory when tested on the kink function dataset. This is because the underlying dynamic function is smooth and differentiable, see Fig.~\ref{fig:KinkFunction}, making it easy to accurately represent using the standard GP model.
%
Nevertheless, it should be noted that most GPSSMs struggle to accurately capture the complex and sharp dynamics of the kink-step function. In contrast, TGPSSMs demonstrate superior performance in this regard, owing to the exceptional transformation capability of the normalizing flow. This is evidenced by the illustrative results of JO-TGPSSM and CO-TGPSSM, which consistently outperform their corresponding GPSSMs, as depicted in Figure~\ref{fig:1Ddataresults_SSMs}.
In fact, the TGPSSM incorporates a normalizing flow that enables automatic transformation of the standard GP model to accurately represent the underlying dynamic function.  When the underlying system dynamics are simple, e.g., the differentiable and smooth kink function, shown in Figs.~\ref{fig:tgp_kinkfunc_} and \ref{fig:tgp_kinkfunc_subplot}, the normalizing flow in the TGPSSM behaves like an affine transformation, degrading the TGP into the standard GP model. While for the kink-step function, which is too complex for the standard GP to capture, the normalizing flow in the TGPSSM automatically transforms the GP model to fit the intricate dynamics, as revealed in Figs.~\ref{fig:tgp_ksfunc_} and \ref{fig:tgp_ksfunc_subplot}. These results empirically validate that the flexible TGPSSM is a unified model framework for the standard probabilistic SSMs, including the GPSSM.

It is worth noting that GPSSMs adopting the NMF algorithms, such as PRSSM and ODGPSSM, demonstrate consistently inferior performance.  This may be attributed to the fact that NMF algorithms typically require a larger number of training epochs. For instance, the NMF algorithm proposed in \cite{ialongo2019overcoming} required more than $30000$ training epochs to converge when applied to the kink function dataset. Additionally, existing NMF algorithms necessitate careful initialization of model parameters, making the training process less reliable \cite{lindinger2022laplace}.
In contrast, the TGPSSMs based on the MF algorithm require much simpler parameter initialization and are able to achieve the desired performance by using only $1500$ training epochs, thus demonstrating the superiority in terms of training and learning performance. 

%
\begin{figure}[t!]
	\centering
	\subfloat[Learning kink function]{\label{fig:tgp_kinkfunc_} \includegraphics[width=0.65\columnwidth]{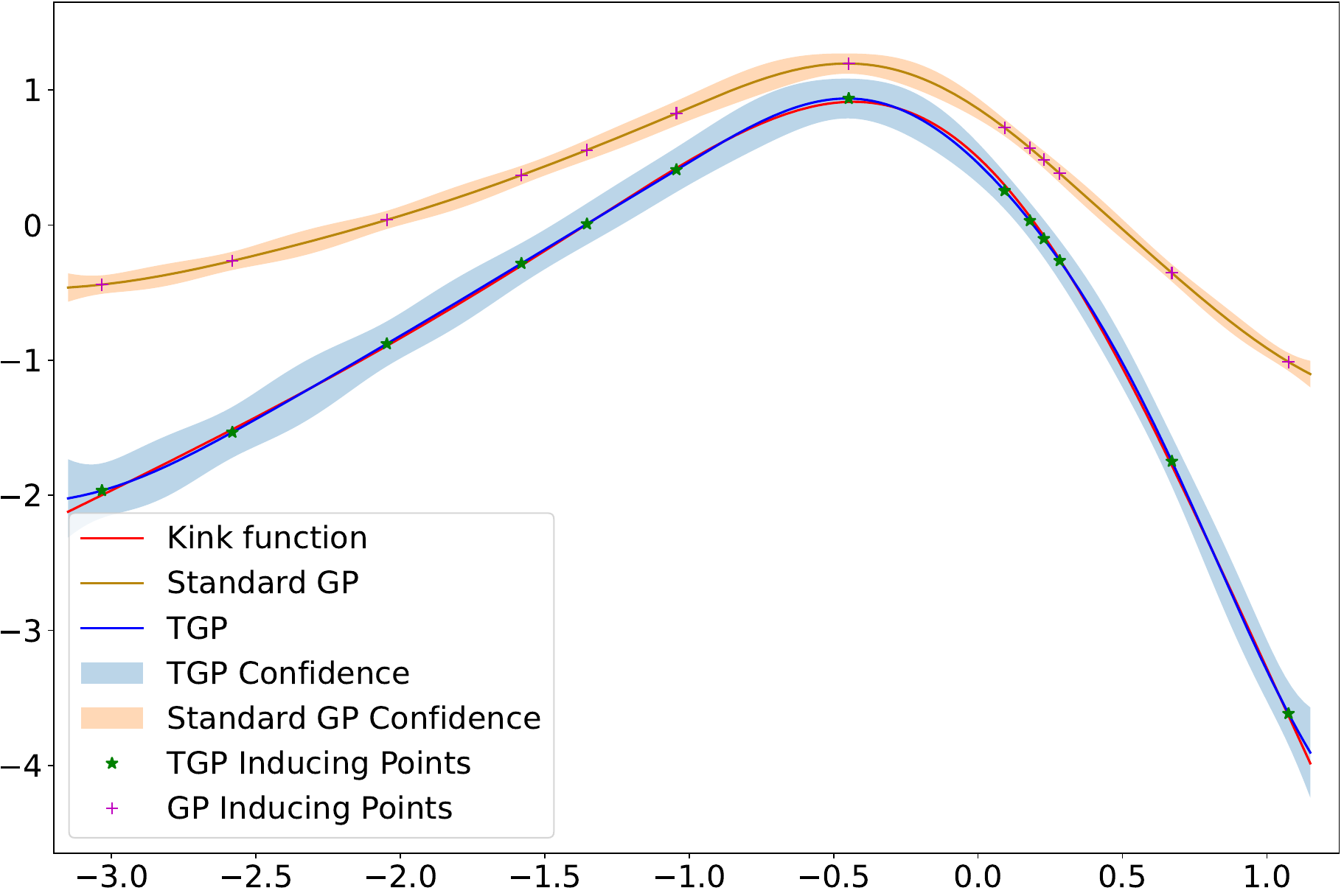}}
	\subfloat[Normalizing flow]{\label{fig:tgp_kinkfunc_subplot} \includegraphics[width=0.34\columnwidth, height=3.8cm]{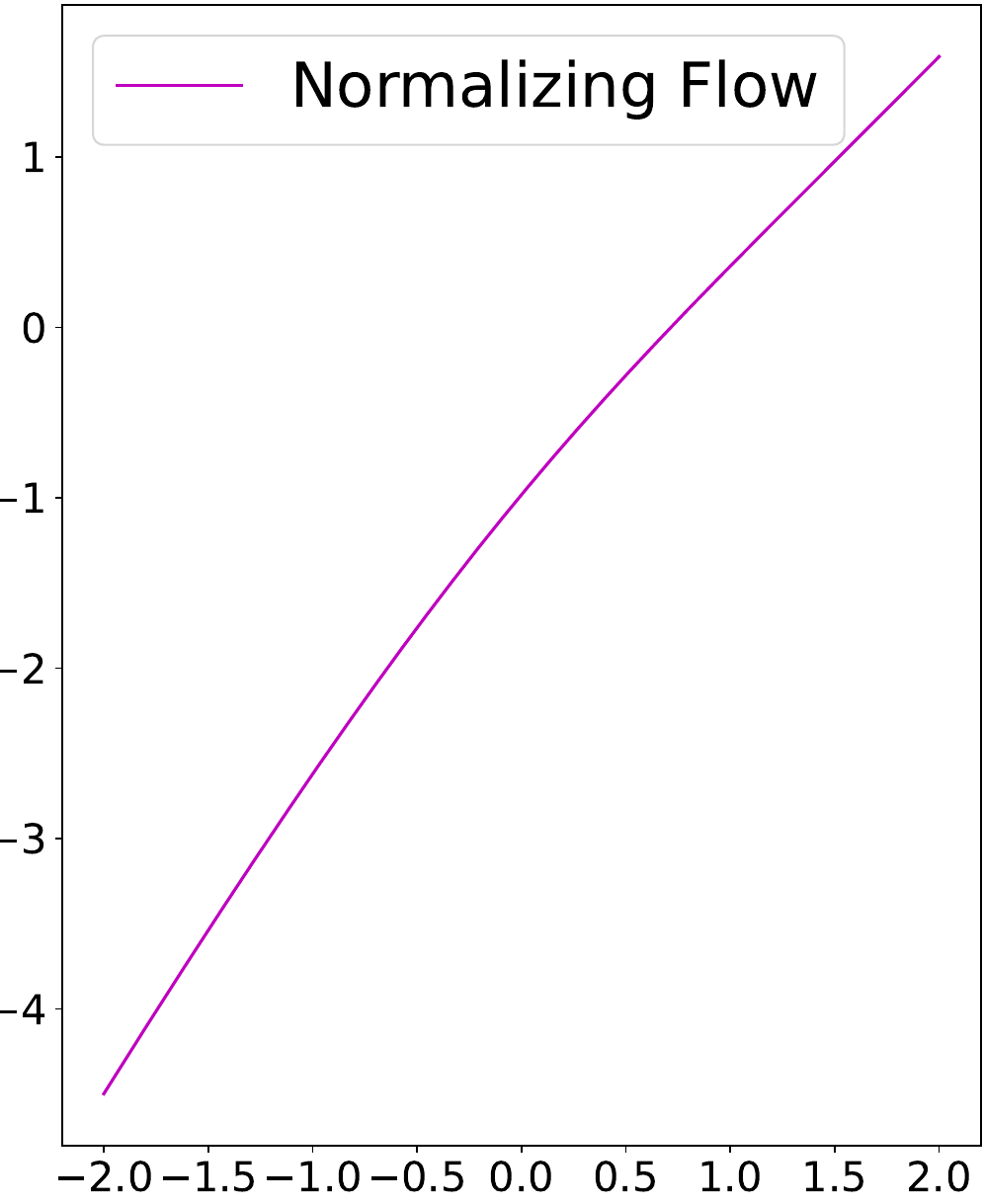}}
	
	\subfloat[Learning kink-step function]{\label{fig:tgp_ksfunc_} \includegraphics[width=0.65\columnwidth]{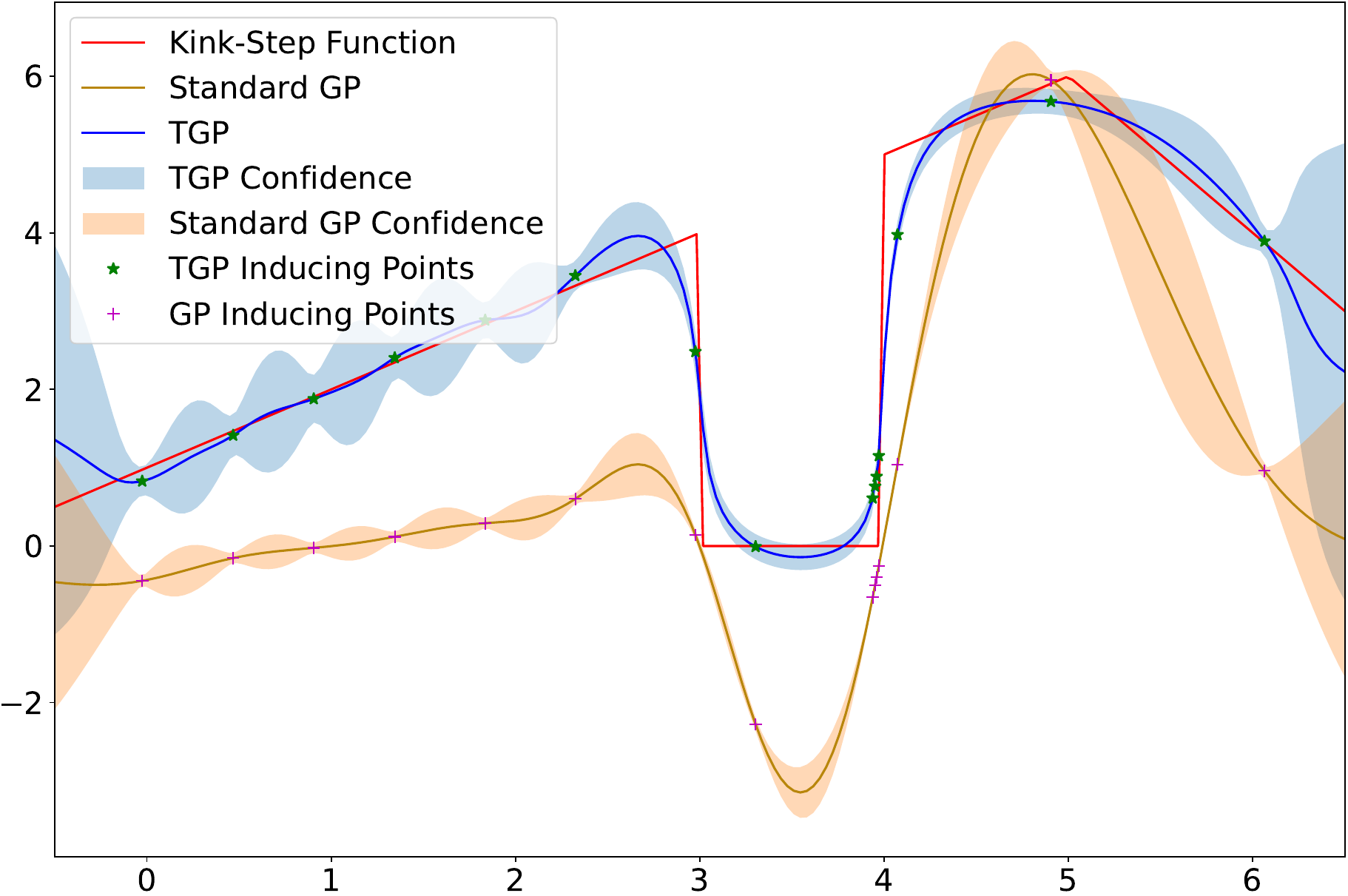}}
	\subfloat[Normalizing flow]{\label{fig:tgp_ksfunc_subplot} \includegraphics[width=0.33\columnwidth, height=3.8cm]{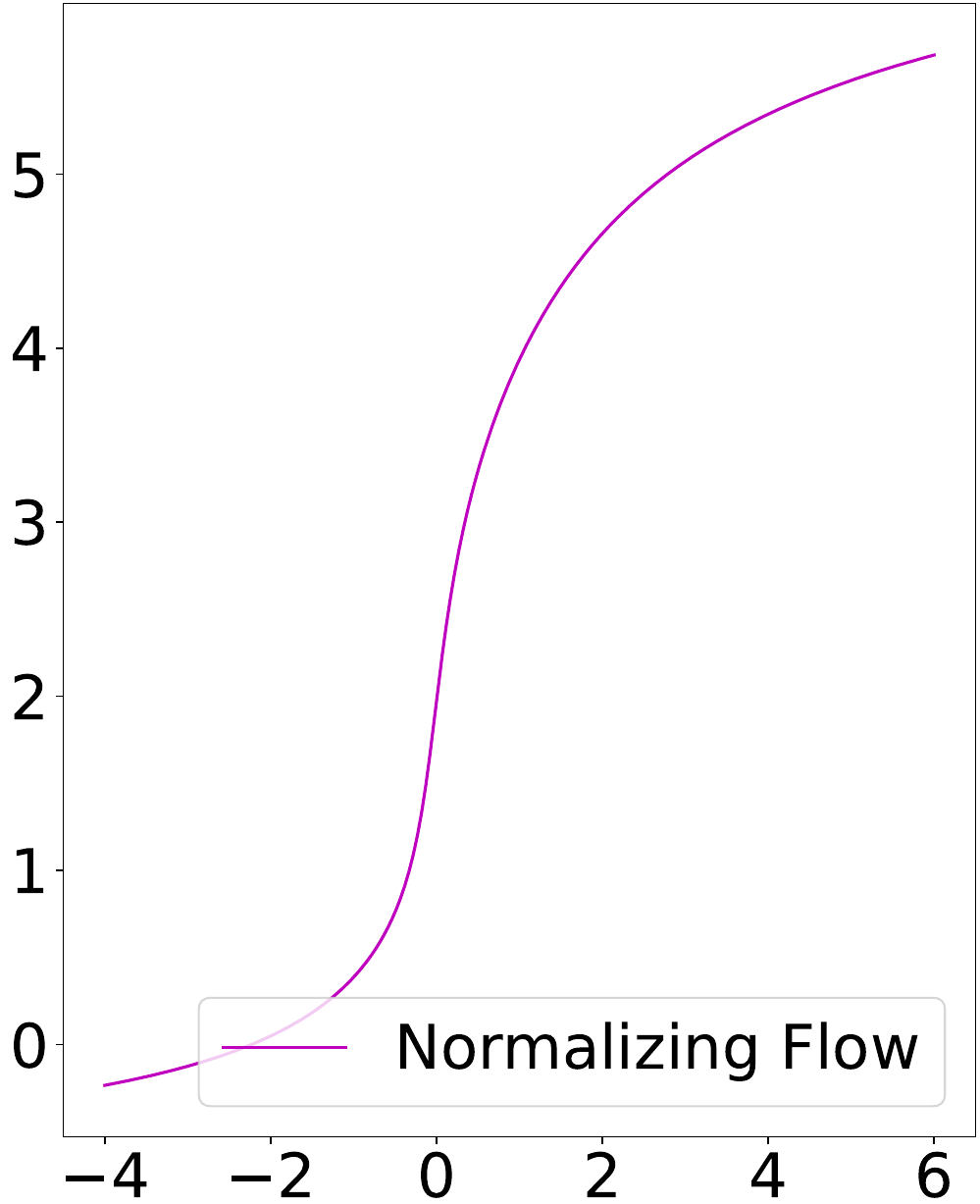}}
	\caption{Details about the TGP posteriors for the two transition functions.}
	\label{fig:TGP_posterior}
\end{figure}

\textbf{Constrained Optimization vs. Joint Optimization}. 
Upon comparing the performance of the joint optimization and constrained optimization frameworks in GPSSM and TGPSSM models, as reported in Table \ref{tab:synthetic_dataset} and Fig.~\ref{fig:1Ddataresults_SSMs}, it is evident that the algorithms utilizing the constrained optimization framework outperform their joint optimization counterparts consistently. Specifically, as illustrated in Fig.~\ref{fig:gp_ksfunc}, the JO-GPSSM algorithm is susceptible to being trapped in a local optimum during the learning phase, which ultimately leads to a learned latent dynamic that fails to capture the underlying system dynamics. On the contrary, the CO-GPSSM algorithm, as depicted in Fig.~\ref{fig:cogp_ksfunc}, effectively captures the underlying system dynamics to a reasonable extent, despite the limited model capacity of the standard GP with a plain SE kernel. This is a testament to the ability of CO-GPSSM to construct an informative state-space representation that serves as a sound foundation for learning the underlying system dynamics. Similar conclusions can be drawn from the TGPSSM cases, as evidenced by Fig.~\ref{fig:tgp_ksfunc} and Fig.~\ref{fig:cotgp_ksfunc}. These findings corroborate the efficacy of the constrained optimization framework across different models and provide further evidence supporting its effectiveness in enhancing the learning and inference performance of GPSSMs and TGPSSMs.

\textbf{Gaussian vs. Non-Gaussian}.
The joint Gaussian variational distribution for the latent states lacks the necessary flexibility to accurately approximate the non-Gaussian posterior distribution, resulting in a deteriorated learning performance, as can be observed by comparing the performance of the BS-GPSSM and JO-GPSSM presented in Table \ref{tab:synthetic_dataset}. In contrast, a more flexible, non-Gaussian variational distribution for the latent states allows GPSSMs to construct more informative state-space representations for latent dynamic learning. Although enriching the state-space representations might result in the training process getting trapped in local optima (cf.  the JO-GPSSM result, Fig.~\ref{fig:gp_ksfunc}), such an issue can be remedied using the constrained optimization framework (cf.  the CO-GPSSM result, Fig.~\ref{fig:cogp_ksfunc}).

\begin{table*}[ht!]
	\centering
	\caption{Prediction performance (RMSE) of the different models on the system identification datasets (standardized). Mean and standard deviation of the prediction results are shown across five seeds. The top three results are highlighted in bold.}
	\setlength{\tabcolsep}{2mm}{
		\centering
		\begin{tabular}{c|  c | ccccc }
			\toprule
			Model Description & Model & Actuator &  Ball Beam &  Drive &  Dryer &  Gas Furnace\\
			\midrule
			\multirow{3}{2.5cm}{\centering GPSSMs WITH MF ALGORITHM}
			& \textbf{BS-GPSSM}   
			&  $0.2115\pm 0.0294$
			&  $1.0025\pm  0.0997$ 
			&  $1.0580\pm 0.2941$ 
			&  $\textbf{0.5485}\pm\textbf{0.0928}$ 
			&  $0.4329\pm0.0583$  \\
			& \textbf{JO-GPSSM} 
			&  $0.1083\pm0.0225$ 
			&  $0.2851\pm0.0179$ 
			&  $0.9290\pm0.1591$ 
			&  ${0.7121}\pm{0.0855}$ 
			&  $0.4301\pm0.0562$  \\
			& \textbf{CO-GPSSM}  
			&  $\textbf{0.1067} \pm\textbf{0.0363}$
			&  $\textbf{0.1897}\pm\textbf{0.0614}$
			&  $0.9240\pm0.1096$
			&  ${0.6910}\pm {0.0416}$
			&  $\textbf{0.3718}\pm\textbf{0.0395}$ \\
			\midrule
			\multirow{2}{2.5cm}{\centering GPSSMs WITH NMF ALGORITHM} 
			& \textbf{PRSSM}  
			&  $0.4973\pm0.0503$
			&  $0.6191\pm0.0478$
			&  $0.9802 \pm0.1387$
			&  $\textbf{0.5783}  \pm \textbf{0.2305}$
			&  $0.6380\pm0.0384$  \\
			& \textbf{ODGPSSM}  
			&  $0.4814\pm0.0661$
			&  $0.5922\pm0.0597$
			&  $0.9853\pm0.1660$
			&  $\textbf{0.5535}\pm\textbf{0.2247}$
			&  $0.5072 \pm0.0823$ \\
			\midrule
			\multirow{1}{2.5cm}{\centering DSSMs} 
			& \textbf{DSSM} 
			&   $1.0498\pm0.2082$ 
			&   $1.2594\pm0.2463$ 
			&   $1.6879\pm0.2509$ 
			&   $1.7298\pm 0.2261$ 
			&   $1.6248 \pm0.2044$ \\
			\midrule
			\multirow{4}{2.5cm}{\centering \textbf{TGPSSMs} (\textbf{proposed})} 
			& \textbf{JO-TGPSSM}   
			& $\bm{0.1007} \pm \bm{0.0591}$
			& $0.2227\pm 0.0417$
			& $\textbf{0.7708}\pm\textbf{0.1302}$
			& $1.5288\pm0.1400$
			& $0.4156\pm0.0532$ \\
			& \textbf{CO-TGPSSM}  
			&  $\bm{0.1008}\pm\bm{0.0267}$
			&  $0.2371\pm0.0485$
			&  $\textbf{0.8425}\pm\textbf{0.1146}$
			&  $0.7197\pm0.0685$
			&  $\textbf{0.3757}\pm\textbf{0.0159}$ \\
			& \begin{tabular}[c]{@{}c@{}}\textbf{JO-TGPSSM}-NVP  \end{tabular}
			&  $0.1513\pm0.0394$
			&  $\textbf{0.1681}\pm\textbf{0.0512}$
			&  $0.9103\pm0.1570$
			&  $1.1145\pm0.1322$
			&  $0.4077\pm0.0401$ \\
			& \begin{tabular}[c]{@{}c@{}}\textbf{CO-TGPSSM}-NVP \end{tabular}
			&  $0.1366\pm0.0202$
			&  $\textbf{0.1707}\pm\textbf{0.0134}$
			&  $\textbf{0.8453}\pm\textbf{0.1249}$
			&  $0.7538\pm0.1803$
			&  $\textbf{0.3867}\pm\textbf{0.0383}$\\
			\bottomrule
	\end{tabular}}
	\label{tab:systemidentifcation}
\end{table*}
\subsection{Time Series Prediction}\label{subsec:SysIdenData}
This subsection demonstrates the series prediction performance of the proposed methods on five public real-world system identification datasets\footnote{\url{https://homes.esat.kuleuven.be/~smc/daisy/daisydata.html}}, which consist of one-dimensional time series of varying lengths between $296$ to $1024$ data points (see Table \ref{tab:SIDdataset} in Appendix~\ref{appx:more_experimental_results}). For comprehensive model comparisons, in addition to the SSMs considered in Section \ref{subsec:1-Ddata}, we also implement the DSSM\footnote{\url{https://github.com/guxd/deepHMM}} proposed in \cite{krishnan2017structured} and the TGPSSMs using the RealNVP flow with $J = 3$ coupling layers (denoted as the JO-TGPSSM-NVP and CO-TGPSSM-NVP). We assume that the one-dimensional observations $\y_t$, are governed by four-dimensional latent states $\x_t, \forall t$. The primary objective of the SSMs considered in this study is to accurately predict the observations based on the learned high-dimensional latent states and the associated nonlinear dynamics. All the SSMs are trained using standardized datasets, namely, the data are normalized to zero mean and unit variance based on the available training data, and the test data are scaled accordingly.
For (T)GPSSMs, the number of inducing points is commonly set to $20$, and the GP models are equipped with the standard SE kernel.
More experimental settings can be found in the accompanying source code.
Table \ref{tab:systemidentifcation} reports the prediction performance, where the root mean square error (RMSE) results correspond to predicting $20$ steps into the future from the end of the training sequence, and the top three are highlighted in bold.

The performance of the four model categories on the five datasets showcased in Table \ref{tab:systemidentifcation}, indicates that, in general, TGPSSMs yield the best prediction results (see Fig.~\ref{fig:sysID_dataset_appx} in Appendix~\ref{appx:more_experimental_results}), followed by GPSSMs with MF and NMF algorithms, while DSSMs display inferior performance. 
These results convincingly illustrate that the proposed TGPSSMs benefit from the flexible function prior and the associated variational learning algorithms, as is the case for TGPSSMs. Furthermore, the underwhelming performance of DSSMs, which employ deep neural networks to model transition functions, is unsurprising, given that these networks typically require a large amount of data to train their numerous parameters and thus perform poorly in low-data regimes. Conversely, the GPSSMs and TGPSSMs are more suitable for small datasets, owing to the  nature of their Bayesian nonparametric GP models.

The results in Table \ref{tab:systemidentifcation} also show that the TGPSSMs based on RealNVP achieve comparable performance to the TGPSSMs based on elementary flow compositions. From the perspective of model flexibility, TGPSSMs built with the more advanced RealNVP flow are more flexible and more generic to dynamical system modeling problems, particularly when no model prior knowledge can be provided. Interestingly, however, the TGPSSMs based on elementary flow compositions perform fairly well on the five real-world system identification datasets,  suggesting that a GP combined with simple flow transformations can construct a flexible TGP prior that satisfies the requirements of complex dynamic system learning and inference.

Finally, it is worth noting that the BS-GPSSM achieves the best performance on the \textit{Dryer} dataset. This suggests that the latent states in the \textit{Dryer} dataset exhibit a more ``Gaussian'' behavior, aligning well with the assumed joint Gaussian variational distribution of the latent states. Increasing the model flexibility in this case complicates model training and degrades model performance as a consequence. Specifically, as demonstrated in Table \ref{tab:systemidentifcation}, the performance of the JO-TGPSSMs significantly deviates from that of the BS-GPSSM. However, it is noteworthy that in this case, the SSMs with a constrained optimization framework (CO-TGPSSMs and CO-GPSSM) still perform well, approaching the performance of the BS-GPSSM. This further validates the effectiveness of the constrained optimization framework in training the GPSSM model parameters.

\subsection{Estimating the Latent States} \label{subsec:Lorenz_attractor}          
Lastly, we demonstrate the model efficacy of the proposed TGPSSM in the state inference task. Concretely, we employ the three-dimensional Lorenz system \cite{revach2022kalmannet}, which mathematically can be described as follows:
\begin{subequations}
	\label{eq:Lorenz_system}
	\begin{align}
		& \x_{t+1}  = \bm{F}(\x_t) \cdot \x_t + \mathbf{v}_t,  & \mathbf{v}_t \sim \cN(\bm{0},  \ 0.0015  \cdot \bm{I}_3), \\
		& \y_t = \bm{I}_{3} \cdot \x_t  + \mathbf{e}_t,        & \mathbf{e}_t \sim \cN(\bm{0},  \ 0.1 \cdot \bm{I}_3), \ \ \ \ 
	\end{align}
\end{subequations}
where the definition and detailed computations of the (Jacobian) transition matrix $\bm{F}(\x_t)$ can be found in Eq.~(20) of \cite{revach2022kalmannet}.
We compare the CO-GPSSM and CO-TGPSSM with the classic EKF because it was shown in \cite{revach2022kalmannet} that when model-based filtering algorithms know exactly the discretized Lorenz system information, the EKF can achieve the best state inference results. We thus use the EKF as a practical lower bound for evaluating the MSE of state estimation.  In addition, we also compare the scenario where the EKF has a model mismatch, referred to as EKF-M. Specifically, we assume that during filtering, the transition function used in the EKF is corrupted by small perturbations and has the following mismatched form: 
\begin{equation}
    f(\x_t) = \bm{F}(\x_t) \cdot \x_t + 0.2.
\end{equation} 
For learning and inference in both the CO-GPSSM and CO-TGPSSM, we generate a sequence of length $T = 2000$. 

The state inference results are presented in Table \ref{tab:Lorenz_results} (and Fig.~\ref{fig:Lorenz_dataresults_SSMs} in Appendix~\ref{appx:more_experimental_results}). Notably, the state inference performance of (T)GPSSMs is comparable to that of the EKF in terms of state-fitting MSE, despite being trained solely on noisy measurements without any physical model knowledge. Furthermore, the significant deviations observed in the state estimation due to the slight model mismatch in the EKF-M highlights the vital role that physical model knowledge plays in state estimation.  Therefore,  to further improve the state inference performance in the data-driven (T)GPSSMs, we could explore the integration of physical model knowledge into the learned latent space representation of the (T)GPSSMs, which could be a fruitful avenue for future research.
\begin{table}[t!]
	\centering
	\caption{State estimation performance (MSE) of different models 
	} 
	\setlength{\tabcolsep}{2.45mm}{
		\begin{tabular}{c c c c c }
			\toprule
			{Observations} 
			& {EKF} 
                & {EKF-M}
			& {CO-GPSSM}
			& {CO-TGPSSM}  \\
			\midrule
			0.1012 &  0.0237  & 3.2066  &  0.0983  &  0.0895\\
			\bottomrule
		\end{tabular}
	}
	\label{tab:Lorenz_results}
\end{table}

\section{Conclusion}\label{sec:conclusion} 	  
In this paper, a flexible and unified probabilistic SSM called the TGPSSM has been proposed. By leveraging the normalizing flow technique, TGPSSM enriches the GP priors in the standard GPSSM, making the TGPSSM more flexible and expressive for modeling complex dynamical systems.  We also present an efficient variational learning algorithm that is superior in modeling flexibility and interpretability, enabling scalable learning and inference. Furthermore, a constrained optimization framework is integrated into the proposed algorithm to further enhance the state-space representation capabilities of TGPSSMs and optimize the hyperparameters. Experimental results on various test datasets demonstrate that the proposed TGPSSM, empowered by the proposed variational learning algorithm, can improve complex dynamical system learning and inference performance compared to state-of-the-art methods.

\section{Proofs} \label{sec:proofs}
\subsection{Augmented TGP Prior: Derivation of Eq.~(\ref{eq:joint_TGP})} \label{appedx:derivation_joint_tgp}
\begin{figure*}
    \begin{equation}
         p(\tF, \tU) = p(\vf, \vu) \prod_{j=1}^{J-1} \left|\operatorname{det} \left( \begin{bmatrix}
            \overbrace{\frac{\partial \mathbb{G}_{\theta_{j}}\left( \G_{\theta_{j-1}}\left( \ldots \G_{\theta_0}(\vf)\ldots \right) \right)}{\partial \G_{\theta_{j-1}}\left( \ldots \G_{\theta_0}(\vf)\ldots \right)  } }^{A} & \overbrace{\frac{\partial \mathbb{G}_{\theta_{j}}\left( \G_{\theta_{j-1}}\left( \ldots \G_{\theta_0}(\vf)\ldots \right) \right)}{\partial \G_{\theta_{j-1}}\left( \ldots \G_{\theta_0}(\vu)\ldots \right) }}^B\\
            \underbrace{\frac{\partial \mathbb{G}_{\theta_{j}}\left( \G_{\theta_{j-1}}\left( \ldots \G_{\theta_0}(\vu)\ldots \right) \right)}{\partial \G_{\theta_{j-1}}\left( \ldots \G_{\theta_0}(\vf)\ldots \right) } }_C& \underbrace{\frac{\partial \mathbb{G}_{\theta_{j}}\left( \G_{\theta_{j-1}}\left( \ldots \G_{\theta_0}(\vu)\ldots \right) \right)}{\partial \G_{\theta_{j-1}}\left( \ldots \G_{\theta_0}(\vu)\ldots \right) }}_D
        \end{bmatrix}\right) \right|^{-1},
        \label{eq:joint_tgp_appedx} 
    \end{equation}
    \hrulefill
\end{figure*}
Similar derivations can be found in \cite{maronas2021transforming}. For ease of reference, we also present the proof here.  By definition, it is easy to write down $p(\tilde{\vf}, \tilde{\vu})$, see Eq.~(\ref{eq:joint_tgp_appedx}) in next page, and $p(\tilde{\vu})$,
\begin{equation}
    p(\tU) = p(\vu) \underbrace{\prod_{j=1}^{J-1} \left|\operatorname{det} \frac{\partial \mathbb{G}_{\theta_{j}}\left( \G_{\theta_{j-1}}\left( \ldots \G_{\theta_0}(\vu)\ldots \right) \right)}{\partial \G_{\theta_{j-1}}\left( \ldots \G_{\theta_0}(\vu)\ldots \right) } \right|^{-1}}_{\J_{\u}}.		
    \label{eq:u_tgp_appedx}
\end{equation}
Since 
\begin{subequations}
\label{eq:bayes_deter}
	\begin{align}
		& p(\tilde{\mathbf F} \vert \tilde{\mathbf U}) = \frac{p(\tilde{\mathbf F}, \tilde{\mathbf U})}{p(\tilde{\mathbf U})}, \\
		& \operatorname{det}\left(\begin{array}{ll}
			A & B \\
			C & D
		\end{array}\right)=\operatorname{det}\left(A-B D^{-1} C\right) \operatorname{det}(D),
	\end{align}
\end{subequations}
by combining Eqs.~(\ref{eq:joint_tgp_appedx}), (\ref{eq:u_tgp_appedx}) and (\ref{eq:bayes_deter}), we have
\begin{equation}
	\begin{aligned}
		& p(\tF \vert \tU) = p(\vf \vert \vu) \prod_{j=1}^{J-1} \left|\operatorname{det} \left( A - BD^{-1} C\right)\right|^{-1}\\
		& \quad = p(\vf \vert \vu) \prod_{j=1}^{J-1} \left|\operatorname{det} A\right|^{-1} \quad  (B=0, C = 0 \text{ in Eq.~(\ref{eq:joint_tgp_appedx})})\\
		& \quad = p(\vf \vert \vu) \underbrace{\prod_{j=1}^{J-1} \left| \operatorname{det} \frac{\partial \mathbb{G}_{\theta_{j}}\left( \G_{\theta_{j-1}}\left( \ldots \G_{\theta_0}(\vf)\ldots \right) \right)}{\partial \G_{\theta_{j-1}}\left( \ldots \G_{\theta_0}(\vf)\ldots \right)  }  \right|^{-1}}_{\J_{\f}}.
		\label{eq:appxdcondition_u}
	\end{aligned}
\end{equation}
Therefore, the augmented TGP prior $p(\tF, \tU)$ is 
\begin{equation}
	\begin{aligned}
		p(\tF, \tU) =p(\tF \vert \tU) p(\tU) = \underset{p(\tilde{\f}_{1:T} \vert \tilde{\u}_{1:M})}{\underbrace{p(\vf \vert \vu) \cdot \J_{\f}}} \cdot \underset{p(\tilde{\u}_{1:M})}{\underbrace{p(\vu)\cdot \J_{\u}}}.
	\end{aligned}
\end{equation}

\subsection{Proof of Theorem \ref{theorem:ips_tgp}}
\label{appedx:proof_thm2}
\begin{itemize}[leftmargin=*]
    \item We first show that if $\vu$ is sufficient for $\vf$, then $\vu$ is sufficient for $\tilde{\vf} = \G_{\btheta_{F}}(\vf)$.  
We denote that $\vf \sim \cN(\bm{\mu}, \bm{K})$. 
Since $\vu$ is sufficient for $\vf$, according to Fisher–Neyman factorization theorem \cite{casella2021statistical}, 
 there exist non-negative functions $\gamma$ and $\Gamma$, such that 
\begin{equation}
    p(\vf; \bm{\mu}, \bm{K}) = \gamma( \vu; \bm{\mu}, \bm{K}) \Gamma(\vf)
\end{equation} 
where $\gamma$ depends on $(\bm{\mu}, \bm{K})$ and the sufficient statistic $\vu$, while $\Gamma$ does not depend on $(\bm{\mu}, \bm{K})$.
Note that $\vf = \G_{\btheta_{F}}^{-1}(\G_{\btheta_{F}}(\vf)) = \G_{\btheta_{F}}^{-1}(\tilde{\vf})$.  Let 
\begin{equation}
    p( \vf;  \bm{\mu}, \bm{K})  =  p\left( \G_{\btheta_{F}}^{-1}(\tilde{\vf});   \bm{\mu}, \bm{K}\right) \triangleq  \pi( \tilde{\vf} ;   \bm{\mu}, \bm{K}).
    \label{eq:th1_1}
\end{equation}
Moreover, since
\begin{equation}
    \begin{aligned}
        p\left(\G_{\btheta_{F}}^{-1}(\tilde{\vf});   \bm{\mu}, \bm{K}\right) & =  \gamma(\vu;  \bm{\mu}, \bm{K}) \Gamma(\G_{\btheta_{F}}^{-1}(\tilde{\vf})) \\
        & \triangleq \gamma(\vu;  \bm{\mu}, \bm{K})   \Gamma^*(\tF), 
    \end{aligned}
    \label{eq:th1_2}
\end{equation}
we have
\begin{equation}
    \pi( \tilde{\vf} ;   \bm{\mu}, \bm{K})= \gamma(\vu;  \bm{\mu}, \bm{K})  \Gamma^*(\tF),
\end{equation}
which implies that $\vu$ is sufficient for $\tilde{\vf}$ according to Fisher–Neyman factorization theorem \cite{casella2021statistical}. 

\item Then following the similar reasoning, we can prove that if $\vu$ is sufficient statistic for $\tilde{\vf}$, then $\tilde{\vu}$ is sufficient statistic for $\tilde{\vf}$.

\item Together step 1) and 2), we can conclude that  if $\vu$ is sufficient for $\vf$, then $\tilde{\vu}$ is sufficient statistic for $\tilde{\vf}$. 	\qed
\end{itemize}

\subsection{Proof of Proposition \ref{proposition:optimal_Markov_qx_maintext} and Theorem \ref{thm:ELBO_mf}} 
\label{ELBO_1}
\subsubsection{The ELBO} 
    \begin{equation}
	\label{eq:mf_elbo}
	\begin{aligned}
		&\operatorname{ELBO}  =  \mathbb{E}_{q(\vx, \tF, \tU )} \left[\log \frac{p(\tF, \tU, \vx, \vy)} {q(\vx, \tF, \tU) }\right]\\
		& = \mathbb{E} \left[ \log \frac{ p\left(\mathbf{x}_{0}\right) \cdot p(\vu) \cancel{ \J_{\u} \cdot p(\vf \vert \vu)  \J_{\f} }     \prod_{t=1}^{T}  p\left(\mathbf{y}_{t} \vert \mathbf{x}_{t}\right) p\left(\mathbf{x}_{t} \vert {\tf}_{t}\right) }{ q(\x_0) \cdot q(\vu) \cancel{ \J_\u  \cdot p(\vf \vert \vu) \J_\f } \prod_{t= 1}^T q(\x_{t} \vert \x_{t-1} ) }  \right] \\
		& = \mathbb{E}_{q(\vx, \tF, \tU )} \left[ \log \frac{p(\vu) p(\x_0) \prod_{t= 1}^T p(\y_t \vert \x_t) p(\x_t \vert {\tf_{t} })}{q(\vu) q(\x_0)  \prod_{t= 1}^T q(\x_{t} \vert \x_{t-1} )}  \right] \\
		& =\underbrace{\int_{\x_{0:T}} q(\x_{0:T}) \log\prod_{t= 1}^T p(\y_t \vert \x_t) }_{\text{term 1}} + \underbrace{\int_{\x_0} q(\x_0) \log \frac{p(\x_0)}{q(\x_0)}}_{\text{term 2}} \\
            & \quad + \underbrace{\int_{\{\tU, \tF\}} q(\tU) p(\tF \vert \tU)   \log \frac{p(\vu)}{q(\vu)}}_{\text{term 3}} \\
		& \quad + \underbrace{\int_{\x_{0:T}} q(\x_0) q(\x_{1:T} \vert \x_0) \log \frac{1}{q(\x_{1:T} \vert \x_0 ) } }_{\text{term 4}} \\
            & \quad + \underbrace{\int_{\{ \x_{0:T}, \tU, \tF \}} q(\x_{0:T}) q(\tU) p(\tF \vert \tU) \log \prod_{t=1}^T p(\x_t \vert \tf_t )}_{\text{term 5}} 
	\end{aligned}
\end{equation}
\hrulefill

\subsubsection{Proof of Proposition \ref{proposition:optimal_Markov_qx_maintext}}
\label{subsec:proof_propsition1_appx}
\begin{proof}
 The distribution $q(\x_{0:T})$ that maximizes the ELBO corresponds to a stationary point of the ELBO. More specifically, the ELBO achieves a stationary point w.r.t. $q(\x_{0:T})$ if and only if the distribution satisfies the Euler-Lagrange equation \cite{frigola2015bayesian}, i.e., 
 \begin{equation}
    \begin{aligned}
        & 0 = \frac{\partial }{\partial q(\x_{0:T})} \left\{ q(\x_{0:T}) \log \prod_{t= 1}^T p(\y_t \vert \x_t)  +  q(\x_0) \log p(\x_0) \right. \\
        & \left.  - q(\x_{0:T})\log q(\x_{0:T}) + q(\x_{0:T}) \mathbb{E}_{q(\tU) p(\tF \vert \tU)} \left[  \log \prod_{t=1}^T p(\x_t \vert \tf_t ) \right] \right\},  \\
    \end{aligned}
 \end{equation}
 which implies that the optimal distribution $q^*(\x_{0:T})$ satisfies
 \begin{equation}
    \begin{aligned}
        0 =&  \log \prod_{t= 1}^Tp(\y_t \vert \x_t)   + \log p(\x_0)  - \log q^*(\x_{0:T})  - 1  \\
        &  +  \sum_{t= 1}^T  \underbrace{  \mathbb{E}_{q(\vu)} \left[  \mathbb{E}_{p(\f_t \vert \vu, \x_{t-1})} \left[ \log p(\x_t \vert \G_{\btheta_{F}}(\f_t)) \right] \right] }_{\triangleq  \Psi(\x_{t-1:t})} 
    \end{aligned}
 	\label{eq:stationary_points_qx}
 \end{equation}
where the function $\Psi(\x_{t-1:t})$ only depends on $\x_{t-1}$ and $\x_t$ after marginalizing out $\f_t$ and $\vu$ (Generally, there is no analytical solution for integrating out  $\f_t$ because of the nonlinearity of $\G_{\btheta_{F}}(\cdot)$).  Taking exponentiation  on both sides of Eq.~(\ref{eq:stationary_points_qx}), we get the optimal distribution $q^*(\x_{0:T})$
\begin{equation}
	q^*(\x_{0:T}) \propto p(\x_0) \prod_{t= 1}^T p(\y_t \vert \x_t ) \exp\left[\Psi(\x_{t-1:t})\right],
	\label{eq:simpler_SSM}
\end{equation}
where $\exp\left[\Psi(\x_{t-1:t})\right]$ can be interpreted as an implicit transition function of the Markov SSM, and the optimal distribution of $q^*(\x_{0:T})$ is the corresponding smoothing distribution, which is Markov-structured.
\end{proof}
\begin{remark} \label{remark:non-closed-form}
In the GPSSM work proposed in \cite{frigola2014variational}, the analytical marginalization of GP function values $\f_t$ and $\vu$ enables the determination of the optimal variational distribution $q^*(\x_{0:T})$ as a smoothing distribution of a simpler SSM, which can be numerically obtained using particle filters. However, in TGPSSM, the nonlinearity of $\G_{\btheta_{F}}(\cdot)$ prevents the closed-form integration over $\tf_{t}$, as demonstrated in Eq.(\ref{eq:stationary_points_qx}), thereby making the Markovian SSM in Eq.(\ref{eq:simpler_SSM}) intractable. Consequently, particle filters, as utilized in \cite{frigola2014variational}, cannot be directly applied to TGPSSM for variational learning and inference. While it is possible to simultaneously sample $\vu$, $\tf_t$, and $\x_{0:T}$ from Eq. (\ref{eq:simpler_SSM}) using particle filters, this way does not yield the computational efficiency benefits of the variational inference method.
\end{remark}
\subsubsection{Proof of Theorem \ref{thm:ELBO_mf}}
\begin{proof} Next, we detail the five terms in the ELBO (cf. Eq.~(\ref{eq:mf_elbo})).
	\begin{itemize}[leftmargin=*]
		\item \textbf{Term 1}
		\begin{equation}
                \begin{aligned}
                    \text{term 1} & = \E_{q(\x_{0:T})} \left[ \sum_{t=1}^T \log p(\y_t \vert \x_t)\right]\\
                    & = \sum_{t=1}^T \E_{q(\x_{t-1})q(\x_{t} \mid \x_{t-1})} \left[\log p(\y_t \vert \x_t)\right],
                \end{aligned}
			\nonumber
		\end{equation}
		where $p(\y_t \vert  \x_t) = \cN(\y_t \vert \bm{C} \x_t, \bm{R})$ is Gaussian, and $q(\x_t \vert \x_{t-1}) =  \cN(\x_t \vert \Phi_{\phi}(\x_{t-1}, \y_{1:T}), \bm{\Sigma}_{\phi}(\x_{t-1}, \y_{1:T}))$ (see Eq.~(\ref{eq:MF_qx})),  thus term 1 exists closed-form solution (cf. the result in Appendix~\ref{appedx:elgl}),
		\begin{equation}
			\begin{aligned}
				\text{term 1} & = \sum_{t=1}^T \mathbb{E}_{q(\x_{t-1})} \left[  \E_{q(\x_{t} \vert \x_{t-1})} \left[\log p(\y_t \vert \x_t)\right] \right]\\
				&=  \sum_{t=1}^T \mathbb{E}_{q(\x_{t-1})} \left[ \log \mathcal{N}(\y_t\mid \bm{C}  \Phi_{\x_t}, \bm{R}) \right. \\
                    & \left. \qquad \qquad - \frac{1}{2}\operatorname{tr}\left[ \bm{R}^{-1} (\bm{C \Sigma_{\x_t} C^\top})\right] \right],
			\end{aligned}
		\end{equation}
		where for notation brevity, $\Phi_{\x_t}\triangleq  \Phi_{\phi}(\x_{t-1}, \y_{1:T})$ and $\bm{\Sigma}_{\x_t} \triangleq \bm{\Sigma}_{\phi}(\x_{t-1}, \y_{1:T})$.
		Term 1 represents the overall data fitting performance averaged over all latent states generated from their joint distribution $q(\x_{1:T})$, which encourages accurate reconstruction of the observations. 
		\item \textbf{Term 2}: We assume $q(\x_0) = \cN(\x_0 \vert {\m}_{0}, \mathbf{L}_0 \mathbf{L}_0^\top )$ and $p(\x_0) = \cN(\x_0 \vert \mathbf{0}, \mathbf{I})$ are both Gaussian, thus the term 2 has closed-form solution:
		\begin{align}
			& \text{term 2} = - \E_{q(\x_0)} \log \frac{q(\x_0)}{p(\x_0)} \nonumber\\
			& \quad = - \operatorname{KL}(q(\x_0) \| p(\x_0)) \nonumber\\
			& \quad = -\frac{1}{2}\left[ \left({\m}_{0}^{\top} {\m}_{0} \right)+\operatorname{tr}\left(\mathbf{L}_0 \mathbf{L}_0^\top  \right)-\log \left|\mathbf{L}_0 \mathbf{L}_0^\top \right|-d_x\right] \nonumber\\ 
			& \quad = -\frac{1}{2}\left[ \left(\mathbf{m}_{0}^{\top} \mathbf{m}_{0} \right)+\operatorname{tr}\left(\mathbf{L}_0 \mathbf{L}_0^\top  \right)- 2\log \left|\mathbf{L}_0 \right|-d_x\right]
		\end{align}
		Term 2 represents a regularization term for $q(\x_0)$, which encourages $q(\x_0)$ not staying too ``far away'' from $p(\x_0)$.  
		\item \textbf{Term 3}: 
		\begin{equation}
			\begin{aligned}
				& \text{term 3} = \int_{\{\tU, \tF\}} q(\tU) p(\tF \vert \tU)   \log \frac{p(\vu)}{q(\vu)} \  \mathrm{d} \tF \mathrm{d} \tU \\
				& \quad = \int_{ \tU } q(\tU) \log \frac{p(\vu)}{q(\vu)} \  \mathrm{d} \tU \qquad \text{ (integrate out $\tF$ )}\\
				& \quad = \int_{ \vu } q(\vu) \log \frac{p(\vu)}{q(\vu)}  \  \mathrm{d} \vu \qquad \text{ (LOTUS rule \cite{papamakarios2021normalizing}}\\
				& \quad = -\operatorname{KL}\left[ q(\vu) \| p(\vu) \right]\\
				& \quad = - \frac{1}{2}\left[{\m}^{\top} \bm{K}_{\vz,\vz}^{-1}{\m}+\operatorname{tr}\left(\bm{K}_{\vz,\vz}^{-1} \mathbf{S} \right)-\log \frac{\left|  \mathbf{S}  \right|}{\left|\bm{K}_{\vz,\vz}\right|}-M d_x\right]  
			\end{aligned}
		\end{equation}
		where $p(\vu) = \cN(\vu \mid \bm{0}, \bm{K}_{\vz,\vz})$ and $q(\vu) = \cN(\vu \vert \mathbf{m}, \mathbf{S})$. Term 3 represents a regularization term for the GP transition, which encourages $q(\vu)$ not staying too ``far away'' from the prior $p(\vu)$. 
		\item \textbf{Term 4}
		\begin{align}
			\!\!\!\! & \text{term 4}  = -\mathbb{E}_{q(\x_0)} \left[   \mathbb{E}_{q(\x_{1:T} \vert \x_0 )} \left( \log q(\x_{1:T} \vert \x_0 )  \right) \right] \nonumber \\
			& \ = -\mathbb{E}_{q(\x_0)}
			\left[ \int_{\x_{1:T}} \prod_{t= 1}^T q(\x_t \vert \x_{t-1}) \left(\sum_{t= 1}^T \log  q(\x_t \vert \x_{t-1})  \right)  \mathrm{d}  \x_{1:T} \right] \nonumber \\			
			& \ =  \sum_{t=1}^T	\mathbb{E}_{q(\x_{t-1})} \left[ \underbrace{ - \mathbb{E}_{q(\x_{t} \vert \x_{t-1} )} \left( \log q(\x_t \vert \x_{t-1})  \right) }_{\text{entropy}} \right] \nonumber \\
			& \ =  \sum_{t=1}^T \mathbb{E}_{q(\x_{t-1})}\left[ \frac{d_x}{2} \log (2 \pi)+\frac{1}{2} \log \left| \bm{\Sigma}_{\x_t} \right|+\frac{1}{2} d_x  \right] 
			\label{eq:h_x}
		\end{align}
		Term 4 is the differential entropy term of the latent state trajectory.  From Eq.~(\ref{eq:h_x}), we can see that maximizing the ELBO essentially encourages ``stretching'' every $\x_t$ so that the approximated smoothing distribution over the state trajectories $q(\x_{1:T} \vert \x_0)$ will not be overly tight. 
		%
		\item \textbf{Term 5}
            \begin{subequations}
                \begin{align}
    			& \text{term 5} = \mathbb{E}_{ q( \x_{0:T}, \tU, \tF) } \left[ \sum_{t=1}^T  \log  p\left(\x_t \vert \tf_t \right)  \right] \nonumber\\
    			& \ = \mathbb{E}_{ q( \x_{0:T}, \vu, \vf) } \left[ \sum_{t=1}^T  \log  p\left(\x_t \vert \G_{\btheta_{F}}(\f_t) \right)  \right]  ~ \text{ (LOTUS rule \cite{papamakarios2021normalizing}} \nonumber\\
    			& \ = \sum_{t=1}^T \mathbb{E}_{ q( \x_{t-1:t}, \vu, \f_{t}) } \left[   \log  p\left(\x_t \vert \G_{\btheta_{F}}(\f_t) \right)  \right] \nonumber\\
    			& \ = \sum_{t=1}^T \E_{q(\x_{t-1:t}), q(\f_t) } \left[\log p(\x_t \vert \G_{\btheta_{F}}(\f_t))\right]  \label{eq:integ_vu}\\
    			& \ \approx  \sum_{t=1}^T  \E_{q(\x_{t-1:t}) }  \left[  \frac{1}{n} \sum_{i=1}^n \log p(\x_t \vert \G_{\btheta_{F}}(\f_t^{(i)}))\right],  \\
                    & \ \qquad \f_t^{(i)} \sim q(\f_t), i = 1, 2, ..., n, \nonumber
    		\end{align}
            \end{subequations}
		where integrating out $\vu$ in Eq.~(\ref{eq:integ_vu}) is
            \begin{equation}
            \begin{aligned}
			q(\f_{t}) & = \int_\vu q(\f_{t}, \vu) \mathrm{d}  \vu =   \int_\vu q(\vu) p(\f_t \vert \x_{t-1}, \vu) \mathrm{d}  \vu  \\
                          & =	\mathcal{N}\left(\f_t \mid  {\m}_{\f_t}, \bm{\Sigma_{\f_t}} \right) 
		\end{aligned}
            \label{eq:q(f_0_t_supp)}
            \end{equation}
		and 
		\begin{subequations}
			\begin{align}
				& {\m}_{\f_t} = K_{\x_{t-1}, \vz} K_{\vz, \vz}^{-1} {\m}\\
				& \bm{\Sigma_{\f_t}} = K_{\x_{t-1}, \x_{t-1}} \!-\!  K_{\x_{t-1}, \vz} K_{\vz, \vz}^{-1}\left[K_{\vz, \vz} - \mathbf{S} \right] K_{\vz, \vz}^{-1} K_{\vz, \x_{t-1}}
			\end{align}
		\end{subequations}
		Term 5 represents the reconstruction of the latent state trajectories, which encourages the $\f_t$ sampled from GP transition $q(\f_t)$ to accurately reconstruct the latent state $\x_t$ (from $q(\x_t)$). In other words, this term measures the quality of learning/fitting the underlying dynamical function. \vspace{.1in}
		\item Therefore, the ELBO in (\ref{eq:mf_elbo}) eventually becomes
            \begin{align}
                & \operatorname{ELBO} \approx \nonumber \\
                    & \sum_{t=1}^T \mathbb{E}_{q(\x_{t-1})} \! \left[ \log \mathcal{N}(\y_t\mid \bm{C}  \Phi_{\x_t}, \bm{R}) \! - \! \frac{1}{2}\operatorname{tr}\left[ \bm{R}^{-1} (\bm{C \Sigma_{\x_t} C^\top})\right] \right] \nonumber  \\
                & \ -\frac{1}{2}\left[ \left(\mathbf{m}_{0}^{\top} \mathbf{m}_{0} \right)+\operatorname{tr}\left(\mathbf{L}_0 \mathbf{L}_0^\top  \right)- 2\log \left|\mathbf{L}_0 \right|-d_x\right] \nonumber  \\
                & \ - \frac{1}{2}\left[{\m}^{\top} \bm{K}_{\vz,\vz}^{-1}{\m}+\operatorname{tr}\left(\bm{K}_{\vz,\vz}^{-1} \mathbf{S} \right)-\log \frac{\left|  \mathbf{S}  \right|}{\left|\bm{K}_{\vz,\vz}\right|}-M d_x\right] \nonumber  \\
                & \ +  \sum_{t=1}^T \mathbb{E}_{q(\x_{t-1})}\left[ \frac{d_x}{2} \log (2 \pi)+\frac{1}{2} \log \left| \bm{\Sigma}_{\x_t} \right|+\frac{1}{2} d_x  \right]  \nonumber \\
                & \ +\sum_{t=1}^T  \E_{q(\x_{t-1:t}) }  \left[  \frac{1}{n} \sum_{i=1}^n \log p(\x_t \vert \G_{\btheta_{F}}(\f_t^{(i)}))\right].
            \end{align}
	\end{itemize}
\end{proof}

\bibliographystyle{IEEEtran}
\bibliography{ref-tgpssm.bib}
\vfill

\newpage
\appendices
\onecolumn
\section{Sampling from TGPSSM} \label{subsec:samplingTGPSSM}
For notational brevity, we only show the sampling steps in the case of one-dimensional hidden state. It can be straightforwardly extended to high-dimensional hidden state cases.
\begin{itemize}
	\item \textbf{TGPSSM}: If $f(\cdot) \sim \mathcal{GP}(0, k(\cdot, \cdot))$, and the marginal flow is $\G(\cdot): \mathcal{F} \mapsto \mathcal{F}$, then we can sample the entire TGPSSM state trajectory by the following steps:
	\begin{subequations}
		\begin{align}
			& \x_0 \sim p(\x_0), \\
			&\f_{1} \mid \x_0 \sim \cN(\f_{1} \mid \bm{0}, \bm{K}_{\x_0, \x_0})\\
			& \tilde{\f}_{1} = \G(\f_{1})\\
			& \x_1 \mid \tilde{\f}_{1}  \sim \cN(\x_1 \mid \tilde{\f}_{1}, \bm{Q})	\\
			& \f_{2} \mid \f_{1}, \x_{0:1}  \sim \cN(\f_{2} \mid \bm{K}_{\x_1, \x_0}\bm{K}_{\x_0, \x_0}^{-1}\f_{1},   \  \bm{K}_{\x_1, \x_1} -  \bm{K}_{\x_1, \x_0}\bm{K}_{\x_0, \x_0}^{-1}  \bm{K}_{\x_0, \x_1}) \\
			& \tilde{\f}_{2} = \G(\f_{2})\\
			& \x_2 \mid \tilde{\f}_{2} \sim \cN(\x_2 \mid \tilde{\f}_{2}, \bm{Q})	\\
			& \f_{3} \mid \f_{1:2}, \x_{0:2} \sim \cN(\f_{3} \mid \bm{K}_{\x_2, \x_{0:1}}\bm{K}_{\x_{0:1}, \x_{0:1}}^{-1}\f_{1:2}, \ \bm{K}_{\x_2, \x_2} -  \bm{K}_{\x_2, \x_{0:1}}\bm{K}_{\x_{0:1}, \x_{0:1}}^{-1}  \bm{K}_{\x_{0:1}, \x_2})\\
			& \tilde{\f}_{3} = \G(\f_{3})\\
			& \x_3 \mid \tilde{\f}_{3}  \sim \cN(\x_3 \mid \tilde{\f}_{3}, \bm{Q})	\\
			& \qquad \vdots \nonumber\\
			& \f_{t} \mid \f_{1:t-1}, \x_{0:t-1} \sim \cN(\f_{t} \mid \bm{K}_{\x_{t-1}, \x_{0:t-2}}\bm{K}_{\x_{0:t-2}, \x_{0:t-2}}^{-1}\f_{1:{t-1}}, \ \bm{K}_{\x_{t-1}, \x_{t-1}} \!-\!  \bm{K}_{\x_{t-1}, \x_{0:t-2}}\bm{K}_{\x_{0:t-2}, \x_{0:t-2}}^{-1}  \bm{K}_{\x_{0:t-2}, \x_{t-1}})\\
			& \tilde{\f}_{t} = \G(\f_{t})\\
			& \x_t \mid \tilde{\f}_{t}  \sim \cN(\x_t \mid \tilde{\f}_{t}, \bm{Q})	\\
			& \qquad \vdots \nonumber
		\end{align}
	\end{subequations}
	Note that the GP sampling steps conditioning on previous sampled states guarantee the sampled state trajectory is consistent. \vspace{.05in}
	\item \textbf{Augmented TGPSSM}:  In the TGPSSM augmented by sparse inducing points, the set of inducing points $\vu$ serves as the surrogate (sufficient statistic)  of $\vf$, therefore, the GP transition function value $\f_t$ in each step can be obtained by conditioning on $\vu$, the sampling steps are summarized as follows:
	\begin{subequations}
		\begin{align}
			& \vu \sim p(\vu \mid \bm{0}, \bm{K}_{\vz,\vz})\\
			& \x_0 \sim p(\x_0), \\
			&\f_{1} \mid \x_0, \vu \sim \cN \left(\f_{1} \mid \bK_{\x_0, \vz} \bK_{\vz,\vz}^{-1} \vu,   \ \bK_{\x_0,\x_0} \!-\! \bK_{\x_0, \vz} \bK_{\vz,\vz}^{-1}  \bK_{\x_0,\vz} ^\top \right)\\
			& \tilde{\f}_{1} = \G(\f_{1})\\
			& \x_1 \mid \tilde{\f}_{1}  \sim \cN(\x_1 \mid \tilde{\f}_{1}, \bm{Q})	\\
			& \f_{2} \mid \x_{1}, \vu \sim \cN \left( \f_{2}  \mid \bK_{\x_1, \vz} \bK_{\vz,\vz}^{-1} \vu,   \ \bK_{\x_1,\x_1} \!-\! \bK_{\x_1, \vz} \bK_{\vz,\vz}^{-1}  \bK_{\x_1,\vz} ^\top \right)\\
			& \tilde{\f}_{2} = \G(\f_{2})\\
			& \x_2 \mid \tilde{\f}_{2} \sim \cN(\x_2 \mid \tilde{\f}_{2}, \bm{Q})	\\
			& \qquad \vdots \nonumber\\
			& \f_{t} \mid \x_{t-1}, \vu \sim \cN \left( \f_{t}  \mid \bK_{\x_{t-1}, \vz} \bK_{\vz,\vz}^{-1} \vu,   \ \bK_{\x_{t-1},\x_{t-1}} \!-\! \bK_{\x_{t-1}, \vz} \bK_{\vz,\vz}^{-1}  \bK_{\x_{t-1},\vz} ^\top \right) \\
			& \tilde{\f}_{t} = \G(\f_{t})\\
			& \x_t \mid \tilde{\f}_{t}  \sim \cN(\x_t \mid \tilde{\f}_{t}, \bm{Q})	\\
			& \qquad \vdots \nonumber
		\end{align}
	\end{subequations}
\end{itemize}

Fig.~\ref{fig:SSM_prior} presents examples of state trajectories sampled from TGPSSM and GPSSM, both learned from the kink-step function dataset. The SE kernel function is used in both models, and for TGPSSM, the normalizing flows are a combination of three blocks of SAL flow and one block of Tanh flow.



\begin{figure}[t!]
	\centering
	\subfloat[GPSSM learned from kink-step function dataset]{\label{fig:gpssm_prior1} \includegraphics[width=0.4\columnwidth]{figs/COGPSSM_ksfunc.pdf}} \hspace{.2in}
	\subfloat[State trajectories sampled from the GPSSM]{\label{fig:gpssm_prior2} \includegraphics[width=0.4\columnwidth]{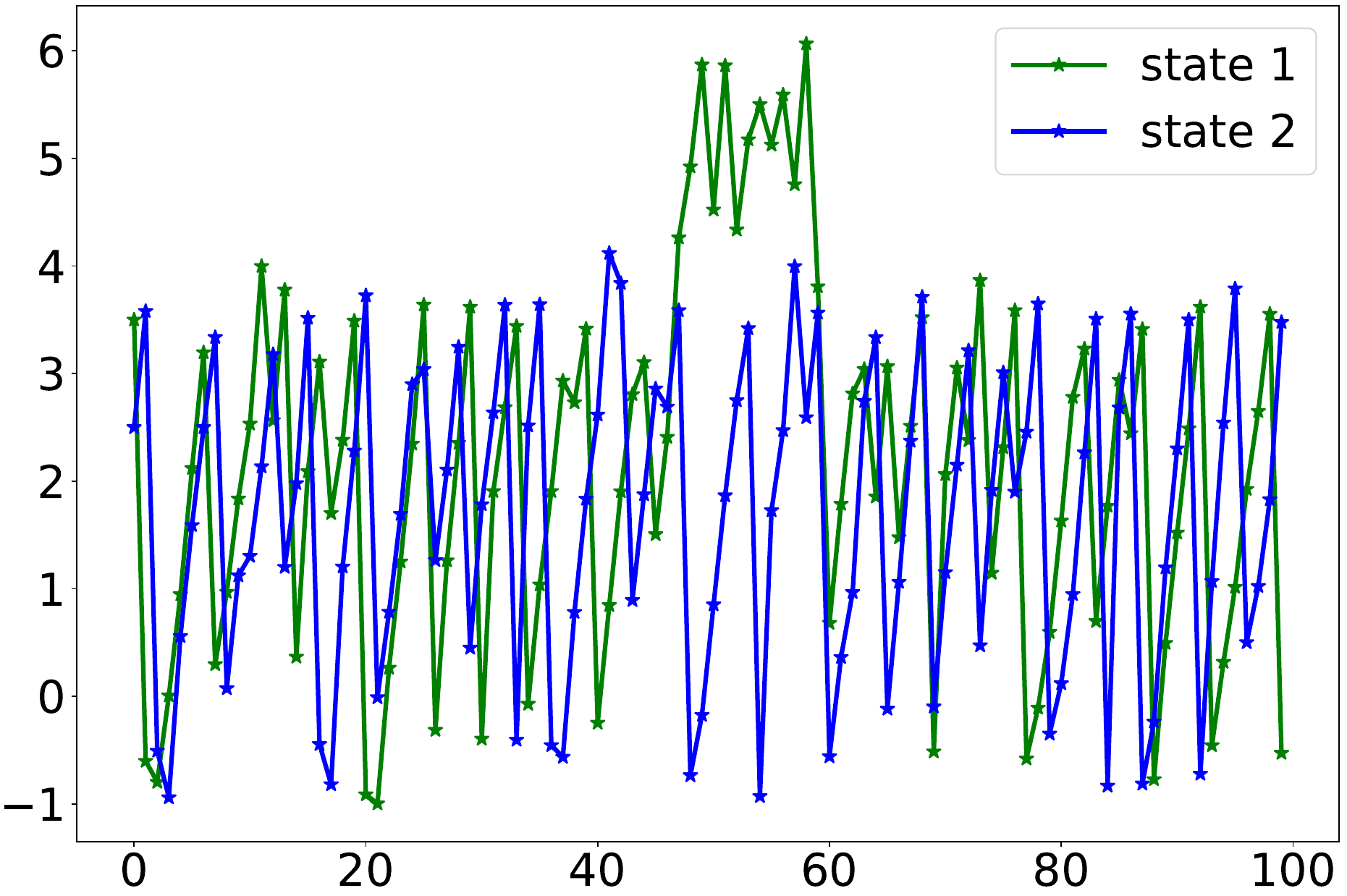}}
 
	\subfloat[TGPSSM learned from kink-step function dataset]{\label{fig:tgpssm_prior1} \includegraphics[width=0.4\columnwidth]{figs/COTGPSSM_ksfunc.pdf}} \hspace{.2in}
	\subfloat[State trajectories sampled from the TGPSSM]{\label{fig:tgpssm_prior2} \includegraphics[width=0.4\columnwidth]{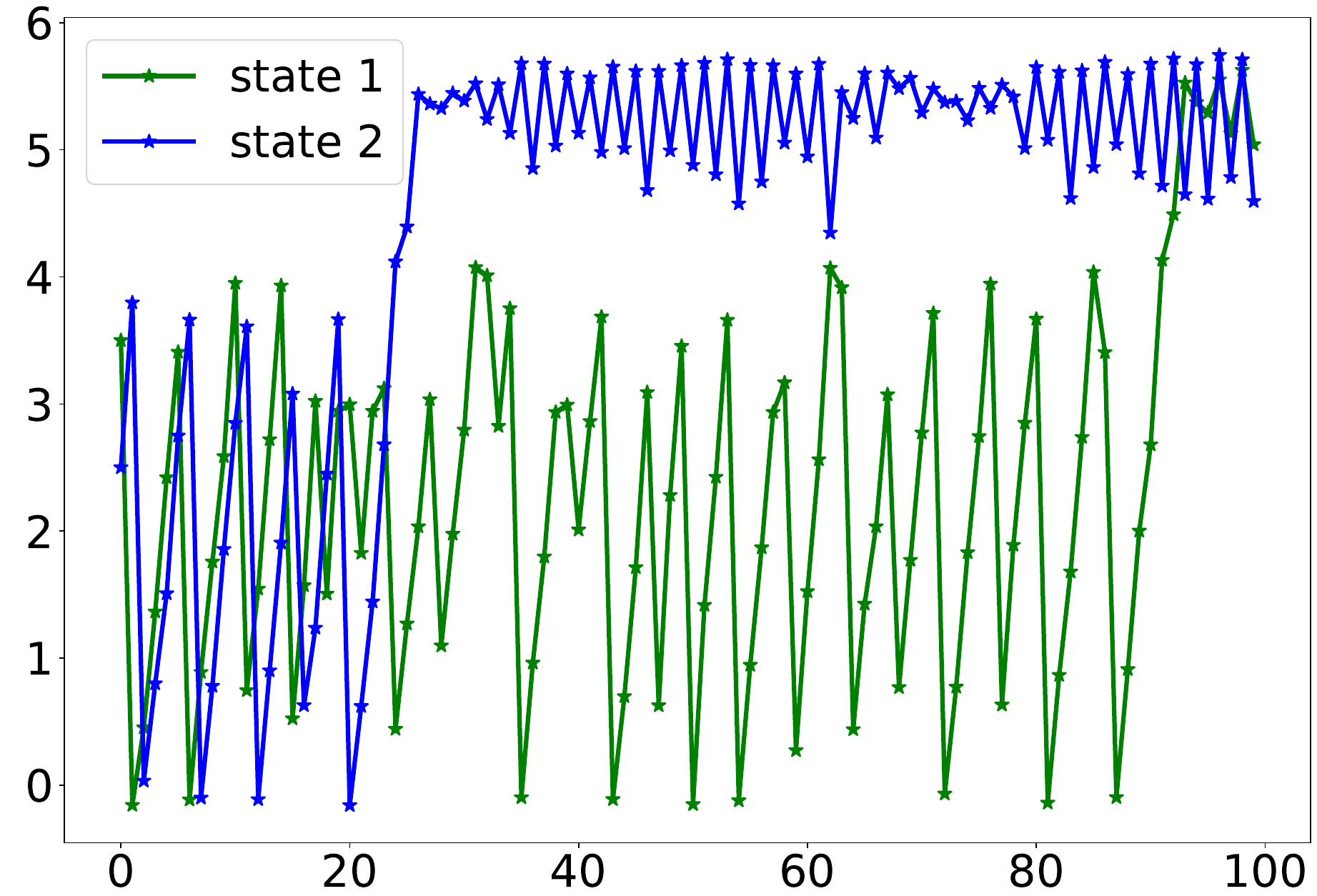}}
	\caption{The learned GPSSM and TGPSSM, and the corresponding state trajectories sampled from each model. The SE kernel function is used in both GPSSM and TGPSSM. The normalizing flows used in TGPSSM are simply a combination of three blocks of SAL flow and one block of Tanh flow.  
	}
	\label{fig:SSM_prior}
\end{figure}

\section{More Practical Implementation Details}
\label{appendix_III}

\subsection{Description of Flows} \label{subsec:appx_flow_table}
This subsection provides some elementary flows along with their compositions commonly utilized in the literature. See Table~\ref{tab:flows}.
%
\begin{table*}[!ht]
	\centering
	\begin{threeparttable}[b]
		\caption{Elementary flows and their compositions}
		\begin{tabular}{p{1.5cm}<{\centering}|ccccccl}
			\toprule
			\multirow{3}{*}{\begin{tabular}[c]{@{}c@{}}\textbf{Elementary} \\ \textbf{Flow}\end{tabular}} & \multicolumn{1}{c|}{\textbf{Arcsinh}} & \multicolumn{1}{c|}{\textbf{Log}} & \multicolumn{1}{c|}{\textbf{Exp}}  & \multicolumn{1}{c|}{\textbf{Linear}}     & \multicolumn{1}{c|}{\textbf{Sinh-Arcsinh}} & \multicolumn{1}{c|}{\textbf{Box-Cox}}   & \multicolumn{1}{c}{\textbf{Tanh}}                  \\ \cline{2-8} 
			& \multicolumn{1}{c|}{$a \!+\! b \operatorname{arcsinh}\left[d (\f-c) \right]\!\!,$}       & \multicolumn{1}{c|}{$\log(\f)$}    & \multicolumn{1}{c|}{$\exp(\f)$}    & \multicolumn{1}{c|}{$a + b \f,$}      &  \multicolumn{1}{c|}{$\sinh \left[ b  \operatorname{arcsinh}\left(\mathbf{f}\right) \! - \! a\right]$,}       & \multicolumn{1}{c|}{$\frac{1}{\lambda}\left(\operatorname{sgn}\left(\mathbf{f}\right)\left|\mathbf{f}\right|^{\lambda}-1\right)$,}             &  \multicolumn{1}{c}{$a \tanh \left[b\left(\mathbf{f}+c\right)\right] \! + \! d$,}     \\ 
			& \multicolumn{1}{c|}{$a, b,c,d \in \mathbb{R}$}        & \multicolumn{1}{c|}{}     & \multicolumn{1}{c|}{}     & \multicolumn{1}{c|}{$a, b \in \mathbb{R}$}             & \multicolumn{1}{c|}{$a,b \in \mathbb{R}$}         & \multicolumn{1}{c|}{$\lambda > 0$}                &  \multicolumn{1}{c}{$a, b, c, d \in \mathbb{R}$}                       \\  \midrule 
			\multirow{2}{*}{\begin{tabular}[c]{@{}c@{}}\textbf{Flow} \\ \textbf{Composition}\end{tabular}}  & \multicolumn{3}{c|}{\textbf{Sum of Log-Exp} \cite{wilson2010copula}}                            & \multicolumn{2}{c|}{\textbf{Sinh-Arcsinh-Linear (SAL) } \cite{rios2020contributions}}    & \multicolumn{2}{c}{\textbf{Sum of Tanh} \cite{snelson2003warped}}  \\ \cline{2-8} 
			& \multicolumn{3}{c|}{$\sum_{j=0}^{\mathrm{J-1}} a_{j} \log \left(1+\exp \left[b_{j}(\f+c_{j})\right]\right)$,}         & \multicolumn{2}{c|}{$d \sinh \left(b \operatorname{arcsinh}(\mathbf{f} )-a \right)+c$,}   & \multicolumn{2}{c}{$\f+\sum_{j=0}^{\mathrm{J-1}} a_{j} \tanh \left[ b_{j}(\f+c_{j}) \right]$,}       \\ 
			& \multicolumn{3}{c|}{$a_j, b_j \ge 0, \forall j$}                    &  \multicolumn{2}{c|}{$a, b, c, d \in \mathbb{R}$}        & \multicolumn{2}{c}{$a_j, b_j \ge 0, \forall j$}                      \\
			\bottomrule
		\end{tabular}
		\label{tab:flows}
		\begin{tablenotes}
			\item[$\bullet$] If the input argument $\f$ is multidimensional, one can use a common flow to transform $\f$ element-wisely or multiple flows to transform each dimension of $\f$ independently. We transform each dimension of $\f$ independently in the main-text when using elementary flows.
		\end{tablenotes}
	\end{threeparttable}
\end{table*}

\subsection{Selection of Desired Data Reconstruction Quality} \label{subsec:selection_R0_appx}

This subsection provides a detailed account of the empirical process used to select $\mathcal{R}_0$ for constrained optimization algorithms. In the case that we have sufficient prior knowledge of the data, we can manually set a desired data reconstruction quality value, $\mathcal{R}_0$. Alternatively, we can pre-train the inference network to obtain the reconstruction quality of the generated latent state trajectories with respect to the observations, and use this as the empirical reconstruction quality, $\mathcal{R}_0$. The specific steps for empirically calculating $\mathcal{R}_0$ are summarized in Algorithm~\ref{alg_tgpssm1_R0_selection}.
%
%
\begin{algorithm}[ht!] 
	\caption{Empirical $\mathcal{R}_0$ Calculation for Algorithm \ref{alg_tgpssm2}} 
	\label{alg_tgpssm1_R0_selection} 
	\KwIn{Dataset $\{\y_{1:T}\}$.  Initial parameters $\btheta^{(0)}$, $\bm{\zeta}^{(0)}$.}
	\While{not terminated}
	{	
		Evaluate Eq.~(\ref{eq:term2}) and Eq.~(\ref{eq:term3});\\
		Sample state trajectory $\x_{0:T} \! \sim  \! q(\x_{0:T})$ (Eq.~(\ref{eq:MF_qx}));\\
		  \For{$t=1:T$}{
                Evaluate entropy term,  Eq.~(\ref{eq:term4});\\
			  Evaluate data reconstruction term, Eq.(\ref{eq:term1});\\
		}
		Evaluate ELBO (Eq.~(\ref{eq:elbo_tgpssm_mf})) without calculating Eq.~(\ref{eq:term5});\\
		Estimate the Monte-Carlo gradient w.r.t. $\btheta$ and $\bzeta$;\\
		Update $\btheta$ and $\bzeta$ using Adam \cite{kingma2015adam};
	}
        \KwOut{$\mathcal{R}_0$=Eq.(\ref{eq:term1}), and initialized parameters $\btheta$ and $\bm{\zeta}$.} 
\end{algorithm}	
%

\subsection{Neural Network Architectures}
This subsection provides specific information on the neural network architectures we implemented for the (T)GPSSMs. For more comprehensive details, readers can access the publicly available source code.

Our inference network utilizes a bi-directional LSTM to encode the observed sequence, producing a 128-dimensional hidden space. The LSTM output and the latent states are passed through a one-layer neural network, performing nonlinear mapping to output the mean and covariance of the latent states. Throughout all experiments, we maintain consistency in the structure of the inference network, and default parameter values are employed. We utilize the publicly available package \cite{stimper2023normflows} to implement RealNVP, and employ a two-layer MLP with a hidden layer dimensionality of 64 as the nonlinear mappings for the RealNVP.

\subsection{Scalability of the proposed Algorithms} \label{subsec:appex_scalability}
To improve scalability for large datasets, such as long observation sequences, the two algorithms presented in the main text can utilize the stochastic gradient optimization method. In practice, this involves partitioning the complete sequence into multiple mini-batches of sub-trajectories, and then conducting stochastic optimization on each mini-batch.  In order to support this method, an additional recognition network is required to model the variational distribution $q(\x_0)$ of the initial latent state $\x_0$. To accomplish this, we utilize a LSTM to map the observations into $q(\x_0)$, which is modeled as a Gaussian. The LSTM recognition network is composed of two hidden layers with a hidden dimension of 32.





\section{More Experimental Illustration Results}
\label{appx:more_experimental_results}
This section offers additional information about datasets. Specifically, Fig.\ref{fig:1D_dataset_appx} presents the synthetic 1-D datasets; Table\ref{tab:SIDdataset} provides details about the five real-world datasets, each of them having one-dimensional deterministic control inputs. Moreover, more illustrative results are given in Figs.~\ref{fig:1Ddataresults_SSMs_appx}, \ref{fig:sysID_dataset_appx}, and \ref{fig:Lorenz_dataresults_SSMs}.

\begin{figure*}[t!]
	\centering		
	\subfloat[``Kink'' function and the generated $30$ latent states \& observations.]{\label{fig:KinkFunction_appx}\includegraphics[width =0.49\textwidth]{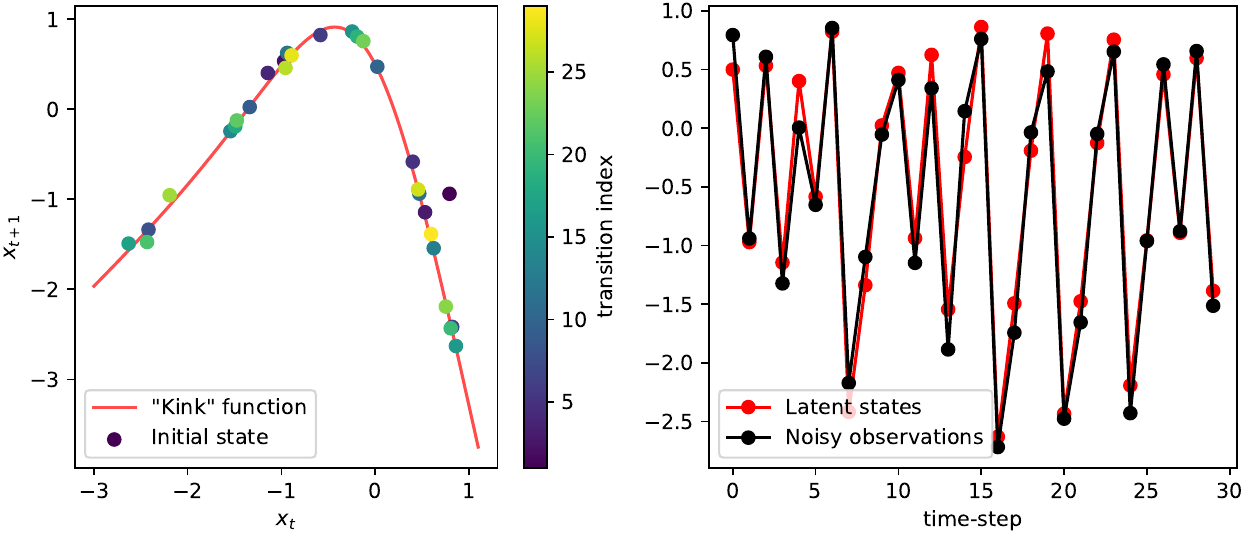}}  \hspace{.1in}
	\subfloat[``Kink-step'' function and the generated $30$ latent states \& observations.]{\label{fig:ksfunc_appx}\includegraphics[width =0.49\textwidth]{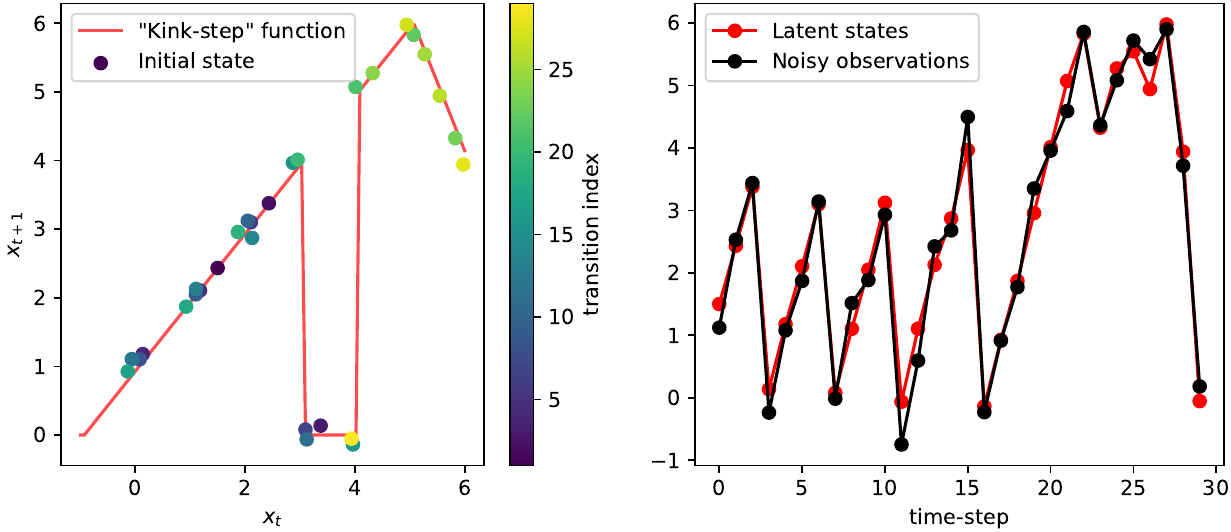}}
	\caption{The 1-D datasets (kink function and kink-step function), including the latent state trajectories and the corresponding observations.}
	\label{fig:1D_dataset_appx}
\end{figure*}
\begin{figure*}[t!]
	\centering		
	\subfloat[BS-GPSSM (MSE: 0.3059)]{\label{fig:bs_gp_kinkfunc_appx} \includegraphics[width=0.2\columnwidth]{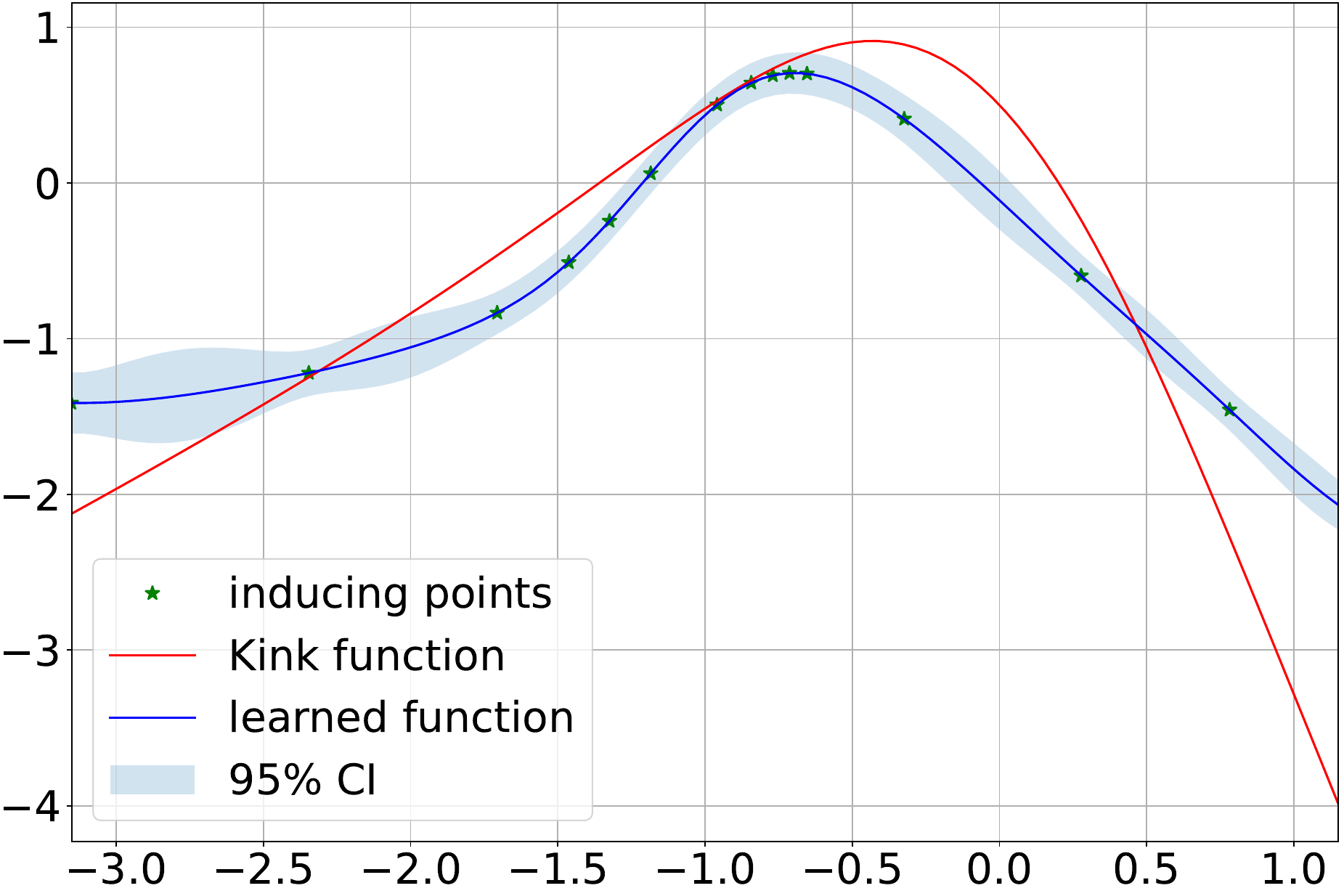}} 
	\subfloat[BS-GPSSM (MSE: {3.0663}) ]{\label{fig:bs_gp_ksfunc_appx} \includegraphics[width=0.2\columnwidth]{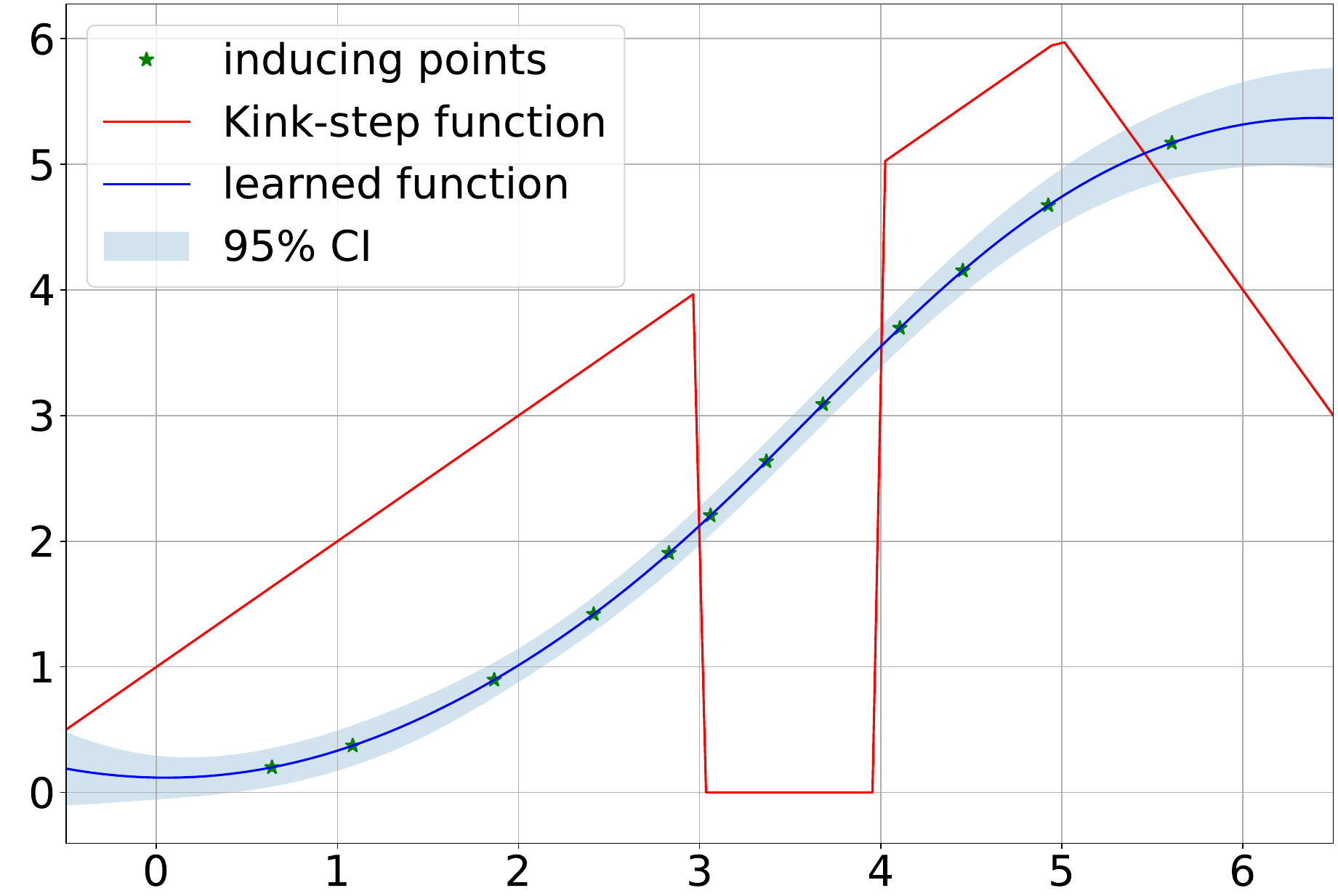}}
	\subfloat[JO-GPSSM (MSE: 0.0364)]{\label{fig:gp_kinkfunc_appx} \includegraphics[width=0.2\columnwidth]{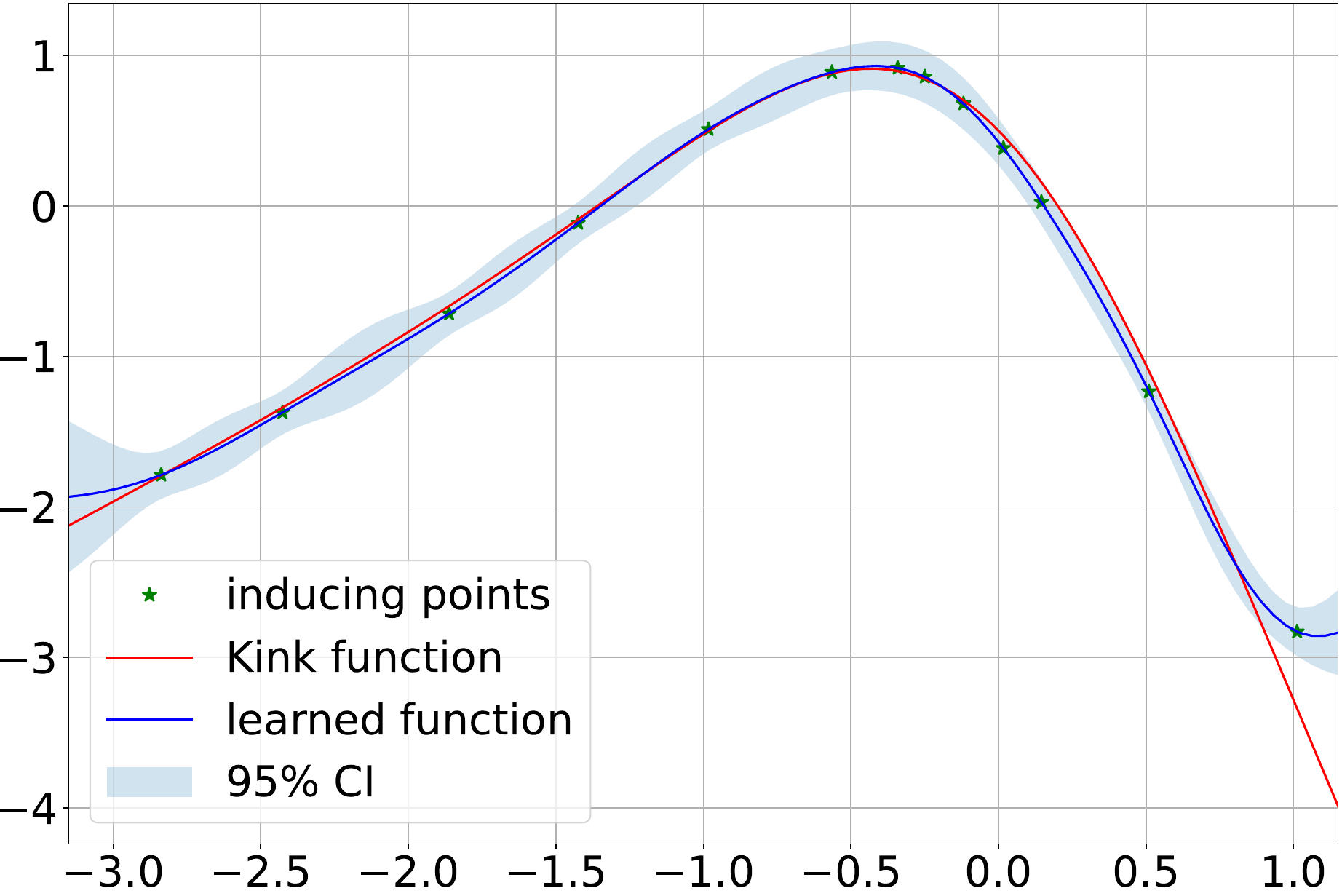}} 
	\subfloat[CO-GPSSM (MSE: {0.0410}) ]{\label{fig:cogp_kinkfunc_appx} \includegraphics[width=0.2\columnwidth]{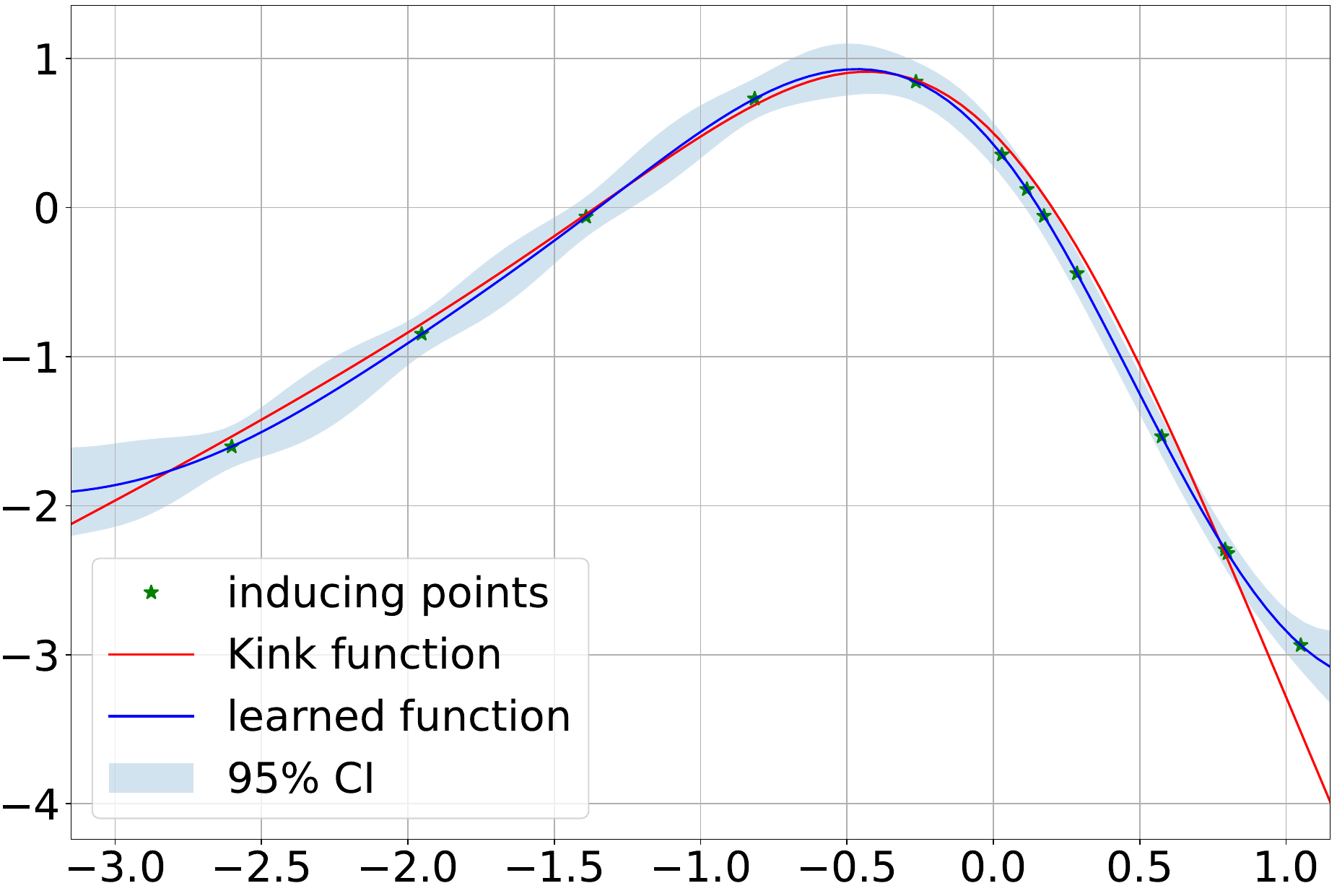}}
	\subfloat[JO-TGPSSM (MSE: 0.0361)]{\label{fig:tgp_kinkfunc_appx} \includegraphics[width=0.2\columnwidth]{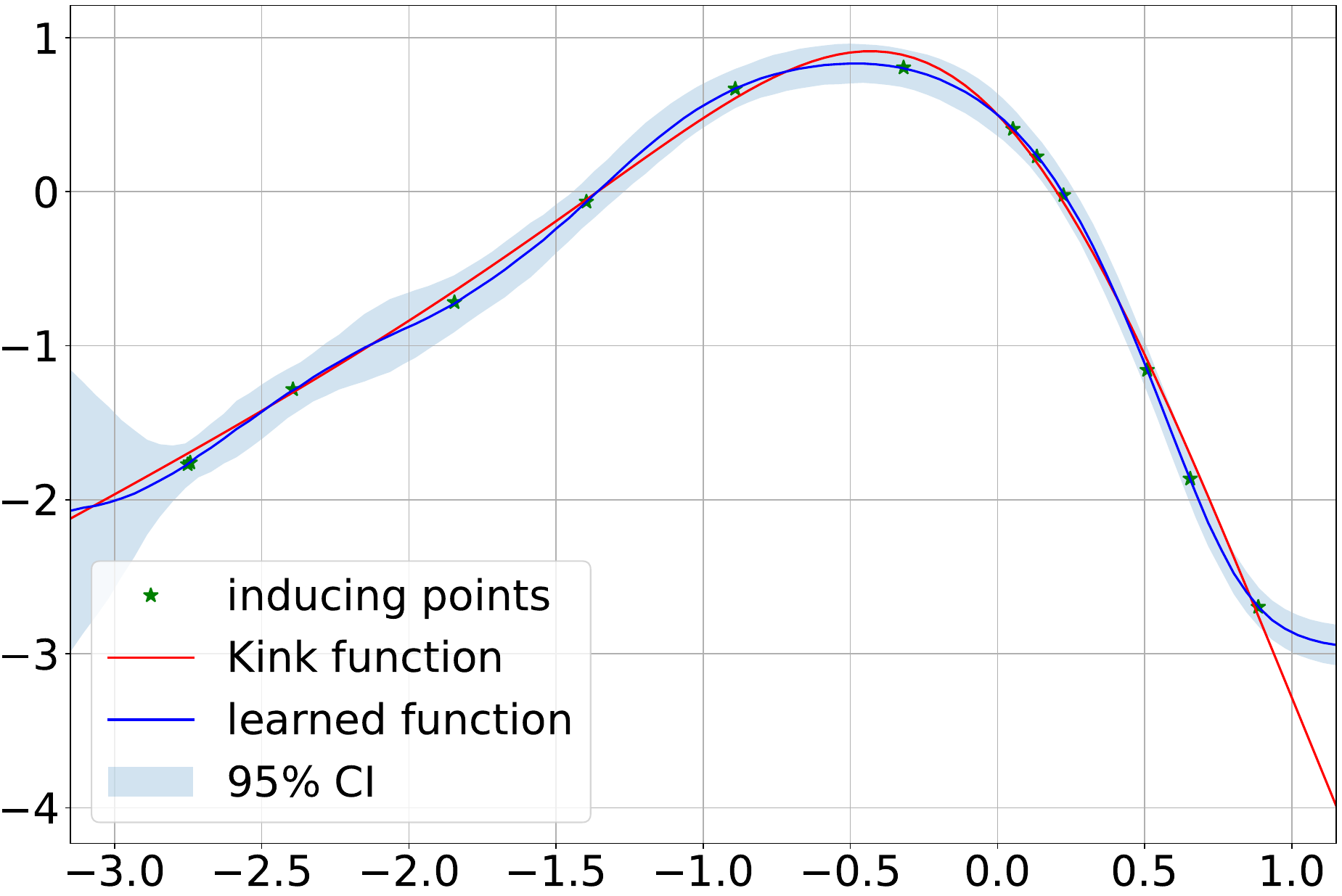}}
 
	\subfloat[CO-TGPSSM (MSE: {0.0351}) ]{\label{fig:cotgp_kinkfunc_appx} \includegraphics[width=0.2\columnwidth]{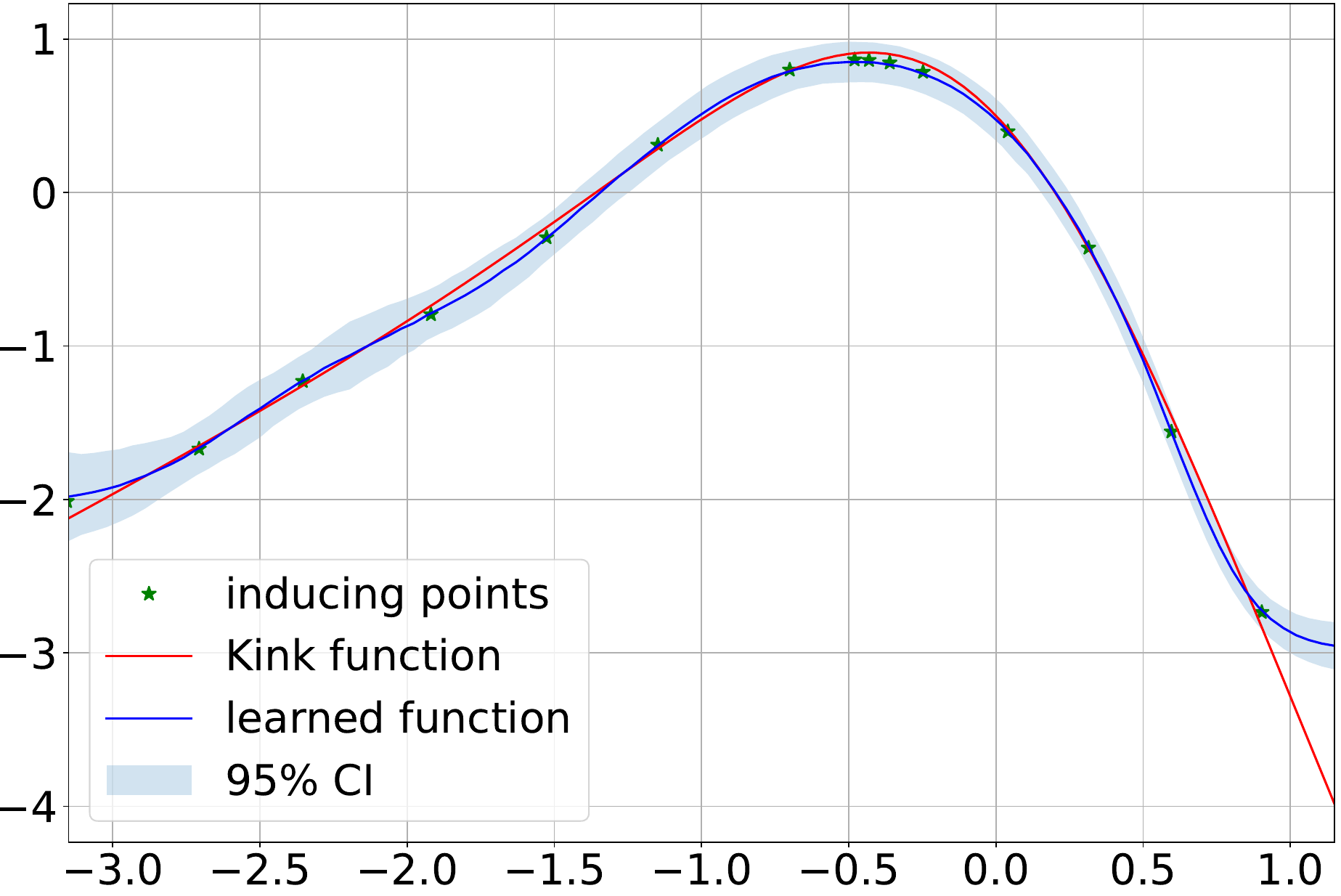}}
	\subfloat[JO-GPSSM (MSE: 2.2313)]{\label{fig:gp_ksfunc_appx} \includegraphics[width=0.2\columnwidth]{figs/JOGPSSM_ksfunc}} 
	\subfloat[CO-GPSSM (MSE: {0.3537}) ]{\label{fig:cogp_ksfunc_appx} \includegraphics[width=0.2\columnwidth]{figs/COGPSSM_ksfunc}}
	\subfloat[JO-TGPSSM (MSE: 0.4049)]{\label{fig:tgp_ksfunc_appx} \includegraphics[width=0.2\columnwidth]{figs/JOTGPSSM_ksfunc}}
	\subfloat[CO-TGPSSM (MSE: {0.2319}) ]{\label{fig:cotgp_ksfunc_appx} \includegraphics[width=0.2\columnwidth]{figs/COTGPSSM_ksfunc}}
	\caption{Learning the ``kink'' and ``kink-step'' dynamical systems using GPSSMs and TGPSSMs.}
	\label{fig:1Ddataresults_SSMs_appx}
\end{figure*}

\begin{table}[!th]
	\centering
	\caption{Details of the system identification datasets} 
	\begin{tabular}{r c  c l}
		\toprule
		{Dataset} & {Control Input} & {Observations} & {Length}  \\
		\midrule
		\textbf{Actuator} &  $ \bm{c}_t  \in \mathbb{R}$ &  $ \y_t  \in \mathbb{R}$ &  $T = 1024$\\
		\textbf{Ball Beam} &  $ \bm{c}_t  \in \mathbb{R}$ &  $ \y_t  \in \mathbb{R}$ &  $T = 1000$ \\
		\textbf{Drive}  &  $ \bm{c}_t  \in \mathbb{R}$ &  $ \y_t  \in \mathbb{R}$ &  $T = 500$ \\
		\textbf{Dryer}  &  $ \bm{c}_t  \in \mathbb{R}$ &  $ \y_t  \in \mathbb{R}$ &  $T = 1000$  \\
		\textbf{Gas Furnace} &  $ \bm{c}_t  \in \mathbb{R}$ &  $ \y_t  \in \mathbb{R}$ &  $T = 296$ \\
		\bottomrule
	\end{tabular}
	\label{tab:SIDdataset}
\end{table}

\begin{figure*}[t!]
    \centering		
    \subfloat[Actuator.]{\label{fig:actuator}\includegraphics[width =0.2\textwidth]{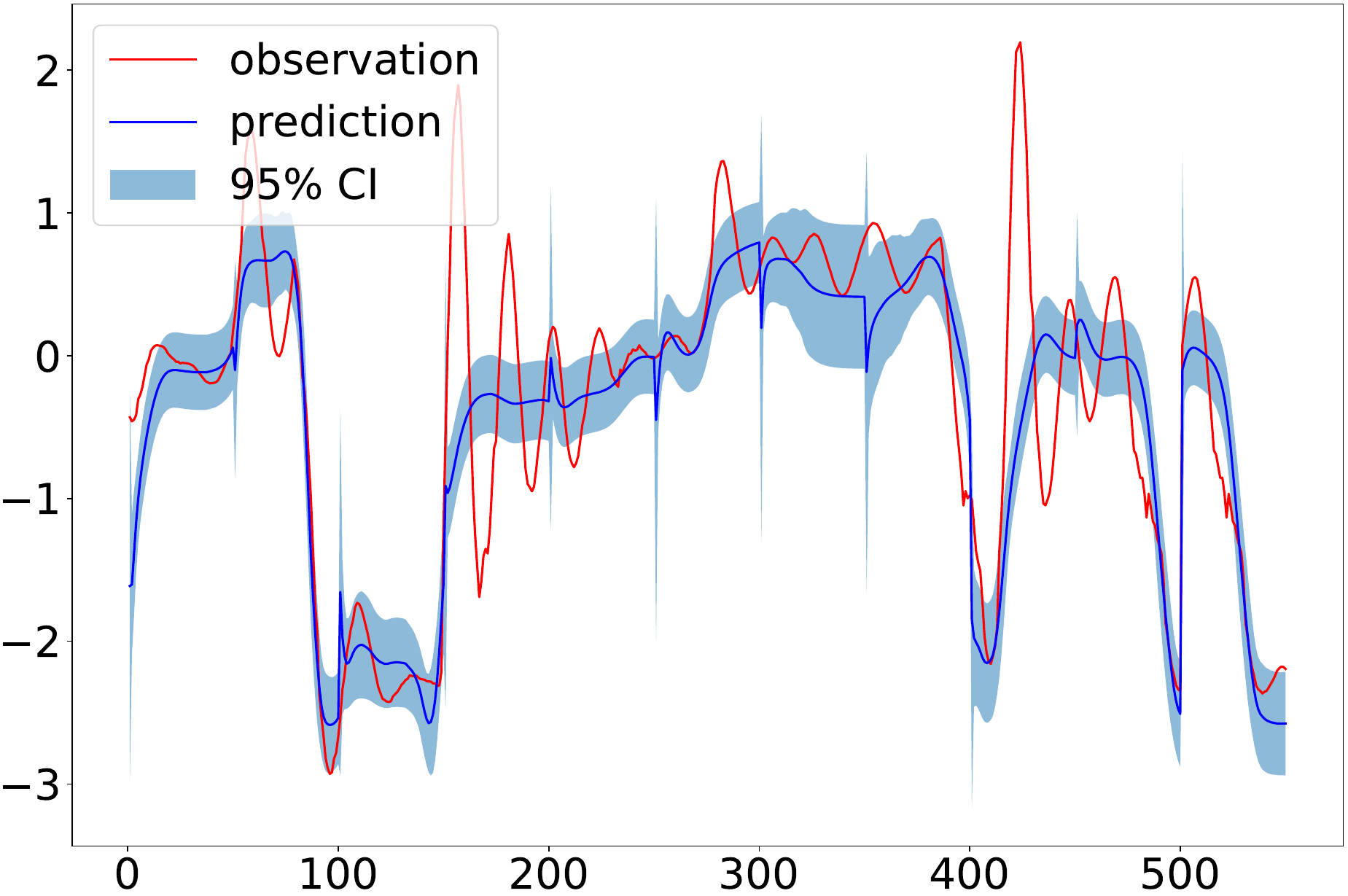}}  
    \subfloat[Ballbeam.]{\label{fig:ballbeam}\includegraphics[width =0.2\textwidth]{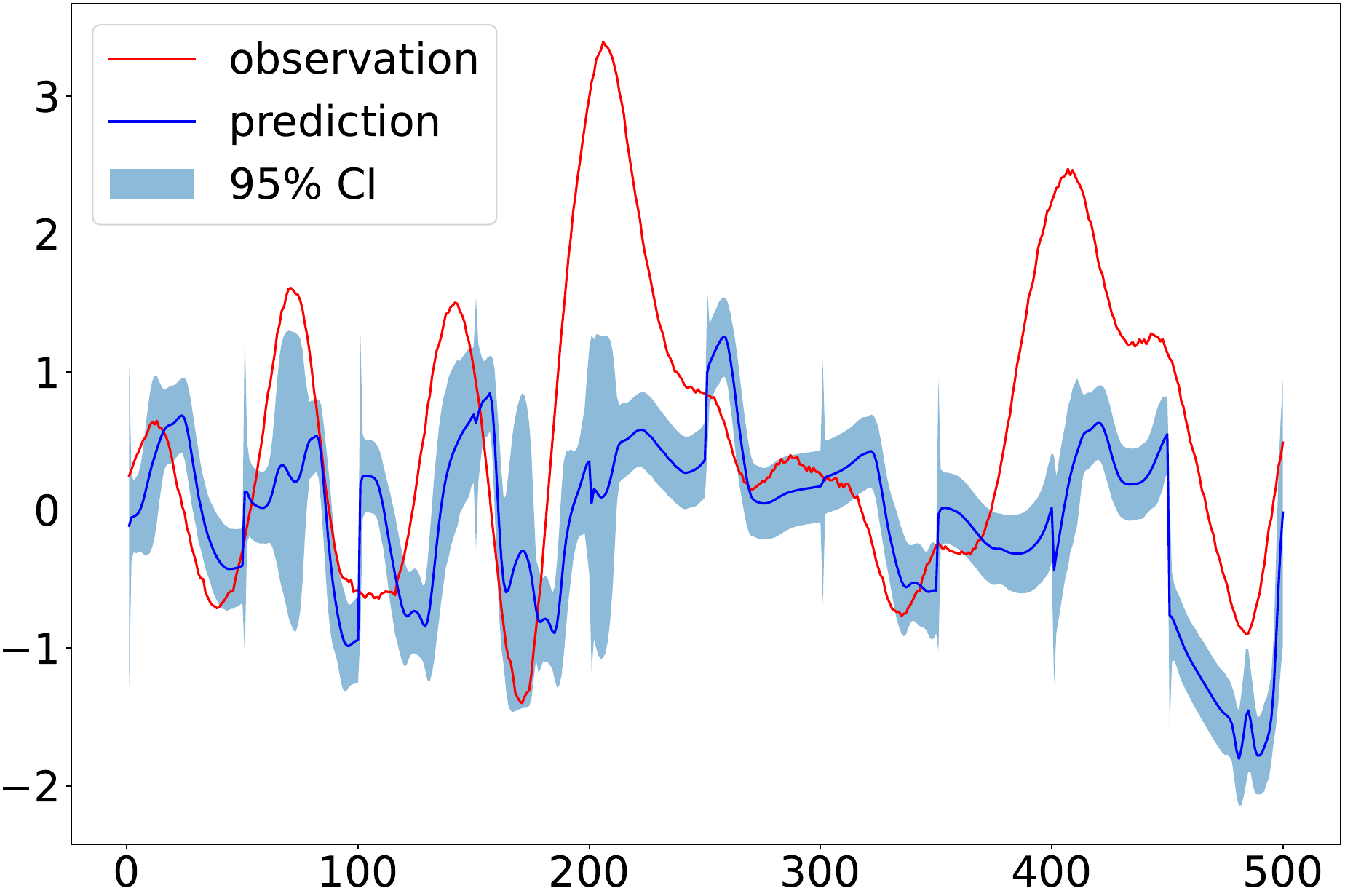}}
    \subfloat[Drive.]{\label{fig:drive}\includegraphics[width =0.2\textwidth]{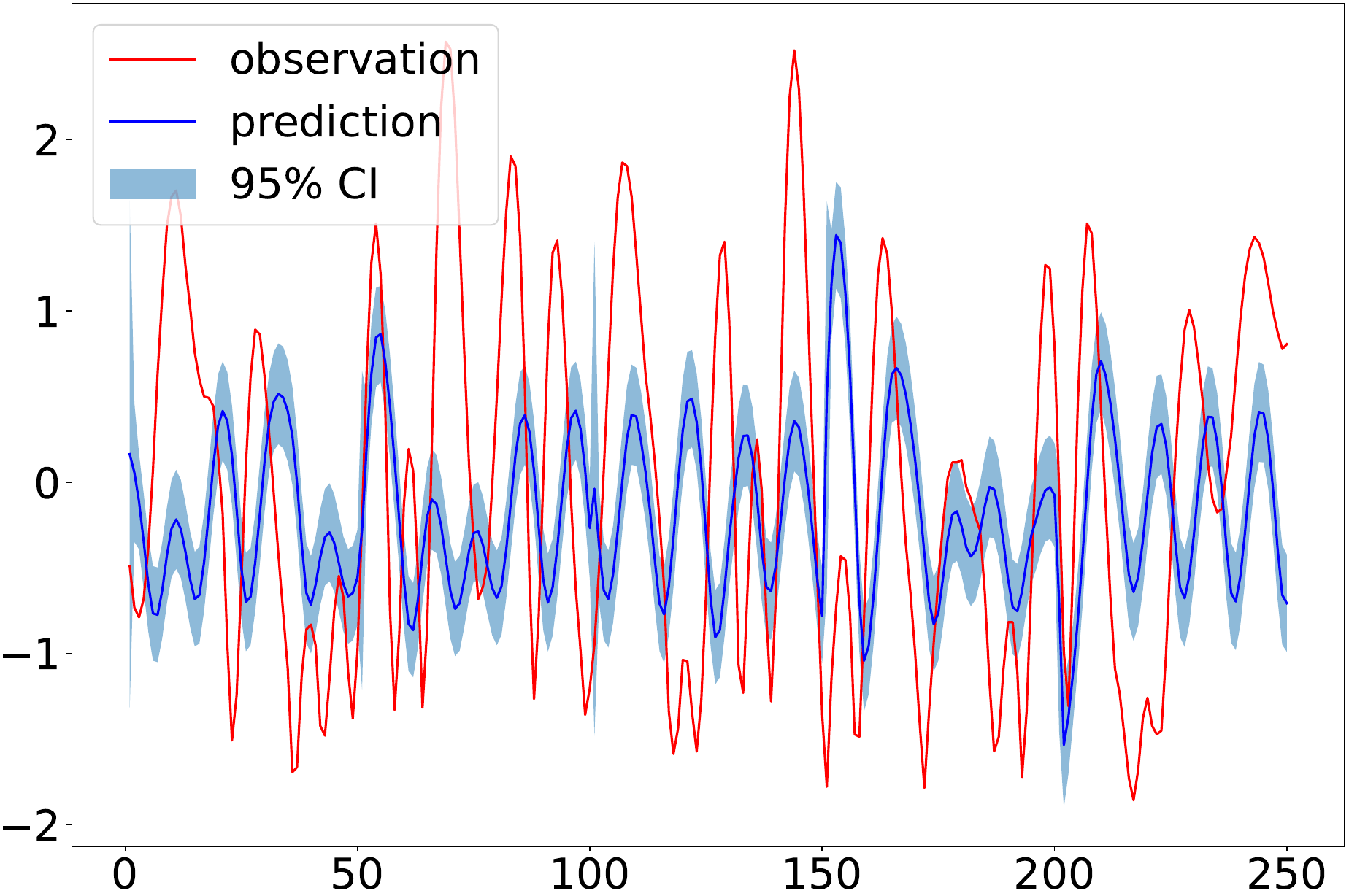}}  
    \subfloat[Dryer.]{\label{fig:dryer}\includegraphics[width =0.2\textwidth]{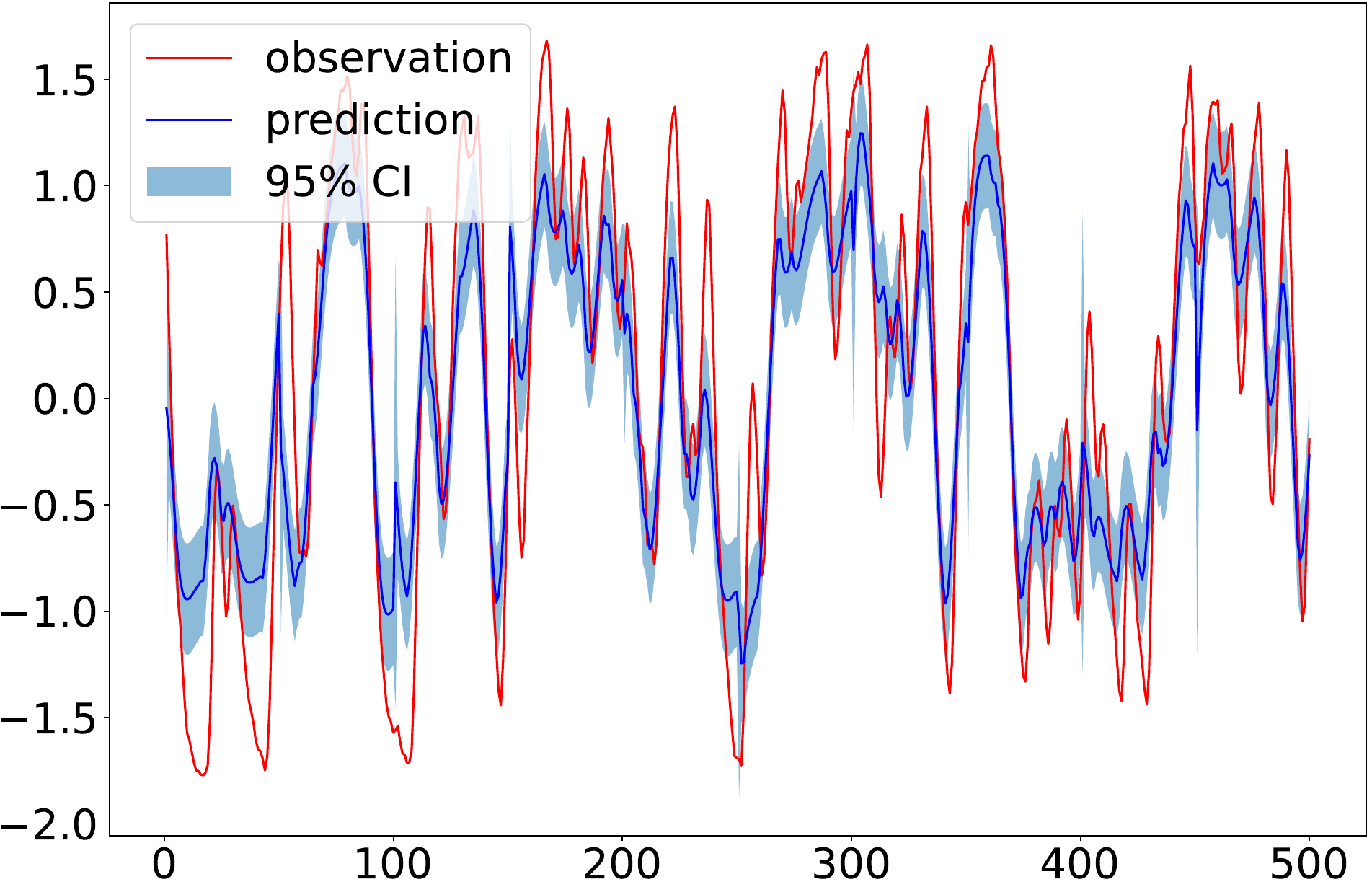}}
    \subfloat[Gas Furnace.]{\label{fig:gasfurnace}\includegraphics[width =0.2\textwidth]{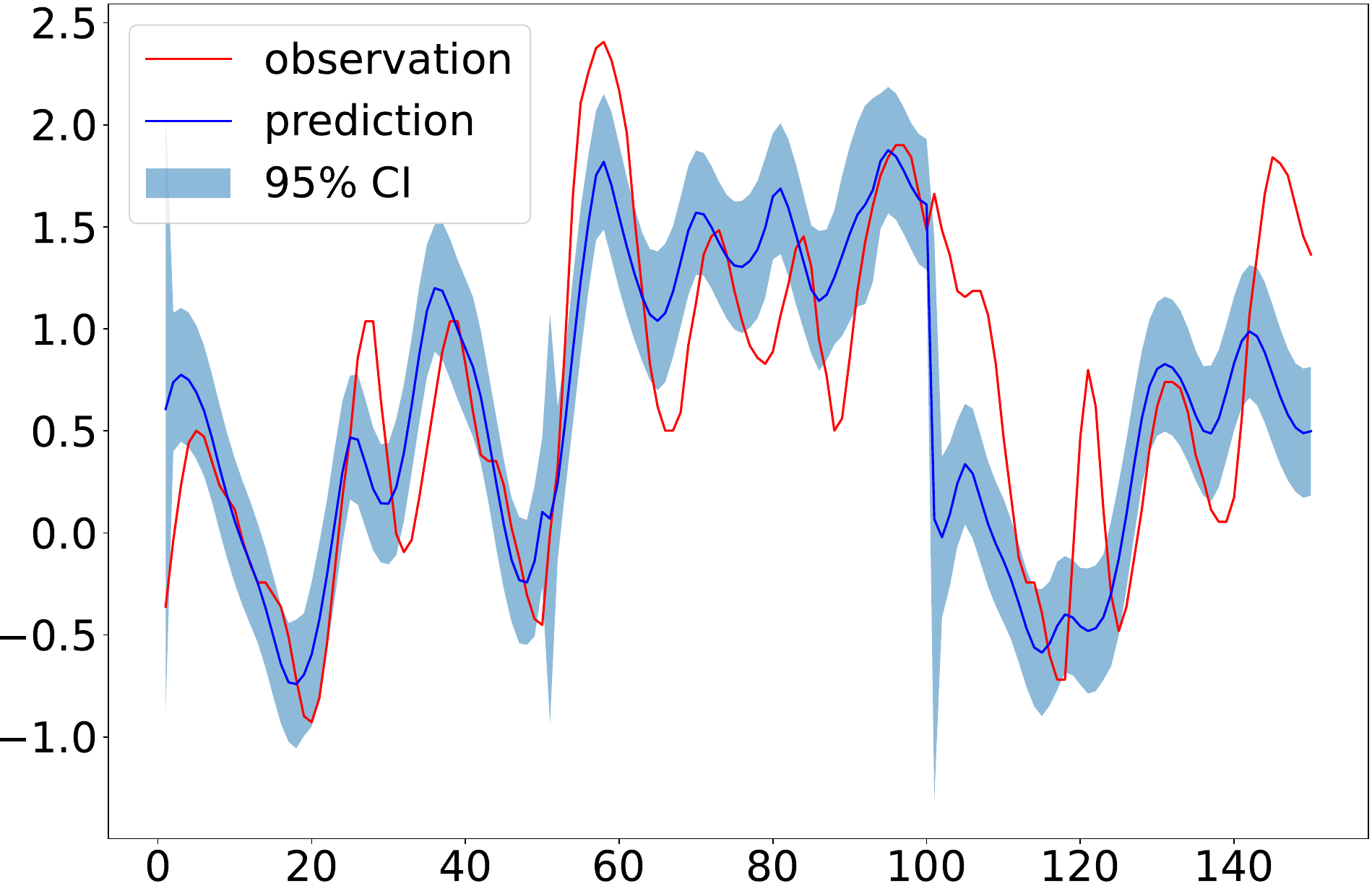}}   
    \caption{System identification datasets: Time series prediction using CO-TGPSSM}
    \label{fig:sysID_dataset_appx}
\end{figure*}

\begin{figure}[t!]
	\centering		
        \subfloat[Ground truth]{\label{fig:latentstate_GT} \includegraphics[width=0.25\textwidth]{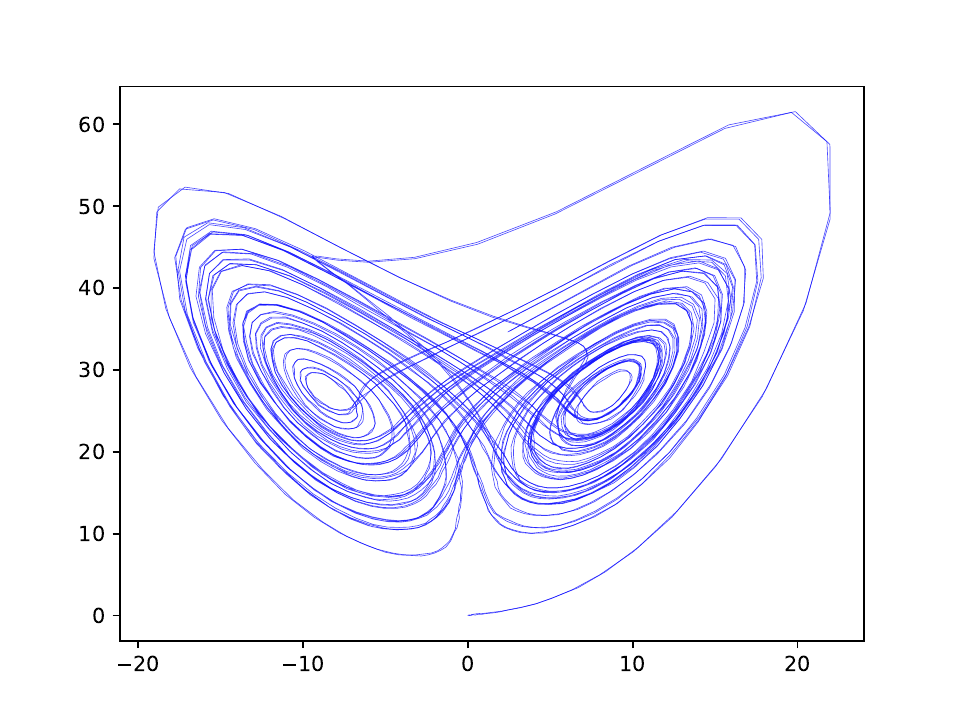}}
	\subfloat[EKF ]{\label{fig:EFK} \includegraphics[width=0.25\textwidth]{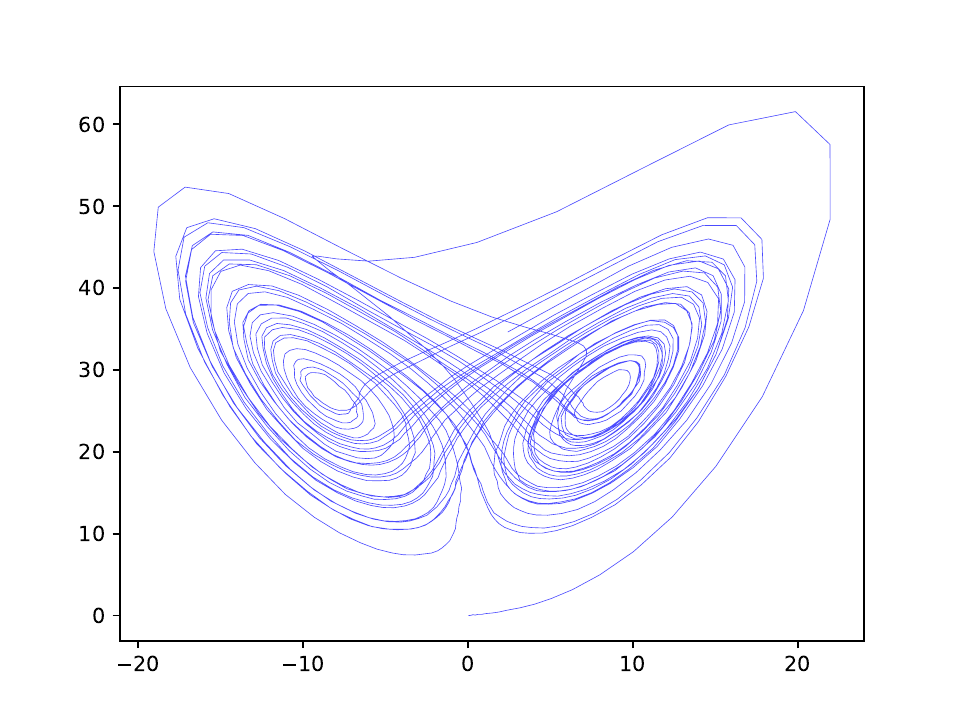}}  
        \subfloat[EKF-M ]{\label{fig:EFK_mismatch} \includegraphics[width=0.25\textwidth]{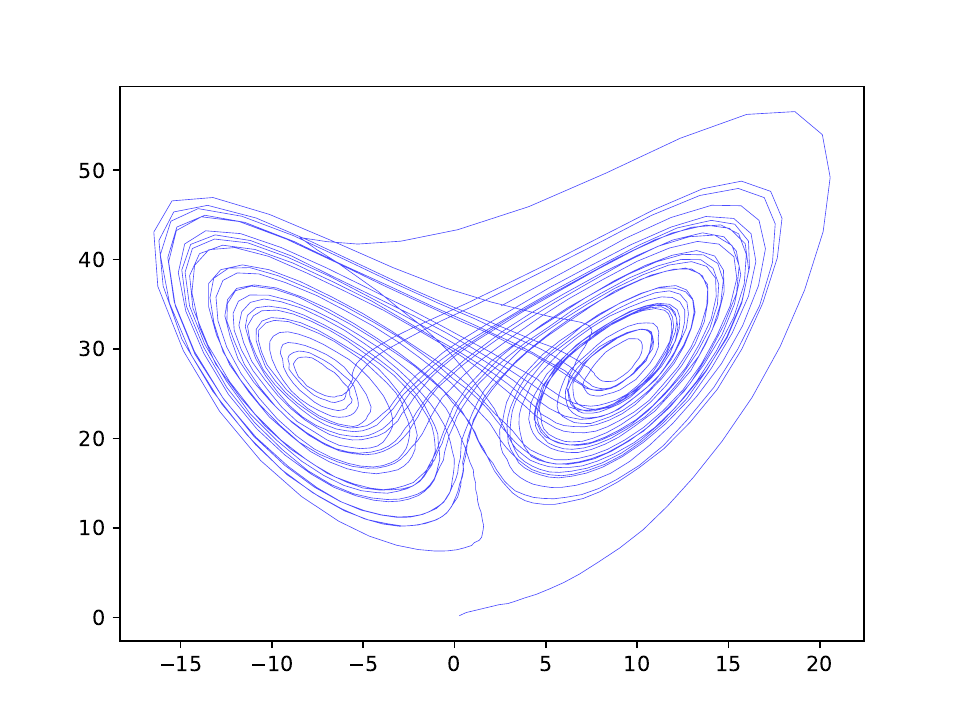}} 
        \vspace{-.1in}
        
        \subfloat[ Observations]{\label{fig:obser} \includegraphics[width=0.25\textwidth]{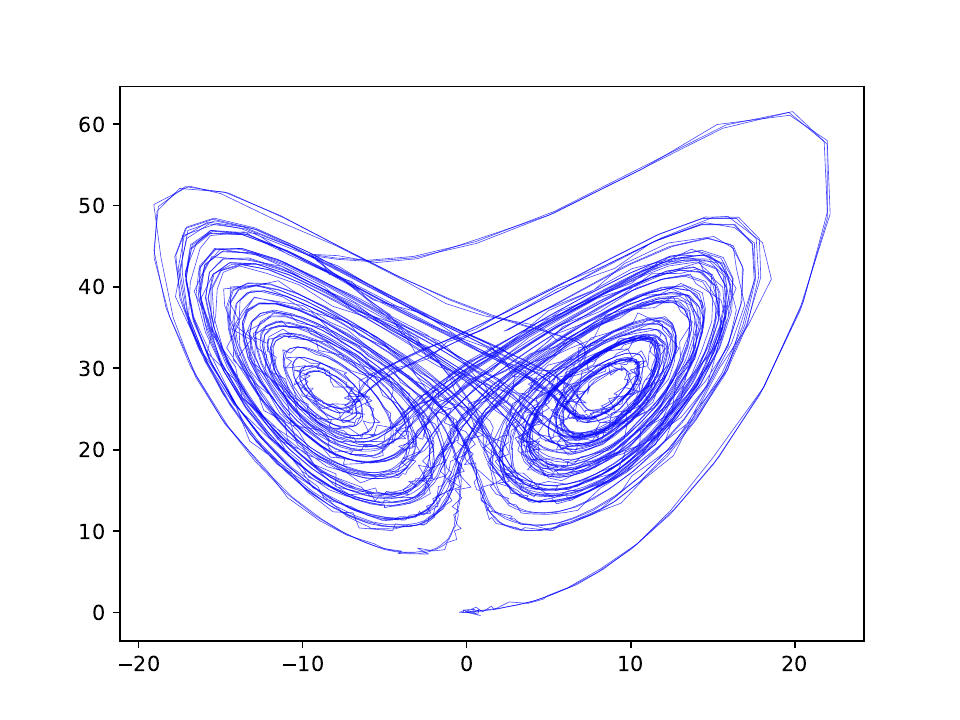}}   
	\subfloat[CO-TGPSSM]{\label{fig:COTGP} \includegraphics[width=0.25\textwidth]{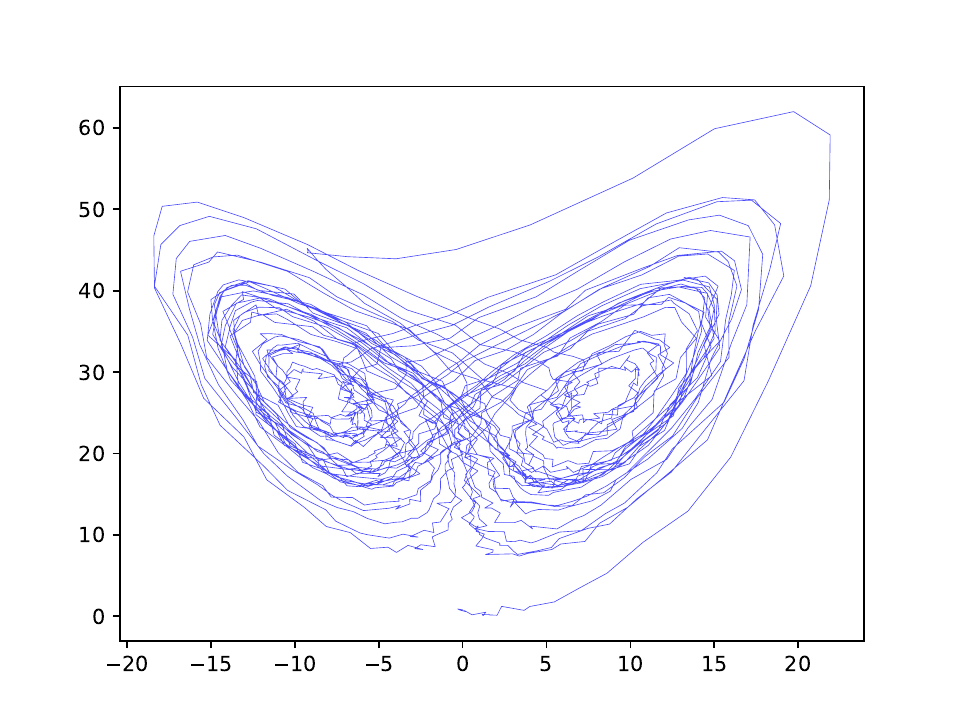}}
	\subfloat[CO-GPSSM]{\label{fig:COGP} \includegraphics[width=0.25\textwidth]{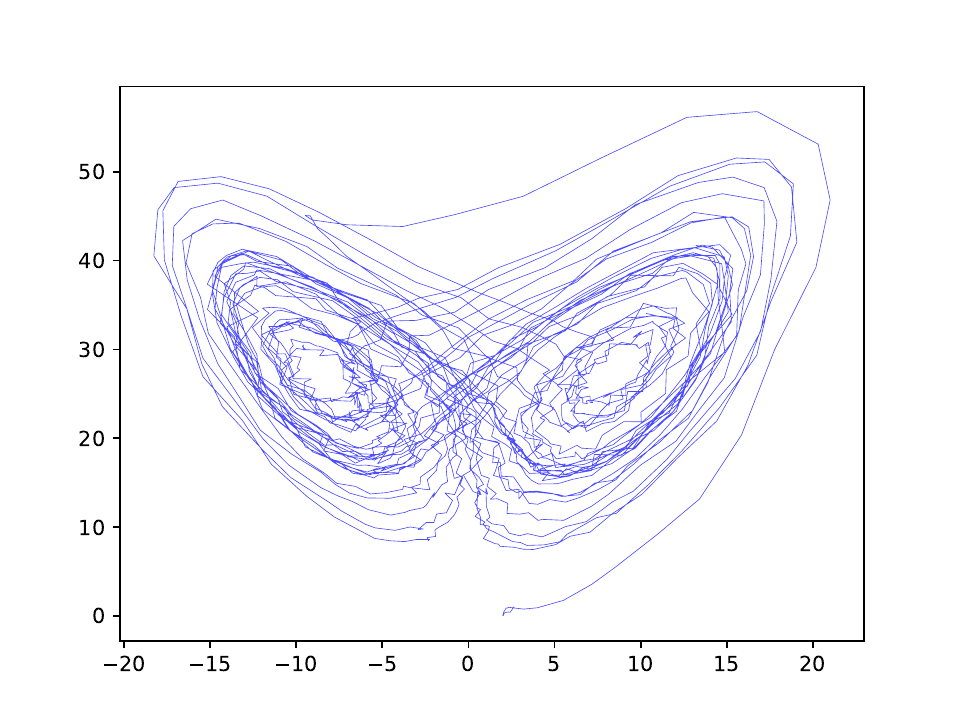}}
	\caption{Comparisons of the inferred state trajectory ($T=2000$). The 2-D plots show the state 1st dimension vs. 3rd dimension.}
	\label{fig:Lorenz_dataresults_SSMs}
\end{figure}

\clearpage
\section{Some Useful Derivations and Materials} 
\label{appendix_II}
\subsection{LOTUS Rule \cite{papamakarios2021normalizing}}
\label{subsec:LOTUS}
The law of the unconscious statistician (LOTUS) is given by 
\begin{equation}
	\E_{p_{\tilde{\x}}} [h(\tilde{\x})] = \E_{p_{\x}} [h(\G(\x))].
	\label{eq:LOTUS}
\end{equation}
That is, expectations w.r.t. the transformed density $p_{\tilde{\x}}$ can be computed without explicitly knowing $p_{\tilde{\x}}$, thus saving the computation of the determinant Jacobian terms. 

\subsection{Variational Expectation} \label{appd:reconst_error}
Assume $p(\x) = \cN(\x \mid \bm{\mu}, V)$,  $q(\x) \sim \cN(\x \mid \bm{m}, \Sigma)$, the expectation $E_{q(\x)} [\log p(\x)]$ has close-form solution and is given by 
\begin{equation}
	\begin{aligned}
		& E_{q(\x)} [\log p(\x)]  = E_{q(\x)} \left[\ln \left(\frac{1}{\sqrt{(2 \pi)^{n}|V|}} \cdot \exp \left[-\frac{1}{2}(\x-\bm{\mu})^{\mathrm{T}} V^{-1}(\x-\bm{\mu})\right]\right)\right]\\
		& = -\frac{1}{2}E_{q(\x)}\left[n \ln (2 \pi) + \ln |V|+(\x-\bm{\mu})^{\mathrm{T}} V^{-1}(\x-\bm{\mu})\right]\\
		& = -\frac{1}{2}E_{q(\x)}\left[n \ln (2 \pi) + \ln |V| + \operatorname{tr}(V^{-1}\x\x^{\mathrm{T}} ) - 2\bm{\mu}^{\mathrm{T}} V^{-1} \x + \bm{\mu}^{T} V^{-1} \bm{\mu} \right]\\
		& =  -\frac{1}{2} \left[n \ln (2 \pi) + \ln |V| + \operatorname{tr}\left(V^{-1} (\bm{m}\bm{m}^{\mathrm{T}}  + \Sigma)\right) - 2\bm{\mu}^{\mathrm{T}} V^{-1} \bm{m} + \bm{\mu}^{T} V^{-1} \bm{\mu} \right]	
	\end{aligned}
\end{equation}

\subsection{Differential Entropy of Multivariate Normal Distribution}\label{appendx:entropy}
\begin{theorem}\label{thm:entropy}
	Let $\mathbf{x}$ follow a multivariate normal distribution
	$$
	\x \sim \mathcal{N}(\bm{\mu}, \Sigma), \quad  \x \in \mathbb{R}^n
	$$
	Then, the differential entropy of $\x$ in nats is
	$$
	\mathrm{H}(\x)\triangleq -\int_{\x} p(\x) \ln p(\x)  \mathrm{d}  \x =\frac{n}{2} \ln (2 \pi)+\frac{1}{2} \ln |\Sigma|+\frac{1}{2} n 
	$$
\end{theorem}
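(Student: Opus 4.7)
The plan is to compute the differential entropy directly from the definition by substituting the multivariate Gaussian density and then evaluating the resulting integral term by term. Writing $p(\x) = (2\pi)^{-n/2}|\Sigma|^{-1/2} \exp\!\bigl(-\tfrac{1}{2}(\x-\bm{\mu})^{\top}\Sigma^{-1}(\x-\bm{\mu})\bigr)$, I would take logarithms to get
$$\ln p(\x) = -\tfrac{n}{2}\ln(2\pi) - \tfrac{1}{2}\ln|\Sigma| - \tfrac{1}{2}(\x-\bm{\mu})^{\top}\Sigma^{-1}(\x-\bm{\mu}),$$
so that $\mathrm{H}(\x) = -\E_{p(\x)}[\ln p(\x)]$ splits into three pieces: two deterministic constants that pull out of the expectation, and one quadratic expectation.

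The only nontrivial step is evaluating $\E_{p(\x)}[(\x-\bm{\mu})^{\top}\Sigma^{-1}(\x-\bm{\mu})]$. I would use the standard trace trick, writing the scalar quadratic form as $\operatorname{tr}\!\bigl(\Sigma^{-1}(\x-\bm{\mu})(\x-\bm{\mu})^{\top}\bigr)$, then interchanging expectation and trace (which is justified by linearity) to obtain $\operatorname{tr}\!\bigl(\Sigma^{-1}\E_{p(\x)}[(\x-\bm{\mu})(\x-\bm{\mu})^{\top}]\bigr) = \operatorname{tr}(\Sigma^{-1}\Sigma) = \operatorname{tr}(\mathbf{I}_n) = n$, where the inner expectation equals $\Sigma$ by definition of the covariance matrix of $\x$.

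Putting the pieces together yields
$$\mathrm{H}(\x) = \tfrac{n}{2}\ln(2\pi) + \tfrac{1}{2}\ln|\Sigma| + \tfrac{1}{2}n,$$
which is exactly the claim. There is no real obstacle here; the only thing to be careful about is ensuring that $\Sigma$ is positive definite so that $|\Sigma|>0$ and $\Sigma^{-1}$ exists, which is implicit in the statement that $\x$ follows a (nondegenerate) multivariate normal distribution. If one wanted to handle the degenerate case, the entropy would be $-\infty$ (or one would work on the support subspace), but the stated formula presupposes the full-rank case, so no further bookkeeping is required.
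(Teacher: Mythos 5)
Your proposal is correct and follows essentially the same route as the paper's proof: substitute the Gaussian log-density, pull out the constant terms, and evaluate the quadratic expectation via the trace trick $\E[(\x-\bm{\mu})^{\top}\Sigma^{-1}(\x-\bm{\mu})] = \operatorname{tr}(\Sigma^{-1}\Sigma) = n$. Your added remark on the nondegeneracy of $\Sigma$ is a sensible clarification that the paper leaves implicit, but the core argument is identical.
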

\begin{proof}
	$$
	\begin{aligned}
		\mathrm{H}(\x) &=-\mathrm{E}\left[\ln \left(\frac{1}{\sqrt{(2 \pi)^{n}|\Sigma|}} \cdot \exp \left[-\frac{1}{2}(\x-\bm{\mu})^{\mathrm{T}} \Sigma^{-1}(\x-\bm{\mu})\right]\right)\right] \\
		&=-\mathrm{E}\left[-\frac{n}{2} \ln (2 \pi)-\frac{1}{2} \ln |\Sigma|-\frac{1}{2}(\x-\bm{\mu})^{\mathrm{T}} \Sigma^{-1}(\x-\bm{\mu})\right] \\
		&=\frac{n}{2} \ln (2 \pi)+\frac{1}{2} \ln |\Sigma|+\frac{1}{2} \mathrm{E}\left[(\x-\bm{\mu})^{\mathrm{T}} \Sigma^{-1}(\x-\bm{\mu})\right]
	\end{aligned}
	$$
	The last term can be evaluted as
	$$
	\begin{aligned}
		\mathrm{E}\left[(\x-\bm{\mu})^{\mathrm{T}} \Sigma^{-1}(\x-\bm{\mu})\right] &=\mathrm{E}\left[\operatorname{tr}\left((\x-\bm{\mu})^{\mathrm{T}} \Sigma^{-1}(\x-\bm{\mu})\right)\right] \\
		&=\mathrm{E}\left[\operatorname{tr}\left(\Sigma^{-1}(\x-\bm{\mu})(\x-\bm{\mu})^{\mathrm{T}}\right)\right] \\
		&=\operatorname{tr}\left(\Sigma^{-1} \mathrm{E}\left[(\x-\bm{\mu})(\x-\bm{\mu})^{\mathrm{T}}\right]\right) \\
		&=\operatorname{tr}\left(\Sigma^{-1} \Sigma\right) \\
		&=\operatorname{tr}\left(I_{n}\right) \\
		&=n
	\end{aligned}
	$$
	Therefore the differential entropy is
	$$
	\mathrm{H}(\x)=\frac{n}{2} \ln (2 \pi)+\frac{1}{2} \ln |\Sigma|+\frac{1}{2} n .
	$$
\end{proof}

\subsection{Expected  Log-Gaussian Likelihood}\label{appedx:elgl}
\begin{theorem}
	Let $q(\f) = \mathcal{N}(\f \mid \bm{\mu}, \Sigma)$, and the likelihood $p(\y \mid \f) = \mathcal{N}(\y \mid C\f,   \bm{Q})$, where $C$ is a constant matrix, then we have 
	$$
		\E_{q(\f)}\left[\log p(\y \mid \f)\right] = \log \mathcal{N}(\y \mid C\bm{\mu}, \bm{Q}) - \frac{1}{2}\operatorname{tr}(\bm{Q}^{-1} C \Sigma C^\top) 
	$$
\end{theorem}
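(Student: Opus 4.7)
The plan is to expand the log-Gaussian density explicitly, take the expectation term-by-term, and use the trace trick to evaluate the only nontrivial piece, namely the expected quadratic form. There is no real obstacle; the result is essentially a bookkeeping exercise once the second-moment identity under $q(\f)$ is applied correctly.

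First, I would write
\begin{equation*}
\log p(\y \mid \f) = -\tfrac{d_y}{2}\log(2\pi) - \tfrac{1}{2}\log|\bm{Q}| - \tfrac{1}{2}(\y - C\f)^\top \bm{Q}^{-1}(\y - C\f).
\end{equation*}
The first two terms are constants w.r.t.\ $\f$, so they pass through the expectation untouched. The work reduces to computing $\E_{q(\f)}[(\y - C\f)^\top \bm{Q}^{-1}(\y - C\f)]$.

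Next, I would apply the trace trick $a^\top M a = \operatorname{tr}(M a a^\top)$ and linearity of trace/expectation to obtain
\begin{equation*}
\E_{q(\f)}\!\left[(\y - C\f)^\top \bm{Q}^{-1}(\y - C\f)\right] = \operatorname{tr}\!\left(\bm{Q}^{-1}\, \E_{q(\f)}[(\y - C\f)(\y - C\f)^\top]\right).
\end{equation*}
Since $C\f$ under $q(\f)$ has mean $C\bm{\mu}$ and covariance $C\Sigma C^\top$, the standard second-moment decomposition gives
\begin{equation*}
\E_{q(\f)}[(\y - C\f)(\y - C\f)^\top] = (\y - C\bm{\mu})(\y - C\bm{\mu})^\top + C\Sigma C^\top,
\end{equation*}
so the expected quadratic form becomes $(\y - C\bm{\mu})^\top \bm{Q}^{-1}(\y - C\bm{\mu}) + \operatorname{tr}(\bm{Q}^{-1} C\Sigma C^\top)$ after collapsing the rank-one trace back into a scalar.

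Finally, I would recombine: the constants plus the first quadratic term reassemble exactly into $\log \mathcal{N}(\y \mid C\bm{\mu}, \bm{Q})$, leaving the residual $-\tfrac{1}{2}\operatorname{tr}(\bm{Q}^{-1} C\Sigma C^\top)$, which matches the claimed expression. The only subtlety worth flagging in writing is ensuring the sign and factor of $\tfrac{1}{2}$ carry through on the trace term; everything else is mechanical.
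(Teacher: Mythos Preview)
Your proposal is correct and follows essentially the same route as the paper: expand the log-Gaussian density, apply the trace trick to the quadratic form, use the second-moment identity under $q(\f)$, and recombine into $\log \mathcal{N}(\y \mid C\bm{\mu}, \bm{Q})$ minus the trace correction. The only cosmetic difference is that the paper first expands $(\y - C\f)^\top \bm{Q}^{-1}(\y - C\f)$ into three terms and then uses $\E[\f\f^\top] = \bm{\mu}\bm{\mu}^\top + \Sigma$, whereas you keep the residual $\y - C\f$ intact and apply the second-moment decomposition directly to it; the algebra is equivalent.
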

\begin{proof}
	Let $\y \in \mathbb{R}^n$, we have
	$$
	\begin{aligned}
		\log p(\y \mid \f) & =  \log \left[ (2\pi)^{-n/2} \det(\bm{Q})^{-1/2} \exp\left(-\frac{(\y - C\f)^\top \bm{Q}^{-1} (\y - C\f)}{2} \right)  \right]\\
		&  = -\frac{n}{2}\log 2\pi - \frac{1}{2}\log \det(\bm{Q}) -\frac{(\y - C\f)^\top \bm{Q}^{-1} (\y - C\f)}{2}\\
		& = -\frac{\operatorname{tr}(C^\top \bm{Q}^{-1} C \f\f^\top ) - 2 \y^\top \bm{Q}^{-1} C \f + \y^\top \bm{Q}^{-1}\y}{2}-\frac{n}{2}\log 2\pi - \frac{1}{2}\log \det(\bm{Q})
	\end{aligned}
	$$
	Therefore, we have the expected value
	$$
	\begin{aligned}
		\E_{q(\f)}\left[\log p(\y \mid \f)\right] &=  -\frac{\operatorname{tr}\left( C^\top \bm{Q}^{-1} C   \E_{q(\f)} \left[ \f\f^\top \right] \right) - 2 \y^\top \bm{Q}^{-1} C \bm{\mu} + \y^\top \bm{Q}^{-1}\y}{2}-\frac{n}{2}\log 2\pi - \frac{1}{2}\log \det(\bm{Q})\\
		& = -\frac{\operatorname{tr}\left[  C^\top \bm{Q}^{-1} C  (\bm{\mu}\bm{\mu}^\top + \Sigma)   \right] - 2 \y^\top \bm{Q}^{-1} C \bm{\mu} + \y^\top \bm{Q}^{-1}\y}{2}-\frac{n}{2}\log 2\pi - \frac{1}{2}\log \det(\bm{Q})\\
		& = -\frac{(\y-C\bm{\mu})^\top \bm{Q}^{-1}(\y-C\bm{\mu}) }{2} -\frac{n}{2}\log 2\pi - \frac{1}{2}\log \det(\bm{Q}) - \frac{1}{2}\operatorname{tr}(\bm{Q}^{-1} C \Sigma C^\top)\\
		& = \log \mathcal{N}(\y \mid C\bm{\mu}, \bm{Q}) - \frac{1}{2}\operatorname{tr}(\bm{Q}^{-1} C \Sigma C^\top)
	\end{aligned}
	$$
\end{proof}
\subsection{Marginal Variational Distribution in Sparse Variational GP \cite{titsias2009variational}}
Suppose that the variational distributions 
$$
q(\f, \u) = q(\u) \ p(\f \mid \u), \ \ q(\u) = \mathcal{N}(\u \mid \bm{\mu}, \Sigma)
$$
and $$
p(\f \mid \u) = \cN(\f \mid \bm{m}_f + \bm{K}_{fu}\bm{K}_{uu}^{-1}(\u - \bm{m}_u) ,  \  \bm{K}_{ff} - \bm{K}_{fu}\bm{K}_{uu}^{-1} \bm{K}_{fu}^\top).
$$
Using the following Lemma \ref{lemma:joint_gaussian}, we have 
$$
\begin{aligned}
	q(\f) &= \int_u q(\u) p(\f\mid \u)  \mathrm{d}  \u\\
	& = \cN\left(\f \mid \bm{m}_f + \bm{K}_{fu}\bm{K}_{uu}^{-1}(\bm{\mu}- \bm{m}_u), \ \bm{K}_{fu}\bm{K}_{uu}^{-1} \Sigma \bm{K}_{uu}^{-1} \bm{K}_{fu}^\top  + \bm{K}_{ff} - \bm{K}_{fu}\bm{K}_{uu}^{-1} \bm{K}_{fu}^\top  \right)\\
	& =  \cN\left[\f \mid \bm{m}_f + \bm{K}_{fu}\bm{K}_{uu}^{-1}(\bm{\mu}- \bm{m}_u), \ 
	\bm{K}_{ff}  - 
	\bm{K}_{fu}\bm{K}_{uu}^{-1} ( \bm{K}_{uu} - \Sigma) \bm{K}_{uu}^{-1} \bm{K}_{fu}^\top   \right]\\
\end{aligned}
$$
\begin{lemma}
	\label{lemma:joint_gaussian}
	(Joint distribution of Gaussian variables \cite{sarkka2013bayesian}) If random variables $\mathbf{x} \in \mathbb{R}^{n}$ and $\mathbf{y} \vert \x \in \mathbb{R}^{m}$ have the Gaussian distributions
	$$
	\begin{aligned}
		\mathbf{x} & \sim \mathrm{N}(\mathbf{m}, \mathbf{P}), \\
		\mathbf{y} \mid \mathbf{x} & \sim \mathrm{N}(\mathbf{H} \mathbf{x}+\mathbf{u}, \mathbf{R}),
	\end{aligned}
	$$
	then the joint distribution of $\mathbf{x}, \mathbf{y}$ and the marginal distribution of $\mathbf{y}$ are given as
	$$
	\begin{aligned}
		\left(\begin{array}{l}
			\mathbf{x} \\
			\mathbf{y}
		\end{array}\right) & \sim \mathrm{N}\left(\left(\begin{array}{c}
			\mathbf{m} \\
			\mathbf{H} \mathbf{m}+\mathbf{u}
		\end{array}\right),\left(\begin{array}{cc}
			\mathbf{P} & \mathbf{P} \mathbf{H}^{\top} \\
			\mathbf{H} \mathbf{P} & \mathbf{H} \mathbf{P} \mathbf{H}^{\top}+\mathbf{R}
		\end{array}\right)\right), \\
		\mathbf{y} & \sim \mathrm{N}\left(\mathbf{H} \mathbf{m}+\mathbf{u}, \mathbf{H} \mathbf{P} \mathbf{H}^{\top}+\mathbf{R}\right) .
	\end{aligned}
	$$ \qed
\end{lemma}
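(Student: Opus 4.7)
The plan is to prove the lemma by constructing $\mathbf{y}$ as an affine transformation of an augmented Gaussian vector, then reading off the joint distribution from linearity. First, I would introduce an auxiliary noise vector $\boldsymbol{\epsilon}\sim N(\mathbf{0},\mathbf{R})$ chosen to be statistically independent of $\mathbf{x}$. Because $\mathbf{x}\sim N(\mathbf{m},\mathbf{P})$ and $\boldsymbol{\epsilon}$ is Gaussian and independent of $\mathbf{x}$, the concatenation $(\mathbf{x}^\top,\boldsymbol{\epsilon}^\top)^\top$ is jointly Gaussian with block-diagonal covariance $\operatorname{diag}(\mathbf{P},\mathbf{R})$. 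Defining $\mathbf{y}\triangleq\mathbf{H}\mathbf{x}+\mathbf{u}+\boldsymbol{\epsilon}$, one checks that $\mathbf{y}\mid\mathbf{x}\sim N(\mathbf{H}\mathbf{x}+\mathbf{u},\mathbf{R})$, which matches the hypothesis; since the conditional distribution of $\mathbf{y}$ given $\mathbf{x}$ uniquely determines the joint law together with the marginal of $\mathbf{x}$, this constructive representation is without loss of generality.

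Next, I would observe that $(\mathbf{x}^\top,\mathbf{y}^\top)^\top$ is the image of $(\mathbf{x}^\top,\boldsymbol{\epsilon}^\top)^\top$ under the affine map
\begin{equation*}
\begin{pmatrix}\mathbf{x}\\ \mathbf{y}\end{pmatrix}=\begin{pmatrix}\mathbf{I} & \mathbf{0}\\ \mathbf{H} & \mathbf{I}\end{pmatrix}\begin{pmatrix}\mathbf{x}\\ \boldsymbol{\epsilon}\end{pmatrix}+\begin{pmatrix}\mathbf{0}\\ \mathbf{u}\end{pmatrix}.
\end{equation*}
Since affine transformations of Gaussian vectors are Gaussian, $(\mathbf{x}^\top,\mathbf{y}^\top)^\top$ is jointly Gaussian. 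It then remains to compute the mean and covariance block-by-block: $\mathbb{E}[\mathbf{x}]=\mathbf{m}$ and $\mathbb{E}[\mathbf{y}]=\mathbf{H}\mathbf{m}+\mathbf{u}$ follow from linearity of expectation; $\operatorname{Cov}(\mathbf{x})=\mathbf{P}$ is immediate; $\operatorname{Cov}(\mathbf{x},\mathbf{y})=\operatorname{Cov}(\mathbf{x},\mathbf{H}\mathbf{x}+\mathbf{u}+\boldsymbol{\epsilon})=\mathbf{P}\mathbf{H}^\top$ using independence of $\mathbf{x}$ and $\boldsymbol{\epsilon}$; and $\operatorname{Cov}(\mathbf{y})=\mathbf{H}\mathbf{P}\mathbf{H}^\top+\mathbf{R}$ by the same independence.

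Finally, the marginal $\mathbf{y}\sim N(\mathbf{H}\mathbf{m}+\mathbf{u},\mathbf{H}\mathbf{P}\mathbf{H}^\top+\mathbf{R})$ follows because marginals of a jointly Gaussian vector are Gaussian with the corresponding sub-block of the mean and covariance. There is essentially no hard step here; the only subtlety worth stating explicitly is that the independence of $\boldsymbol{\epsilon}$ and $\mathbf{x}$ is what guarantees the joint is Gaussian (a conditional Gaussian structure alone does not suffice in general, but the linearity of the conditional mean in $\mathbf{x}$ together with a constant conditional covariance does). An alternative route would multiply the densities $p(\mathbf{x})p(\mathbf{y}\mid\mathbf{x})$ and complete the square to identify the joint exponent as a quadratic form, then invert the resulting precision matrix in block form to recover the stated covariance; I would mention this route only if a density-level derivation were preferred, but the affine-transformation argument is shorter and avoids block matrix inversion.
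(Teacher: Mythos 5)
Your proof is correct and complete. The paper itself offers no proof of this lemma --- it is stated as a standard fact and deferred to the cited reference \cite{sarkka2013bayesian} --- so there is no in-paper argument to compare against. Your affine-transformation construction (writing $\mathbf{y}=\mathbf{H}\mathbf{x}+\mathbf{u}+\boldsymbol{\epsilon}$ with $\boldsymbol{\epsilon}\sim\mathrm{N}(\mathbf{0},\mathbf{R})$ independent of $\mathbf{x}$, noting that $p(\mathbf{x})p(\mathbf{y}\mid\mathbf{x})$ pins down the joint law so the representation is without loss of generality, and then reading off the blocks of the mean and covariance) is the standard and cleanest route; your explicit remark that the Gaussianity of the joint rests on the conditional mean being affine in $\mathbf{x}$ with a state-independent conditional covariance, rather than on conditional Gaussianity alone, is exactly the right subtlety to flag.
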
 

\subsection{KL Divergence Between Two Multivariate Gaussian Distributions} \label{appedx:KL}
For the two Gaussian distributions $P = \cN(\x \mid \bm{\mu}_1, \Sigma_1)$ and $Q = \cN(\x \mid  \bm{\mu}_2, \Sigma_2)$, their KL divergence is given by
$$
\begin{aligned}
	\mathrm{KL}\left(P|| Q\right)= & \int P(\x) \log \frac{P(\x)}{Q(\x)} \mathrm{d} \x  \\
	= & \frac{1}{2}\left[\left(\bm{\mu}_{2}-\bm{\mu}_{1}\right)^{\top} \Sigma_{2}^{-1}\left(\bm{\mu}_{2}-\bm{\mu}_{1}\right)+\operatorname{tr}\left(\Sigma_{2}^{-1} \Sigma_{1}\right)-\ln \frac{\left|\Sigma_{1}\right|}{\left|\Sigma_{2}\right|}-n\right].
\end{aligned}
$$
Proof can be referred to, e.g.,  \url{https://stanford.edu/~jduchi/projects/general_notes.pdf}

\end{document}